\documentclass{article} % For LaTeX2e
\usepackage{iclr2027_conference,times}
\iclrfinalcopy
\usepackage{amsmath,amsfonts,bm}

\def\eqref#1{equation~\ref{#1}}
\def\1{\bm{1}}

\DeclareMathAlphabet{\mathsfit}{\encodingdefault}{\sfdefault}{m}{sl}
\SetMathAlphabet{\mathsfit}{bold}{\encodingdefault}{\sfdefault}{bx}{n}

\PassOptionsToPackage{table}{xcolor} 
\usepackage[T1]{fontenc}

\usepackage{amsmath,amssymb,amsthm}
\usepackage{booktabs}
\usepackage[table]{xcolor}   % [table] enables \cellcolor in tabulars
\usepackage{multirow}
\usepackage{array}
\usepackage{graphicx}
\usepackage{algorithm}
\usepackage{algpseudocode}
\usepackage{pifont}
\usepackage{url}
\usepackage{hyperref}

\newtheorem{proposition}{Proposition}
\newtheorem{corollary}{Corollary}
\newtheorem{lemma}{Lemma}
\newtheorem{assumption}{Assumption}

\newcommand{\Fcal}{\mathcal{F}}

\newcommand{\ind}{\mathbf{1}}
\usepackage{enumitem}   % dotted leaders in the appendix index
\newcommand{\betaeff}{\beta_{\mathrm{eff}}}

\title{Think More or Read More? A Token-Budget Theory of Reasoning with Shared Context}

\author{Manh Nguyen\thanks{Corresponding Author}\hspace{3em}Sunil Gupta\hspace{3em}Hung Le\\
Applied Artificial Intelligence Initiative, Deakin University, Australia\\
\texttt{\{manh.nguyen, sunil.gupta, thai.le\}@deakin.edu.au}
}

\begin{document}
\maketitle

\begin{abstract}

Reasoning systems increasingly improve answers by incorporating additional context into the model's input, from multi-agent debate, where agents read their peers' responses, to self-refinement, where a model revisits its own drafts. In these systems, communication consumes context as one agent's output becomes another's input.
We study the regime where token reading and generation draw on one fixed token window, so every token spent reading is a token that cannot be spent reasoning. 
We develop a \textbf{token-budget theory} that formalizes this trade-off, modeling accuracy as a function of the reasoning budget remaining after reading and aggregating votes with a Condorcet jury model that permits correlated errors.
For multi-agent debate, the theory gives a closed-form prediction for the optimal number of agents. This quantity is estimated from single-agent measurements, allowing the optimal group size to be determined without running debate, and the same accounting extends to a heterogeneous set of different models. 
We empirically verify our theory and show that, at the predicted group size, our method outperforms strong baselines under the tightest calibration spend, where debate-calibrated rules can afford only one or two questions.
Self-refinement obeys the same accounting, with its own draft in place of a peer message. 
The theory further predicts that communication pays only when the gain in the accuracy ceiling exceeds the reasoning room spent on reading, with this condition tightening as the window shrinks. 
Across five open models and four reasoning tasks, self-consistency matches or beats deployed debate at $93$\% of $980$ matched window-and-group-size settings, with the gap being largest under tight windows and nearly disappearing at a loose one, consistent with the theory. 
Under a shared window, how much room a model has left to \textit{think matters more than what it reads}.

\end{abstract}

\section{Introduction}
\label{sec:intro}
Large language models (LLMs) are increasingly used in reasoning pipelines that reuse model-generated text as input to subsequent computation. Multi-agent debate exchanges answers among agents~\citep{du2023debate,liang2024encouraging}, while self-refinement feeds a model its own previous drafts~\citep{madaan2023selfrefine}. Such reuse can improve reasoning by providing additional information, but under a fixed token budget, it also raises a fundamental question: not \emph{whether} added feedback helps, but \emph{how much} it is worth in terms of the reasoning budget it consumes.

\begin{figure}[t]
\centering
\includegraphics[width=\linewidth]{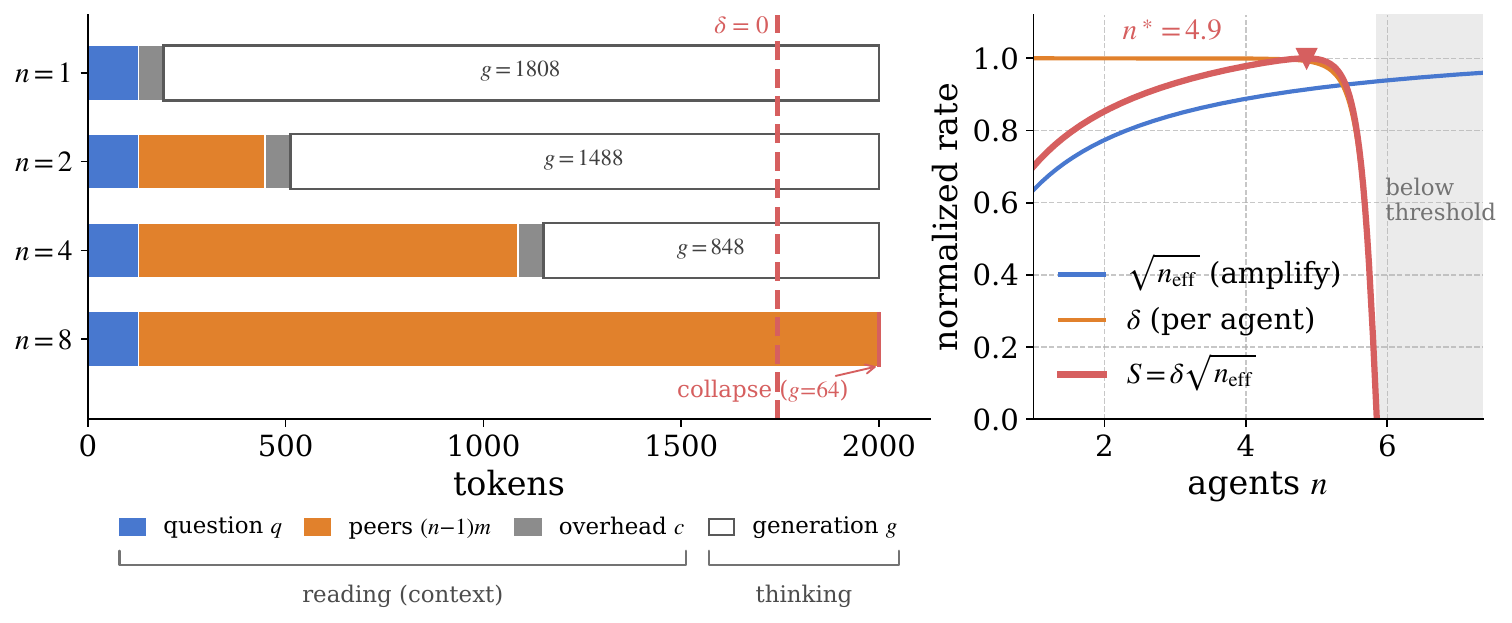}
\caption{Overview of our method (Qwen2.5-7B, GSM8K). One window $W$ pays for both
reading (question plus peer messages) and thinking (generation $g$), and each extra agent adds $m$ tokens of reading.
The per-agent edge $\delta$ is the gap between the probability of the correct answer and that of the strongest wrong answer: larger $\delta$ means a stronger correct-answer margin, while $\delta\leq0$ means that a wrong answer is at least as likely. \textbf{Left}: The dashed line marks the smallest thinking room at which $\delta>0$; a group that reads past it (here $n{=}8$) leaves too few tokens for reasoning and its accuracy collapses.
\textbf{Right}: The effective sample size $n_{\mathrm{eff}}$ measures the independent-vote equivalent of $n$ correlated agents, while $\delta$ falls as thinking room shrinks. Their product $S$, the synergy index, therefore captures the trade-off between per-agent accuracy and voting amplification and peaks at $n^\ast=4.9$. The shaded band marks $\delta{\leq}0$, where votes amplify a wrong majority and extra agents cannot help.}
\label{fig:overview}
\end{figure}

Majority-voting analyses predict that adding agents amplifies a correct majority until correlated errors cap the gain~\citep{ladha1992condorcet,liu2026phase}.
% on multi-hop reasoning tasks, 
However, on multi-hop reasoning tasks, a single long chain-of-thought~\citep{wei2022cot} often matches or outperforms a multi-agent system under an equal thinking budget~\citep{tran2026single}. Debate also does not consistently outperform self-consistency on an accuracy-per-token-cost frontier~\citep{smit2024mad, bertalanic2026consensus}, echoing the broader trade-off between token expenditure and accuracy in sampling and search~\citep{snell2024scaling, brown2024monkeys}.
The token budget deserves separate attention because it corresponds directly to the cost incurred in practice. At current API prices, inference can range from cents to tens of dollars per million tokens, depending on the model and provider~\citep{openai_pricing_2026, google_gemini_pricing_2026, anthropic_pricing_2026}. For multi-agent debate, the cost grows further with the number of agents $n$ (group size): adding agents increases not only the amount of reasoning generated, but also the amount of peer text that must be read and processed.
Existing work attacks this cost either by compressing agent communication or by fixing the reasoning budget, but treats communication and reasoning as separate costs, despite both consuming the same token budget ($\S$\ref{sec:related}).
Once the token budget is viewed this way, a trade-off appears: with $n$ small, more of the budget remains for thinking, while with $n$ large, most of it is spent on reading peer messages, leaving little room to reason.
The question is then how to estimate the best group size $n^\ast$ for multi-agent debate under this trade-off.
Existing approaches measure $n$ by sweeping or adaptively searching the design space, for instance, successive halving~\citep{jamieson2016halving}. Both require debate runs to identify the operating point, without explaining \emph{why} one group size is optimal or \emph{when} adding more agents stops improving accuracy.

We therefore introduce a \textbf{token-budget theory} that answers both questions in closed form.
Consider homogeneous debate in which each agent, working under a window of $W$ tokens, must first pay for what it reads, namely the question and its peers' messages, together $H$ tokens, plus a small fixed overhead $c$. The remainder, $g=W-H-c$, is the room it has left to reason.
Instead of assigning each agent a fixed chance of voting correctly, we introduce its \emph{ability}, which is the accuracy it reaches when its reasoning is capped at $g$ tokens. 
Ability grows with thinking room, too little room can prevent an agent from reaching an answer, while beyond a sufficient room, additional tokens yield little further gain.
We let it grow with $g$ (Eq.~\ref{eq:beff-gen}) via the fittable thinking-gain function $\phi$, whose functional form and parameters can be calibrated from held-out calibration data. We then combine the resulting votes with a Condorcet jury model that allows correlated errors, developing the per-answer margin $\delta$ (Eq.~\ref{eq:delta}), the group accuracy $A(n)$, the effective sample size $n_{\mathrm{eff}}$ and the synergy index $S(n)=\delta\sqrt{n_{\mathrm{eff}}}$ (Eq.~\ref{eq:synergy}) in Section~\ref{sec:method}.
Figure~\ref{fig:overview} provides an overview of our method.

Our model yields a closed-form prediction of an \textbf{interior optimal group size $n^\ast$} that \emph{moves with the budget}.
It is read off single-agent measurements: we measure the ability curve with a cap sweep and estimate one error-correlation parameter at the \emph{initial round}, then predict group size without running the debate. 
With a sufficiently large budget, our predictions approach the classical correlation-limited ensemble, where the main benefit comes from aggregating independent evidence.
The practical advantage of our approach is therefore greatest in the tight-budget regime, where choosing the group size correctly matters most and our method achieves the best accuracy-budget trade-off. At the predicted size, it incurs only a $1.4\%$ accuracy gap from a per-cell oracle, using between $26$ and $79$ calibration questions depending on the cell, where the debate-calibrated rules use only one or two at matched spend. The accounting extends to a heterogeneous model set, where the rule chooses which models to include as well as how many.

Our framework also clarifies the cost of communication relative to self-consistency~\citep{wang2023selfconsistency}. Self-consistency leaves the full window available for reasoning, whereas debate spends part of that budget processing peer responses, so communication must provide sufficient informational benefit to justify this cost. In our experiments, self-consistency matches or beats deployed debate at $93\%$ of the $980$ matched operating points, while a paired test finds debate significantly ahead at only $0.5\%$. The gap is largest under tight budgets and nearly disappears at a loose budget, where communication consumes a smaller share of the available reasoning budget.
Self-refinement, which rereads only its own last draft without voting, pays the same fixed reading cost every round with none of the amplification that voting provides, so its accuracy collapses faster than debate's as rounds accumulate.

Our contributions are as follows.
\textbf{First}, we introduce a \textbf{token-budget theory} that predicts the \textbf{optimal debate size} $n^\ast$ in closed form and extends to heterogeneous model sets, outperforming strong baselines under the tightest calibration spend without running a debate.
\textbf{Second}, the theory reveals when \textbf{debate's voting benefits outweigh its communication cost} relative to self-consistency under the same token budget.
\textbf{Third}, ablations show that \textbf{reading cost} drives collapse and the interior optimum, with robustness tests covering longer reasoning and practical token pricing.

\section{Related Work}
\label{sec:related}

\paragraph{Multi-agent debate in LLMs.} LLM instances critique and revise each other's answers to improve reasoning relative to a single pass~\citep{du2023debate, liang2024encouraging}, and voting over more agents can improve accuracy in practice~\citep{wang2023selfconsistency,li2024moreagents}. Each additional agent, however, also introduces peer messages that other agents must process, increasing the communication burden of the group. Existing work addresses this burden by reducing communication through sparse speaker selection, group partitioning, or retaining only conflicting messages~\citep{li2024sparse, liu2024groupdebate, zeng2025smad, nguyen2026dar}. A complementary line studies whether debate is worthwhile under a matched sampling or thinking budget, with several works finding that self-correction or a single long reasoning chain can match or outperform debate~\citep{huang2024selfcorrect, tran2026single, bertalanic2026consensus}.
A third line promotes diversity through heterogeneous model sets, arguing that homogeneous evaluation overvalues debate~\citep{zhang2025overvaluing}, attributing vanilla debate's weakness to lost candidate diversity~\citep{zhu2026demystifying}, and showing that mixed panels revise less harmfully than homogeneous ones~\citep{nilayam2026heterogeneous}. Beyond debate, this idea appears in mixtures of agents~\citep{wang2024moa}, model routing~\citep{ong2024routellm}, and cost-aware cascades~\citep{chen2023frugalgpt}. 
% These approaches reduce communication, promote diversity, or allocate computation, but do not explicitly model the reading and reasoning costs as competing uses of a shared token budget. We instead model these two costs jointly, derive the resulting optimal group size in closed form, and extend the analysis to heterogeneous model sets.

\paragraph{Estimating the operating point.} Existing approaches search the design space, e.g., via successive halving~\citep{jamieson2016halving}, or learn the operating point from data, but both require debate runs for calibration.
\citet{chen2024morecalls} make voting accuracy non-monotone in call count through query-difficulty diversity, but their optimum is refit separately at each budget, whereas ours transfers from a single-agent cap sweep. 
\citet{liu2026phase} give the closest theoretical treatment, deriving an amplification-collapse threshold in closed form from a power-law ability curve. Their ability curve is calibrated in simulation and remains concave, so it cannot capture the near-zero accuracy below the minimum length needed for a solution, the regime where our collapse occurs. Their accounting also holds the per-agent generation allowance fixed, so adding an agent costs no reasoning room, making their collapse a consequence of correlation alone rather than reading. We recover their setting as the $m=0$ case ($\S$~\ref{sec:ablations}).

\paragraph{Token Budgeting in LLMs.} A separate line studies how to allocate a fixed token budget across samples or reasoning length for a single agent, including token budgets for chain-of-thought, budget forcing at decoding time, budget-aligned tree search, and the non-monotone effect of reasoning length on task success~\citep{han2025tokenbudget,muennighoff2025s1,bgmcts2026,qi2026brief}. Controlled comparisons at matched sampling or search budgets find that voting, rather than talking, carries the gain~\citep{choi2025debatevote, kaesberg2025voting, wang2024bounds}, while \citet{wunderlich2026pareto} report the opposite ordering under per-agent rather than shared contexts. 
These methods budget within reasoning or sampling, leaving peer text outside the account.
% These methods allocate the budget within reasoning or sampling, but do not treat reading peer responses as a competing use of the same tokens. We instead formulate reading and reasoning jointly for debate and self-refinement, deriving the operating point from a single-agent cap sweep rather than recalibrating it at each budget.

\section{Method}
\label{sec:method}

We formalize the token-budget problem behind debate, define per-agent ability as a function of the room left to think, and turn it into a group-accuracy prediction. We call the resulting predictor
\textbf{TB-MAD}, for token-budget multi-agent debate: it reads a single-agent probe sweep and returns the group size, with no debate run.

\subsection{Problem Formulation}
\label{sec:conserved}

Consider $n$ LLM agents that debate a question from a task $Q$ over several rounds under one token budget $W$, the \emph{window}. Reading and thinking draw on the same $W$, where tokens spent reading peer messages are tokens no longer available for the agent's own chain of thought, and vice versa. Each peer message costs about $m$ tokens on average to read once. Under last-round visibility, which we use throughout, agent $i$ retains the question ($q$ tokens), a fixed overhead ($c$), and $n-1$ peer messages, so its retained history and remaining room to think are
\begin{align}
\label{eq:budget}
H \;=\; q + (n-1)\,m, \qquad\qquad g \;=\; W - H - c .
\end{align}
Retaining the full transcript would increase $H$ at every $n$ and can only decrease $g$, so the penalty we measure is a lower bound.
Self-refinement uses a different accounting. The group size is one, so there are no peer
messages, but the agent retains its own last draft. Its history is $H = q + m$ and its room is $g = W - q - m - c$. In contrast, Eq.~\ref{eq:budget} with $n=1$ describes a debate agent with no peer messages, corresponding to self-consistency.

Adding an agent raises $H$ and lowers $g$ by $m$ tokens, which cuts both ways. When $W$ is generous, $g$ stays past the point where extra tokens stop buying accuracy and a larger group is close to free.
When $W$ is tight, the agent pushes $g$ below the room a chain needs to reach an answer, and the lost generation budget outweighs the extra vote. We call this \emph{collapse}: under a shared budget, more agents can reduce accuracy. Given $W$, we want the group size that maximizes task accuracy,
\begin{align}
\label{eq:problem}
\max_{n} \; A_Q(n, W) \quad \text{s.t.} \quad g(n) = W - H(n) - c \ge 0 ,
\end{align}
where $A_Q(n,W)$ is the debate group's accuracy at size $n$ and window $W$. Since a
debate agent's vote depends on how much of the window it got to reason in, solving
Eq.~\ref{eq:problem} needs a model of per-agent ability as a function of $g$, which we give next.

\subsection{Per-Agent Ability and the Thinking Gain}
\label{sec:ability}

Rather than assuming fixed voting accuracy, we let that chance depend on what the window leaves an agent to think. The \emph{per-agent ability} $\betaeff(g)$ is the accuracy of a single agent whose generation is capped at $g$ tokens, measured directly with a cap sweep and no debate. We drop the $Q$ subscript below, since every quantity is on the same task.
\begin{align}
\label{eq:beff-gen}
\betaeff(g) \;=\; p_{\max}\,\underbrace{\phi(g)}_{\text{thinking gain}},
\qquad \phi(g) \;=\; \betaeff(g)/p_{\max} \;\in\; [0,1],
\end{align}
where $p_{\max}=\lim_{g\to\infty}\betaeff(g)$ is the ceiling ability with unlimited room, so the
\emph{thinking gain} $\phi$ is the fraction of its own ceiling an agent reaches at cap $g$.
\begin{assumption}[Admissible gain]
\label{assump:admissible}
$\phi:[0,\infty)\to[0,1]$ is continuously differentiable, strictly increasing ($\phi'>0$) on $(0,\infty)$, and saturating ($\phi(\infty){=}1$), and satisfies $\phi(0){=}0$.
\end{assumption}
\noindent More room cannot hurt, since a longer allowance can always reproduce a shorter chain, and it eventually stops helping. This is consistent with the diminishing return in test-time-compute scaling literature~\citep{snell2024scaling}.

\paragraph{Choice of $\phi$.} The cap sweeps are \emph{sigmoidal}, near zero below the length a solution needs, rising sharply around it, then saturating. Held-out error selects the sigmoid \emph{family} over the concave power law prior work adopts~\citep{liu2026phase}. Isotonic regression extrapolates competitively out of range but is far worse under leave-one-out. Within the sigmoid family, the \textsc{Hill} form has the lowest total error (Appendix Table~\ref{tab:forms_mean}). We adopt it,
\begin{align}
\label{eq:beff}
\betaeff(g) \;=\; p_{\max}\,\frac{g^{\kappa}}{g^{\kappa}+g_0^{\kappa}},
\end{align}
with scale $g_0$ and shape $\kappa>0$, fitted on a separate calibration set. Under this form, $\phi(g)$ could be interpreted as the probability that a solution fits within a budget of $g$ tokens, making $g_0$ a median solution length (Appendix~\ref{app:our-implement}).

\begin{table}[t]
\centering
\caption{Candidate forms for the thinking gain $\phi$, adopted form in bold. \emph{Right:} all five fitted to one measured cap sweep (Qwen2.5-7B/GSM8K).}
\label{tab:forms}
\begin{minipage}[c]{0.55\linewidth}
\setlength{\tabcolsep}{4pt}
\renewcommand{\arraystretch}{1.3}
\small
\begin{tabular}{@{}l l@{}}
\toprule
Form & $\phi(g)$ \\
\midrule
\textbf{Hill}~\citep{hill1910possible} & $\dfrac{g^\kappa}{g^\kappa+g_0^\kappa}$ \\
Burr XII~\citep{burr1942} & $1-\left(1+\left(\frac{g}{g_0}\right)^\kappa\right)^{-\lambda}$ \\
Weibull~\citep{weibull1951statistical} & $1-e^{-\left(\frac{g}{g_0}\right)^\kappa}$ \\
Power law~\citep{pareto1964cours} & $1-(1+\frac{g}{g_0})^{-\kappa}$ \\
Monotone~\citep{isotonicregression} & isotonic step \\
\bottomrule
\end{tabular}
\end{minipage}\hfill
\begin{minipage}[c]{0.44\linewidth}
\centering
\includegraphics[width=\linewidth]{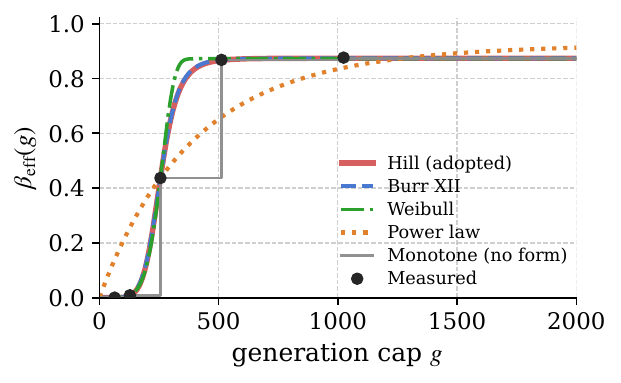}
\end{minipage}
\end{table}

\subsection{Group Accuracy and the Synergy Index}
\label{sec:synergy}
To turn per-agent ability into group accuracy, we use a correlation-aware Condorcet model~\citep{condorcet1785essai,boland1989majority,ladha1992condorcet} that captures how additional agents help until correlated errors diminish the gain. The following three simplifications make it tractable.
We read $\betaeff$ as a per-question probability where Eq.~\ref{eq:beff} defines a task average~\citep{chen2024morecalls}, and treat the agents as \emph{homogeneous} so all share $\betaeff(g(n))$. Moreover, we assume their errors are \emph{equicorrelated}, with every pair sharing a $\rho$ independent of $n$, estimated at round $0$ before any agent reads a peer and so measurable without running a debate (Appendix~\ref{app:rhoest}). $\rho$ plays the role that diversity measures play in the ensemble literature~\citep{KroghVedelsby1994, KunchevaWhitaker2003}, capturing a similar trade-off between individual accuracy and error independence.
% Using the conservative initial round (round-$0$) value instead of the true post-debate one costs under a tenth of an agent in predicted best size (Appendix~\ref{app:round1}).

For a given question, let the correct answer's margin over the strongest wrong answer be
\begin{align}
\label{eq:delta-def}
\delta(n) = \betaeff\!\bigl(g(n)\bigr) - q_{\max}(n),
%THINK MORE OR READ MORE? A TOKEN-BUDGET THEORY OF REASONING WITH SHARED CONTEXT
\qquad q_{\max} = (1-\betaeff)\,c_w,
\end{align}
where $q_{\max}$ is the highest-probability wrong answer and $c_w$ is the \emph{error concentration}, the probability of producing the strongest wrong answer conditional on being wrong.
On a $K$-way task, $c_w$ is bounded between $1/(K-1)$ and $1$, with the lower bound attained when errors are spread uniformly over the wrong options and the upper bound when every error picks the same wrong option.
% On a $K$-way choice this is one option's share of the wrong-answer mass while on open-ended tasks (e.g. GSM8K), $c_w$ is measured from the round-$0$ answer distribution as the fraction of a question's wrong-answer mass.
We hold $c_w$ fixed across $n$ and $g$, so all budget dependence enters through $\betaeff$. Thus
\begin{align}
\label{eq:delta}
\delta = (1+c_w)\betaeff-c_w,
\end{align}
which is positive exactly when $\betaeff>\tau\equiv c_w/(1+c_w)$.
Across answers, $c_w$ is analogous to $\rho$ across agents: $\rho$ captures dependence in whether agents are correct, while $c_w$ captures dependence in which wrong answer they choose when they are incorrect. Both are measured at round $0$ and held fixed and debiased for small samples (Appendix~\ref{app:cw}).

Score agent $j$ by $X_j=+1$ when it answers correctly, $X_j=-1$ when it gives the strongest wrong answer, and $X_j=0$ otherwise, so the group prefers the correct answer exactly when $\bar X_n>0$, with $\mathbb{E}[X_j]=\delta$ by homogeneity. Write $\sigma^2=\operatorname{Var}(X_j)$ for this per-agent variance, common to all $j$ by homogeneity (Appendix~\ref{app:sigma} gives its closed form in
$\betaeff$ and $c_w$). Equicorrelation with $\rho\in(0,1)$ gives $\operatorname{Cov}(X_i,X_j)=\rho\sigma^2$ and hence
\begin{align}
\label{eq:neff}
\operatorname{Var}(\bar X_n)
&= \frac{\sigma^2\bigl(1+\rho(n-1)\bigr)}{n}
= \frac{\sigma^2}{n_{\mathrm{eff}}(n)}, \\
n_{\mathrm{eff}}(n)
&= \frac{n}{1+\rho(n-1)}
\xrightarrow[n\to\infty]{}\frac{1}{\rho}.
\end{align}
Thus $n$ correlated agents provide the effective sample size $n_{\mathrm{eff}}$, which is bounded by $1/\rho$ and yields diminishing returns~\citep{kish1965survey}. Approximating $\bar X_n$ by a normal with the same moments,
\begin{align}
\label{eq:accuracy}
A(n) = \Pr[\bar X_n>0] \simeq \Phi\!\left(\frac{\delta(n)\sqrt{n_{\mathrm{eff}}(n)}}{\sigma}\right),
\end{align}
where $\Phi$ is the standard normal CDF and $\delta(n)$ carries $g(n)$ from Eq.~\ref{eq:budget}.
Eq.~\ref{eq:accuracy} is an \emph{aggregation head}: given a per-agent ability, it maps that ability to the accuracy of the resulting vote. For debate, feeding it the pre-revision ability $\betaeff(g(n))$ gives the initial-round vote accuracy $A(n)$. After revision, the same head applies to the agents' updated ability, so revision changes the ability entering the head instead of the aggregation mechanism itself. We therefore use the pre-revision head to determine the optimal operating point $n^\ast$, while treating revision as a separate accuracy lift whose effect is evaluated on the deployed debate.

Because $\Phi$ is monotone and $\sigma>0$, we can maximize the signed \emph{synergy index}
\begin{align}
\label{eq:synergy}
\arg\max_n A(n)
= \arg\max_n S(n),
\qquad
S(n) = \delta(n)\sqrt{n_{\mathrm{eff}}(n)}.
\end{align}
$\delta(n)$ is the per-agent edge the budget supplies; $\sqrt{n_{\mathrm{eff}}(n)}$ the amplification voting supplies. Keeping the sign of $\delta$ matters for the objective's behaviour just outside $\Fcal$. Votes amplify a negative edge as readily as a positive one, so $S$ falls once $\delta$ turns negative and the rule returns $n=1$, where an unsigned measure would keep climbing. Since $n^\ast$ is read off $S(n)$ rather than accuracy itself, it does not depend on $\Phi$ being Gaussian: any strictly increasing map from margin to accuracy yields the maximizer. As $n$ grows the amplification rises while $\delta(n)$ falls, because each agent costs $m$ tokens of room and so lowers $\betaeff$, producing the interior optimum.
Eq.~\ref{eq:synergy} treats $\sigma$ as a constant fit once per cell; once the vote score is fixed, $\sigma$ is instead a function of $n$ through $\betaeff(n)$, and substituting that derived $\sigma(n)$ moves $\hat n^\ast$ at only $7$ of $140$ swept operating points (Appendix~\ref{app:sigma}).
We summarize the procedure in Appendix Alg.~\ref{alg:predict}.

\paragraph{Self-refinement as the degenerate case.} Self-refinement reads its own previous draft, corresponding to $n{=}1$ while still paying the reading cost. Hence $g_{\mathrm{ref}}(W)=W-q-m-c$, reducing $\delta$ through the generation-floor effect, while $\sqrt{n_{\mathrm{eff}}}=1$ provides no amplification. Repeating rounds therefore does not improve $S$ unless revision increases the ability of subsequent generations.

\subsection{Theory: the collapse regime}
\label{sec:theory}

We now analyze what happens when ability is tied to the window, using a single model in which reading consumes reasoning budget and yields the following results.
% : (1) the optimal group size is unique and interior, grows sub-linearly with the window, and eventually stops binding (Prop.~\ref{prop:single-peak}, Appendix Cor.~\ref{cor:nstar}); (2) at matched aggregation, reading peers cannot beat the initial-round vote unless peer text raises the accuracy ceiling $p_{\max}$ (Prop.~\ref{prop:illusion}); and (3) self-refine is the $n{=}1$
% case, so a single untouched agent is the ceiling unless re-reading raises $p_{\max}$.

\paragraph{The interior best size.}
\label{sec:interior}

Throughout, $n$ is continuous, $g(n)=W-q-c-(n-1)m$, and we restrict to the feasible region $\Fcal=\{n:\delta(n)\ge0\}$. The constraint is set by $\delta$, not $g$: $\delta(n)$ reaches zero strictly before $g(n)$ does, so $\Fcal={n:\delta(n)\ge0}\subset{n:g(n)\ge0}$, while $\phi(g)\to0$ continuously as $g\to0$.

\begin{proposition}[Single-peaked accuracy]
\label{prop:single-peak}
Let $\phi$ satisfy Assumption~\ref{assump:admissible}, with $m>0$, $p_{\max}\in(0,1]$, $\rho\in[0,1)$, $\delta(1)>0$, and suppose some larger group helps ($S(n)>S(1)$ for some $n\in\mathrm{int}(\Fcal)$). Then $\delta$ is strictly decreasing on $\Fcal=[1,n_0]$, where $n_0>1$ is the unique group size at which the average agent's edge over the strongest distractor vanishes ($\delta(n_0){=}0$), and $S(n)^2$ attains an interior maximum $n^\ast\in(1,n_0)$.
\end{proposition}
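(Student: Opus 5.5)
The plan is to reduce everything to one-variable calculus in $n$ and then run a compactness-plus-boundary argument. I would first make the composition explicit. Since $g(n)=W-q-c-(n-1)m$ is affine with $g'(n)=-m<0$, and $\betaeff=p_{\max}\phi$ with $\phi'>0$ on $(0,\infty)$ (Assumption~\ref{assump:admissible}), the chain rule on Eq.~\ref{eq:delta} gives
\[
\delta'(n)=-(1+c_w)\,p_{\max}\,m\,\phi'(g(n))<0
\]
wherever $g(n)>0$. The factor $1+c_w$ is positive because $c_w\ge 0$. So $\delta$ is continuous and strictly decreasing on $\{n\ge1: g(n)\ge0\}$. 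Strictness survives at the single endpoint $g=0$ by continuity.

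Next I would locate $n_0$. At $g=0$ we have $\phi(0)=0$, so $\delta=-c_w$. This value is $<0$ when $c_w>0$, which holds on any task with $K\ge2$ since $c_w\ge 1/(K-1)$. Since $\delta(1)>0$, the intermediate value theorem and strict monotonicity give a unique $n_0$ with $\delta(n_0)=0$, and $n_0>1$. This $n_0$ lies strictly before the point where $g$ vanishes. Hence $\Fcal\cap[1,\infty)=[1,n_0]$, and $\delta$ is strictly decreasing there, which is the first claim.

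For the interior maximum, note that $S^2=\delta^2 n_{\mathrm{eff}}$ is continuous on the compact interval $[1,n_0]$, so it attains a maximum at some $n^\ast$. It remains to rule out both endpoints. At $n_0$ we have $S(n_0)=0$, while $S(1)=\delta(1)>0$, so $n_0$ is not a maximizer. At $n=1$, the hypothesis supplies $\tilde n\in\mathrm{int}(\Fcal)$ with $S(\tilde n)>S(1)\ge0$. Since both values are nonnegative on $\Fcal$, this gives $S(\tilde n)^2>S(1)^2$, so $1$ is not a maximizer either. Therefore $n^\ast\in(1,n_0)$, and it satisfies the first-order condition
\[
2\delta'/\delta+n_{\mathrm{eff}}'/n_{\mathrm{eff}}=0,
\]
where $n_{\mathrm{eff}}'=(1-\rho)/(1+\rho(n-1))^2>0$ for $\rho<1$.

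The main obstacle is that the hypothesis $S(\tilde n)>S(1)$ does most of the work. Without it, an interior maximum genuinely can fail, for example when $\phi'$ is large at $g(1)$ relative to the amplification $n_{\mathrm{eff}}'(1)=1-\rho$. I would therefore rely on the assumption rather than try to derive it. A related worry is the edge case $c_w=0$, where $\delta$ hits zero exactly at $g=0$. I would handle this separately: then $n_0$ is the point where $g=0$, and the endpoint argument is unchanged. If the statement is read as also claiming a \emph{unique} peak (``single-peaked''), more is needed. One would have to show that the log-derivative $2\delta'/\delta+n_{\mathrm{eff}}'/n_{\mathrm{eff}}$ crosses zero once. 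The term $n_{\mathrm{eff}}'/n_{\mathrm{eff}}$ is decreasing, and $\delta'/\delta\to-\infty$ as $n\to n_0$. Monotonicity of $\delta'/\delta$, however, needs log-concavity-type conditions on $\delta$ that Assumption~\ref{assump:admissible} alone does not give. So I would state existence of the interior maximizer as what is proved, and note that uniqueness requires such an additional shape condition (verifiable for the Hill form near saturation).
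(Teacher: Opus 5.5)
Your proposal is correct and follows the paper's proof essentially step for step. You get strict monotonicity of $\delta$ from $g'=-m$ and $\phi'>0$, and the root $n_0$ from $\phi(0)=0$ and $\delta(1)>0$. Compactness of $[1,n_0]$, together with $S(n_0)=0$ and the hypothesis $S(\tilde n)>S(1)$, then excludes both endpoints, and the first-order condition $\alpha(n^\ast)=\mathcal{C}(n^\ast)$ follows. Your separate handling of $c_w=0$ is in fact slightly more careful than the paper, which simply asserts $\delta(n_{\mathrm{exh}})=-c_w<0$.

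Your uniqueness caveat also matches the paper. It supplies exactly the log-concavity condition you name (Lemma~\ref{lem:logconc}), but verifies it on all of $\Fcal$, not just near saturation. The reason is that $(\log\phi)'$ is decreasing for the Hill form, and subtracting the constant $c_w>0$ preserves log-concavity wherever $\delta>0$.
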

\begin{proof}
See Appendix~\ref{app:single-peak}.
\end{proof}
\noindent Every form in Table~\ref{tab:forms} has $\phi(0){=}0$, so $\delta<0$ once the reasoning budget is exhausted at $n_{\mathrm{exh}}=1+(W-q-c)/m$ (distinct from the deployment cap $n_{\max}$ used elsewhere); hence $n_0$ exists and $\Fcal$ lies strictly inside the domain of $\phi$.
The mechanism is a race between two rates, one shrinking like $1/n^{2}$ as an extra agent \textit{adds} amplification, the other growing without bound as it \textit{costs} per-agent edge with $\delta\to0^{+}$ near $n_0$. 
Continuity on a compact interval and a vanishing edge at $n_0$ give existence; uniqueness follows from log-concavity of the shifted edge $\delta$(Appendix~\ref{app:single-peak}).
The $1/n^2$ rate limits how fast $n^\ast(W)$ can grow. Each extra agent costs $m$ tokens of room while its amplification gain shrinks, keeping $n^\ast$ below the feasibility cap $n_{\max}$ (Appendix Cor.~\ref{cor:nstar}). As $W \to \infty$, thinking room ceases to bind and the classical correlation ceiling returns, so an interior optimum arises only when the window is scarce (Appendix~\ref{app:largew}).

\paragraph{Reading cannot beat voting at matched aggregation.}
\label{sec:illusion}

Self-consistency draws $n$ chains from one agent without exchanging messages, so each reads only the question
($g_{\mathrm{SC}}{=}W-q-c$), while a debate agent also retains $n-1$ peer messages and keeps only
$g(n)\le g_{\mathrm{SC}}$. Thus, debate's ``synergy'' is a \textit{voting} effect unless peer text raises
$\Delta p_{\max}$, which we measure in \S\ref{sec:main}.

% Debate's ``synergy'' is therefore a voting effect unless peer text raises
% $\Delta p_{\max}$. We do not measure $\Delta p_{\max}$ directly, so Prop.~2 bounds the
% reading cost of debate rather than settling its net value, and our empirical comparison
% in \S\ref{sec:main-results} measures the two together. We return to this in Appendix~E.

\begin{proposition}[Self-consistency dominates the initial-round vote]
\label{prop:illusion}
Let $A^{0}_{\mathrm{deb}}(n,W)$ be initial-round debate vote accuracy of Eq.~\ref{eq:accuracy}, before any revision. Under Assumption~\ref{assump:admissible}, at matched aggregation (same $n$, $\rho$, $\sigma$), $A_{\mathrm{SC}}(n,W)\ge A^{0}_{\mathrm{deb}}(n,W)$ for every $n\ge1$ and $W$, with equality at $n{=}1$. If peer text raises the ceiling by $\Delta p_{\max}$, then debate can exceed self-consistency at $(n,W)$ only when
\begin{align}
\label{eq:info-vs-cost}
\Delta p_{\max}\,\phi\bigl(g(n)\bigr) \;>\; p_{\max}\bigl(\phi(g_{\mathrm{SC}})-\phi(g(n))\bigr),
\end{align}
and the dominance result above is the special case $\Delta p_{\max}\le0$.
\end{proposition}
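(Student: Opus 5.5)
The plan is to reduce everything to a comparison of per-agent margins through the aggregation head of Eq.~\ref{eq:accuracy}. At matched aggregation, both self-consistency and the initial-round debate vote share the same $n$, $\rho$ and $\sigma$, so they share $n_{\mathrm{eff}}(n)$ as well. Both accuracies therefore have the form $\Phi\bigl(\delta\sqrt{n_{\mathrm{eff}}(n)}/\sigma\bigr)$ and differ only in the margin $\delta$. Since $\Phi$ is strictly increasing and $\sqrt{n_{\mathrm{eff}}}/\sigma>0$, it suffices to show $\delta_{\mathrm{SC}}\ge\delta_{\mathrm{deb}}$. As in the discussion after Eq.~\ref{eq:synergy}, this step does not depend on $\Phi$ being Gaussian: any strictly increasing map from margin to accuracy would do.

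Next I compare the margins. By Eq.~\ref{eq:delta}, $\delta=(1+c_w)\betaeff-c_w$ with $c_w$ held fixed, so $\delta$ is strictly increasing in $\betaeff$. Substituting Eq.~\ref{eq:beff-gen} reduces the claim to $p_{\max}\phi(g_{\mathrm{SC}})\ge p_{\max}\phi(g(n))$. The budget accounting of Eq.~\ref{eq:budget} gives $g(n)=g_{\mathrm{SC}}-(n-1)m\le g_{\mathrm{SC}}$, with equality iff $n=1$ (using $m>0$). Under Assumption~\ref{assump:admissible}, $\phi$ is increasing, so the inequality holds. At $n=1$ the two windows coincide, so the accuracies are equal.

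For the second part, suppose peer text raises the debate agents' ceiling to $p_{\max}+\Delta p_{\max}$, keeping the same $\phi$ and the same $c_w$, $\rho$, $\sigma$. By the same monotonicity chain, debate beats self-consistency iff its ability is larger:
\[
(p_{\max}+\Delta p_{\max})\,\phi(g(n))>p_{\max}\,\phi(g_{\mathrm{SC}}).
\]
Rearranging gives exactly Eq.~\ref{eq:info-vs-cost}, so the condition is necessary, and in fact sufficient under these modeling choices. If $\Delta p_{\max}\le0$, the left side of Eq.~\ref{eq:info-vs-cost} is nonpositive because $\phi\ge0$, while the right side is nonnegative by the first part. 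Hence the condition fails, which recovers the dominance statement.

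The main obstacle is interpretive rather than computational. Making ``matched aggregation'' precise requires holding $\sigma$ and $c_w$ fixed even though $\sigma$ in fact depends on $\betaeff$ (Appendix~\ref{app:sigma}). There are also boundary cases where $g(n)<0$ or $\delta<0$ and $\phi$ is outside its domain. I would handle these by extending $\phi$ by $0$ on negative arguments, which keeps it monotone and leaves the argument unchanged. I would also remark that if $\sigma$ were allowed to vary, the comparison would need the signed-margin argument to be redone.
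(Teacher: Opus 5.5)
Your proof is correct and follows the paper's own argument. You use $g(n)\le g_{\mathrm{SC}}$ and monotonicity of $\phi$ to get the ability bound, then the fact that $\delta=(1+c_w)\betaeff-c_w$ and $\Phi$ are increasing at fixed $(n,\rho,\sigma,c_w)$. Your explicit rearrangement of $(p_{\max}+\Delta p_{\max})\phi(g(n))>p_{\max}\phi(g_{\mathrm{SC}})$ into Eq.~\ref{eq:info-vs-cost}, and the sign argument recovering dominance when $\Delta p_{\max}\le 0$, fill in what the paper leaves implicit; both are sound.
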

\begin{proof}
See Appendix~\ref{app:illusion}.
\end{proof}

\paragraph{Self-Refinement.} For self-refinement ($n{=}1$ case), Prop.~\ref{prop:illusion} becomes $A_{\mathrm{ref}}(W)\le p_{\max}\phi(g_{\mathrm{SC}})$,  a single untouched agent is the ceiling unless re-reading raises $p_{\max}$. 
\S\ref{sec:main} measures it.
\section{Experiments}
\label{sec:exp}

\subsection{Setup}
\label{sec:setup}

\paragraph{Models, Data And Metrics.} We run five open instruction models across four families from $3$--$12$B: \textbf{Llama-3.2-3B}~\citep{llama3},
\textbf{Qwen2.5-7B}~\citep{qwen2025}, \textbf{Falcon3-7B}~\citep{falcon3}, \textbf{Gemma-2-9B}~\citep{gemma2} and \textbf{Gemma-3-12B}~\citep{gemma3}. We use four reasoning benchmarks, two difficulty levels in math and two non-math domains: \textbf{GSM8K}~\citep{cobbe2021gsm8k},
\textbf{MATH500}~\citep{lightman2024verify}, ten-way \textbf{MMLU-Pro}~\citep{wang2024mmlupro} and five-way \textbf{CSQA}~\citep{talmor2019commonsenseqa} (Appendix Table~\ref{tab:datasets}). For all tasks, we report accuracy in percentage.

\paragraph{Baselines.} We compare our closed-form prediction against eight alternatives grouped by calibration cost.
\emph{Debate-calibrated}, using outcomes from actual debate runs that our method does not require: \textbf{Empirical Argmax} selects the best accuracy among tested $(n,W)$ configurations, \textbf{Successive Halving}~\citep{jamieson2016halving} spends the same budget adaptively, and a \textbf{Quadratic Surface} in $(n,W)$. 
\emph{No calibration:} \textbf{naive $n{=}1$} and \textbf{naive $n{=}n_{\max}$}.
\emph{Probe-calibrated, as ours is:} \textbf{Learned} $\betaeff$ replaces the functional form with gradient-boosted trees; \textbf{Star Exponent}~\citep{liu2026phase} substitutes the power-law ability curve into our aggregator and budget model, which isolates the effect of the functional form; and \textbf{Difficulty Mixture}~\citep{chen2024morecalls} estimates per-question accuracy from probe sweeps and simulates voting, with the per-call allowance adjusted under our window accounting as $n$ varies.
More details are in Appendix~\ref{app:baseline-implement}.

\paragraph{Implementation Details.} 
We report \textit{regret} as the percentage-point accuracy gap between the selected group size and an oracle that chooses the highest-accuracy configuration for each model–dataset pair.
We sweep the grid of Table~\ref{tab:protocol}, extending to $W \in \{16k, 32k\}$ and $n$ up to $32$ to test the large-$W$ limit. For all baselines, the first $40\%$ of test samples are used for calibration and the remaining $60\%$ for evaluation, with calibration costs accounted for as described in Appendix~\ref{app:calibration}. 
All runs use vLLM~\citep{kwon2023vllm} at temperature $1.0$ across three seeds, pooling samples across seeds before fitting rather than fitting each seed separately and averaging the estimates. This choice has negligible effect on the results (Appendix~\ref{app:our-implement}). More details are in Appendix~\ref{app:protocol}.

% ------

\begin{table}[t]
\centering
\caption{Held-out regret against the per-cell oracle (percentage points) when each method chooses $n$ at an equal \emph{token} spend. \textbf{Bold} is the lowest regret in a column and \underline{underline} the second lowest, the \textbf{Avg.}\ column included. Per-cell values for all calibration spend in Appendix Tables~\ref{tab:matchedcostpercell}--\ref{tab:matchedcostpercell2}.}
\label{tab:matchedcost}
\setlength{\tabcolsep}{3.5pt}
\small
\begin{tabular}{@{}lcccccccccc@{\hspace{11pt}}c@{}}
\toprule
\multirow{2}{*}[-3pt]{\textbf{Method}} & \multicolumn{5}{c}{\textbf{1.0M budget}} & \multicolumn{5}{c}{\textbf{2.0M budget}} & \multirow{2}{*}[-3pt]{\textbf{Avg.}} \\
\cmidrule(lr){2-6}\cmidrule(lr){7-11}
 & L-3B & Q-7B & F-7B & G2-9B & G3-12B & L-3B & Q-7B & F-7B & G2-9B & G3-12B &  \\
\midrule
\multicolumn{12}{@{}l}{\emph{Debate-calibrated}} \\
Successive Halving & 7.13 & 12.05 & 9.18 & 5.63 & 11.39 & 5.52 & 4.93 & 6.39 & 3.38 & 8.84 & 7.44 \\
Empirical Argmax & 4.88 & 3.17 & 3.37 & 3.19 & 1.41 & 4.48 & 2.38 & 2.77 & 3.05 & 1.62 & 3.03 \\
Quadratic Surface & 4.20 & 2.46 & 2.43 & 2.98 & 2.27 & 4.25 & 2.24 & 2.95 & 2.07 & \textbf{0.91} & 2.68 \\
\midrule
\multicolumn{12}{@{}l}{\emph{No calibration}} \\
Naive $n{=}n_{\max}$ & 17.86 & 35.46 & 27.28 & 16.74 & 36.70 & 17.86 & 35.46 & 27.28 & 16.74 & 36.70 & 26.81 \\
Naive $n{=}1$ & 7.88 & 3.22 & 3.87 & 4.32 & \textbf{1.03} & 7.88 & 3.22 & 3.87 & 4.32 & \underline{1.03} & 4.07 \\
\midrule
\multicolumn{12}{@{}l}{\emph{Probe-calibrated}} \\
Difficulty Mixture & 5.81 & 1.32 & 2.11 & 3.54 & 1.30 & 5.37 & 1.43 & 2.16 & 3.86 & 1.38 & 2.83 \\
Star Exponent & 3.82 & \underline{1.16} & 1.47 & 2.06 & \underline{1.22} & 4.04 & 1.18 & 1.36 & 1.97 & 1.25 & 1.95 \\
Learned $\betaeff$ & \underline{2.72} & \textbf{1.14} & \underline{1.07} & \textbf{1.27} & 2.20 & \underline{2.39} & \underline{1.05} & \underline{1.05} & \textbf{1.10} & 1.84 & \underline{1.58} \\
\rowcolor{green!15} \textbf{TB-MAD} (Ours) & \textbf{2.29} & 1.29 & \textbf{0.93} & \underline{1.40} & 1.90 & \textbf{1.88} & \textbf{0.83} & \textbf{0.84} & \underline{1.15} & 1.47 & \textbf{1.40} \\
\bottomrule
\end{tabular}

\end{table}

\subsection{Main Results}
\label{sec:main}

\paragraph{Fitting And Predicting Performance.} $\betaeff(g)$ is measured by capping one agent's generation at $g$, then fitting $(p_{\max},g_0,\kappa)$, with the \textsc{hill} form chosen on held-out error from five candidate forms (Appendix Table~\ref{tab:forms_mean}). Using this fit, TB-MAD achieves the lowest regret, averaging $1.40\%$ and staying below $2\%$ across all models and eight comparisons at equal calibration spends (Table~\ref{tab:matchedcost}). The difference is largely driven by calibration cost: debate-calibrated rules afford far fewer questions, since debate expands input and output tokens by up to $49\times$ and $11\times$ (Appendix Table~\ref{tab:calibtokens}). Among probe-based methods, Star Exponent uses a power law that has higher held-out RMSE than the selected \textsc{hill} form (Appendix Table~\ref{tab:forms_mean}), while Difficulty Mixture estimates a per-question difficulty that is more data-hungry; Learned $\betaeff$ is the closest but trails while sacrificing interpretability. Actual $n^*$ predictions at $1$M spend are in Appendix Figure~\ref{fig:predict}. 
Broken out by window, TB-MAD is within $1\%$ of the best rule at six of the seven windows, trailing by $1.16\%$ at $W{=}2500$ (Appendix Table~\ref{tab:bywindow}). Its performance is therefore not driven by a single region of the sweep.

\paragraph{TB-MAD Scaling Behavior.}

TB-MAD provides the best regret-cost trade-off, with regret decreasing as the calibration set grows (Figure~\ref{fig:kscaling}). Its plain regret is already among the lowest at $10$ calibration questions and stays roughly flat beyond $60$, since one probe-fitted ability curve gains little from more data. Debate-calibrated baselines start higher and keep dropping, closing part of that gap while their cost climbs (left panel). At equal spend, TB-MAD achieves the lowest regret and saturates once the maximum number of calibration questions is reached, whereas debate-based baselines keep improving as more tokens are spent, and Successive Halving overtakes TB-MAD at the largest budget.

\paragraph{Robustness to Heterogeneous Settings.}
We extend the evaluation to a heterogeneous setting with $K$ unique models, each appearing with the same model count, and isolate the allocation problem (Appendix~\ref{app:hetero}). 
The probe-calibrated methods are again the strongest, and Learned $\beta_\text{eff}$ edges ahead of TB-MAD. Our rule trades a small amount of regret for a closed form and an explicit model-selection gate (Appendix Tables~\ref{tab:heterobaselines} and~\ref{tab:heterobk}).

\paragraph{Generalization To Self-Refinement.} Revision inherits the same generation-floor effect as debate, making self-refinement similarly token-inefficient: tight windows can reduce accuracy before any refinement benefit emerges. Useful refinement also requires both headroom and self-correction ability, so gains remain modest relative to the added token cost (Appendix~\ref{app:selfrefine}).

\paragraph{When More Agents Do Not Pay.} 
Reading peers costs generation, which resolves two questions often studied separately. First, communication does not pay as majority voting with \emph{no} communication matches or beats debate at $93\%$ of the $980$ matched $(n,W)$ operating points with $n\ge2$ (Appendix Table~\ref{tab:selfcons}). 
A paired bootstrap on the same cells is more favourable still. Self-consistency is significantly better in $57\%$ of cases, debate in just $0.5\%$, and the remaining $42.5\%$ are indistinguishable. The benefit of voting over debate has also been reported at matched sampling budgets~\citep{smit2024mad, choi2025debatevote}.
More importantly, Eq.~\ref{eq:info-vs-cost} says when the ordering should reverse, and we test that prediction across seven windows rather than at a single operating point.
At $W{=}1200$, the margin is $+42.0\%$, or $+13.7\%$ restricting to cells where $g$ is not clipped to the floor. It falls below $1$ point by $W{=}6\mathrm{k}$ and is indistinguishable from zero through $32\mathrm{k}$ on Qwen2.5-7B/GSM8K (Appendix Table~\ref{tab:largewsc}), in line with Appendix Cor.~\ref{cor:largew}'s correlation ceiling. 
The measured debate accuracy includes one revision round, so the observed margin also includes a revision lift not covered by Prop.~\ref{prop:illusion}, estimated at $0.0$–$6.7\%$ (Table~\ref{tab:selfrefine}, Appendix~\ref{app:selfrefine}).
Second, a group is not always worth forming: a single agent matches or beats the debate group at $47\%$ of the $980$ cells (Appendix Table~\ref{tab:selfcons}, single$\ge$group column), with $\delta>0$ determining when grouping pays. Single-agent probes recover the sign of $\delta$ correctly in $81\%$ of cases, compared with $56\%$ for always grouping. The gap $A(n,W)-A(1,W)$ is interior in $n$ and collapses under tight windows across all five models (Appendix Figure~\ref{fig:surface}), a generation-floor effect since raising $g_{\min}$ recovers the worst cell (Appendix Figure~\ref{fig:gmin}). Directly measuring $\Delta p_{\max}$ per agent at fixed $g$, gives a small, sign-mixed estimate across four model-task cells (Appendix Table~\ref{tab:deltapmax}), so we find no reliable ceiling gain from reading a peer, consistent with debate's apparent gains coming from the vote and not from $\Delta p_{\max}$.  
Further details are in Appendix~\ref{app:scgap}.

\begin{figure}[t]
\centering
\includegraphics[width=\textwidth]{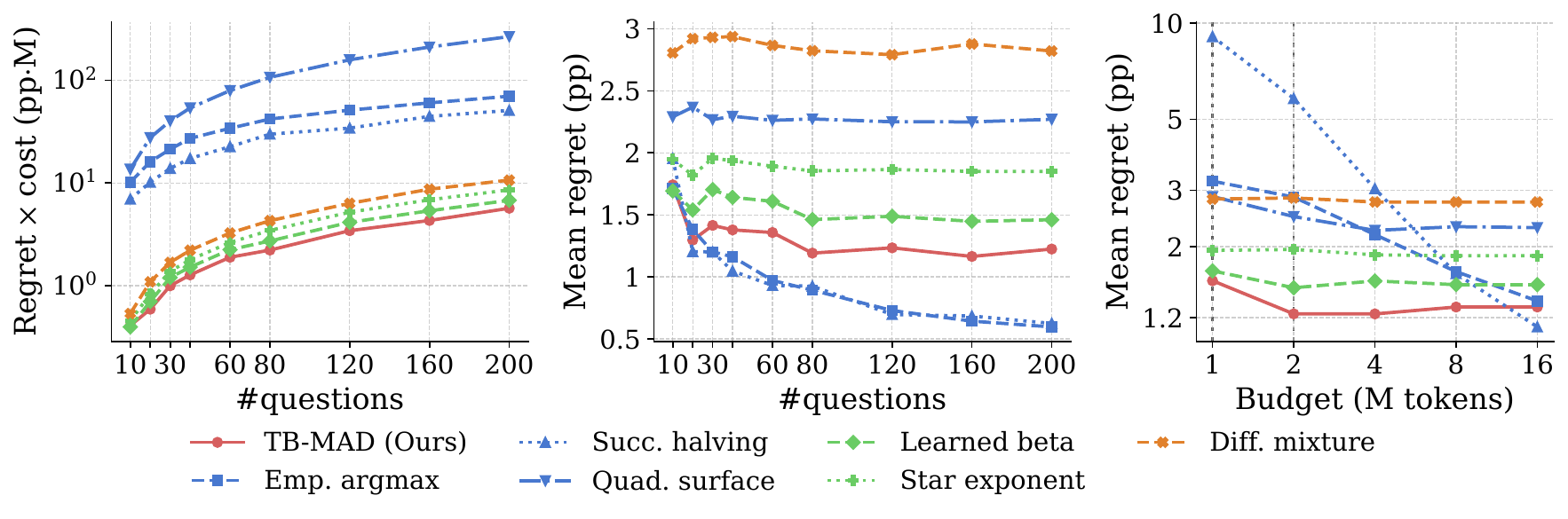}
\caption{Two ways to fix a calibration budget, equal questions (left and middle) and equal spend (right, dotted verticals at the two budgets of Table~\ref{tab:matchedcost}), and two readings of the equal-questions sweep: cost-adjusted (left) and plain regret (middle).}
% \caption{The two ways to fix a calibration spend: equal questions (left, cost-adjusted) and equal spend (right, dotted verticals at the two spends of Table~\ref{tab:matchedcost}).}
\label{fig:kscaling}
\end{figure}
\subsection{Ablations}
\label{sec:ablations}

\paragraph{The reading cost $m$.} $m$ is the token cost of reading one peer message (Eq.~\ref{eq:budget}), determining how quickly added agents reduce the group's shared thinking room. Both the collapse and interior optimum arise from $m>0$. Setting $m{=}0$ holds $g$ fixed regardless of group size, as in debate where each agent has its own window. The theory then predicts the largest feasible group at every window, with no interior optimum and no collapse (Appendix Figure~\ref{fig:ablationm}). 
More details are in Appendix~\ref{app:m}.

\paragraph{The price of a token.} We ablate the output-to-input price ratio $r{=}c_\text{out}/c_\text{in}$ to reflect practical cost models, where reading and generating tokens can have different prices. Making input more expensive reduces their performance accordingly and TB-MAD keeps the lowest regret at every ratio tried on these spends (Appendix Table~\ref{tab:rate}), while cheaper input buys a larger cost-optimal group at the same spend (Appendix Figure~\ref{fig:cost}).

\paragraph{Is the collapse a long-context artifact?}
A longer transcript could hurt through position \citep{liu2024lostmiddle,hsieh2024ruler} or length alone~\citep{levy2024sametask}, but these effects do not explain the collapse.
Moving the question to the end of the transcript, after the peer messages, at fixed $H$ and $g$ changes accuracy by less than $1\%$ on average (Appendix Table~\ref{tab:qburied}). 
% The largest collapse-cell shift (Falcon3-7B, GSM8K) is $8.8\%$, but both placements land at $9.0\%$ and $0.2\%$ against a grid-mean accuracy of $79.2\%$ for the same model and task. 
Question position moves the collapse cell without lifting it out of collapse.
\section{Conclusion}
\label{sec:conclusion}

We develop a token-budget theory of when additional context is worth its cost by placing reading and thinking in a shared window. The theory predicts optimal group size and identifies a critical window below which a single agent can outperform debate. Empirically, our method achieves the lowest regret under tight budgets, while self-consistency often matches or outperforms debate at lower cost. The theory offers a simple guide for \emph{when} to read more, think more, or add agents. 
Limitations and Future work are discussed in Appendix~\ref{app:limitations}.

\subsection*{AI use statement}

In this work, we used generative AI tools solely to support writing, including proofing, language polishing, and
improving readability. We did not use them to develop research ideas, conduct experiments, or generate the substantive content of the work. All AI-assisted
text was reviewed and verified by the authors, who take full responsibility for the final content.

\subsection*{Ethics statement}

TB-MAD lowers the token cost of deploying multi-agent debate by choosing a group size before any debate is run. Its predictions are calibrated per model and task, so a size chosen outside the regime the probe sweep covers can lose accuracy with no signal that it has. We use only public benchmarks and open-weight models, with no human subjects or personal data.

\subsection*{Reproducibility statement}

All model equations, proofs and implementation details are in Appendices~\ref{app:protocol} and~\ref{app:single-peak}--\ref{app:illusion}.
We will release the code, the banked debate transcripts and the table and figure generation scripts openly for research purposes upon publication.

\bibliographystyle{iclr2027_conference}
\bibliography{references}

\clearpage
\appendix
\section*{Appendix}
\label{app:index}

Each appendix, the main-text section it supports, and its subsections, one per line with its page.

\newcommand{\idxsub}[2]{\hspace*{1.6em}\ref{#1}~~#2 \dotfill \pageref{#1}\\[1pt]}
\begingroup
\setlength{\parindent}{0pt}
\medskip

\textbf{\ref{app:proofs}~~Proofs} \emph{(supports \S\ref{sec:theory})} \dotfill \pageref{app:proofs}\\[2pt]
\idxsub{app:single-peak}{Single-peakedness}
\idxsub{app:largew}{The large-window limit}
\idxsub{app:illusion}{Self-consistency weakly dominates debate}
\medskip

\textbf{\ref{app:protocol}~~Implementation Details} \emph{(supports \S\ref{sec:ability}, \S\ref{sec:setup})} \dotfill \pageref{app:protocol}\\[2pt]
\idxsub{app:our-implement}{Implementing TB-MAD}
\idxsub{app:baseline-implement}{Implementing baselines}
\medskip

\textbf{\ref{app:hetero}~~Generalization To Heterogeneous Settings} \emph{(supports~\S\ref{sec:main})}\dotfill \pageref{app:hetero}\\[2pt]
\idxsub{app:hetero-method}{Extended method}
\idxsub{app:hetero-theory}{Updated theory analysis}
\idxsub{app:hetero-results}{Results}

\textbf{\ref{app:extra}~~Additional Results} \emph{(supports \S\ref{sec:main}, \S\ref{sec:ablations})}\dotfill \pageref{app:extra}\\[2pt]
\idxsub{app:selfrefine}{Self-refine, full results}
\idxsub{app:scgap}{Voting against talking, and the single agent}
\idxsub{app:api-rate}{Repricing at API rates}
\idxsub{app:cw}{Is the error concentration $c_w$ constant?}
\idxsub{app:sigma}{Does $\sigma$ depend on $n$?}
\idxsub{app:generation-floor}{Generation-floor recovery}
\medskip

\textbf{\ref{app:scope}~~Limitations And Future Work} \emph{(supports \S\ref{sec:conclusion})} \dotfill \pageref{app:scope}\\[2pt]
\endgroup

\section{Proofs}
\label{app:proofs}

\subsection{Single-peakedness (Prop.~\ref{prop:single-peak})}

\label{app:single-peak}

Treat $n$ as continuous, so
\[
g(n)=W-q-c-(n-1)m, \qquad
\delta(n)=(1+c_w)\,p_{\max}\,\phi\bigl(g(n)\bigr)-c_w .
\]
Since $\phi$ is defined on $[0,\infty)$, we work on $[1,n_{\mathrm{exh}}]$ with $n_{\mathrm{exh}}=1+(W-q-c)/m$,
the (real-valued, not the deployment cap $n_{\max}$ used elsewhere) group size at which the reasoning
budget is exhausted ($g(n_{\mathrm{exh}})=0$).

\paragraph{Step 1: $\delta$ is strictly decreasing.} $g$ is strictly decreasing in $n$ ($g'(n)=-m<0$) and
$\phi$ is strictly increasing (Assumption~\ref{assump:admissible}), so $\phi(g(n))$ is strictly
decreasing and hence $\delta(n)$ is also strictly decreasing. 
Every form in Table~\ref{tab:forms} has $\phi(0)=0$, so $\delta(n_{\mathrm{exh}})=-c_w<0$.
With $\delta(1)>0$, $\delta$ continuous and strictly decreasing, there is a unique root $n_0\in(1,n_{\mathrm{exh}})$, and
$\Fcal=[1,n_0]$ lies strictly inside the domain of $\phi$. (This step uses only monotonicity of $\phi$, not the differentiability also assumed in Assumption~\ref{assump:admissible}.)

\paragraph{Step 2: the log-derivative decomposition.} Write $M(n)=S(n)^2=n_{\mathrm{eff}}(n)\delta(n)^2$
for the squared synergy index of \S\ref{sec:synergy}. Since $\phi\in C^{1}$ and $g$ is affine, $\delta\in C^{1}$;
on $\mathrm{int}(\Fcal)$ we have $\delta>0$, so $\log M(n)=\log n_{\mathrm{eff}}(n)+2\log\delta(n)$ is differentiable and
\[
\frac{\rm d}{{\rm d}n}\log M=\underbrace{\frac{n_{\mathrm{eff}}'}{n_{\mathrm{eff}}}}_{\alpha(n)}
-\underbrace{\Bigl(-2\frac{\delta'}{\delta}\Bigr)}_{\mathcal{C}(n)} .
\]
With $D(n)=1+\rho(n-1)$ and $n_{\mathrm{eff}}=n/D$,
\begin{align*}
\frac{n_{\mathrm{eff}}'}{n_{\mathrm{eff}}}
&=\frac{\rm d}{{\rm d}n}\bigl(\log n-\log D\bigr)
=\frac1n-\frac{\rho}{D}\\
&=\frac{D-\rho n}{nD}
=\frac{1-\rho}{\,n\bigl(1+\rho(n-1)\bigr)\,},
\end{align*}
is the amplification term $\alpha(n)$. It is strictly positive for $\rho\in[0,1)$, strictly decreasing
in $n$ because the denominator $nD$ is increasing. Moreover, its decay rate depends on $\rho$, with
$\alpha(n)=\Theta(n^{-2})$ as $n\to\infty$ for $\rho\in(0,1)$ and $\alpha(n)=1/n$ at $\rho=0$.
For the cost term, $\delta'(n)=-(1+c_w)p_{\max}m\,\phi'\bigl(g(n)\bigr)$, and $\phi'$ is continuous and strictly positive on the compact interval $[g(n_0),g(1)]$, so $|\delta'|$ is bounded below by some $\varepsilon>0$ on $\Fcal$. Since $\delta>0$ and $\delta'<0$ on the interior, $\mathcal{C}(n)>0$ there, and as $n\to n_0^-$ we have $\delta\to0^+$ with $|\delta'|\ge\varepsilon$,
so $\mathcal{C}(n)\to+\infty$.

\paragraph{Step 3: existence of an interior maximizer.} $M=S^2$ is continuous on the compact interval $\Fcal=[1,n_0]$, so it attains a maximum there. At the right endpoint $\delta(n_0)=0$ gives $M(n_0)=0<M(1)$, and by hypothesis $M(\tilde n)>M(1)$ for some $\tilde n\in\mathrm{int}(\Fcal)$. The maximum therefore exceeds both endpoint values and is attained at some $n^\ast\in(1,n_0)$. At any such interior maximizer $\log M$ is differentiable (Step 2), so $\alpha(n^\ast)=\mathcal{C}(n^\ast)$.

\paragraph{Step 4: uniqueness under log-concavity.} 
$\mathcal{C}(n)=-2(\log\delta)'(n)$ is non-decreasing iff $(\log\delta)'$ is non-increasing iff $\delta$ is log-concave on $\Fcal$. If
$\mathcal{C}$ is non-decreasing then $\alpha-\mathcal{C}$ is strictly decreasing (strictly decreasing $\alpha$ minus non-decreasing $\mathcal{C}$), hence has exactly one zero. Therefore $\log M$ is strictly quasi-concave on $\mathrm{int}(\Fcal)$. The interior maximizer $n^*$ of Step 3 is a stationary point; by uniqueness, it is the only one, so $n^*$ is the unique interior maximum.

\begin{lemma}[Uniqueness of the peak]
\label{lem:logconc}
If $\delta$ is log-concave on $\Fcal$, the stationary point of $\log M$ is unique, so $n^\ast$ is the only interior maximum. 
\end{lemma}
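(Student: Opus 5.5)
The plan is to show that on $\mathrm{int}(\Fcal)=(1,n_0)$ the derivative of $\log M$ changes sign at most once, from positive to negative. A differentiable function whose derivative has at most one zero, with that single sign change, has at most one stationary point and no other interior local extremum. We use the decomposition from Step 2,
\[
\frac{\mathrm d}{\mathrm dn}\log M(n)=\alpha(n)-\mathcal C(n),\qquad \alpha(n)=\frac{1-\rho}{n\bigl(1+\rho(n-1)\bigr)},\qquad \mathcal C(n)=-2(\log\delta)'(n),
\]
which is valid on $(1,n_0)$ because $\delta>0$ there (Step 1) and $\delta\in C^1$.

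\textbf{Step A: $\alpha$ is strictly decreasing.} For $\rho\in[0,1)$ the numerator $1-\rho$ is a positive constant. The denominator $n(1+\rho(n-1))$ is a product of a strictly increasing positive factor $n$ and a non-decreasing positive factor, so it is strictly increasing. Hence $\alpha$ is strictly decreasing.

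\textbf{Step B: $\mathcal C$ is non-decreasing.} This follows from the hypothesis. Log-concavity of $\delta$ on $\Fcal$ means $\log\delta$ is concave on $(1,n_0)$. Since $\log\delta$ is $C^1$ there, its derivative $\delta'/\delta$ is non-increasing, so $\mathcal C=-2\,\delta'/\delta$ is non-decreasing.

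\textbf{Step C: at most one zero of $\alpha-\mathcal C$.} By Steps A and B, $h=\alpha-\mathcal C$ is strictly decreasing on $(1,n_0)$, so it vanishes at most once. Step 3 already supplies an interior maximizer $n^\ast$ with $h(n^\ast)=0$, so $n^\ast$ is the unique stationary point. Moreover $h>0$ on $(1,n^\ast)$ and $h<0$ on $(n^\ast,n_0)$. Therefore $\log M$, and hence $M$ and $S$ (which has the same sign-preserving monotonicity, since $\delta>0$ there), is strictly increasing and then strictly decreasing. This gives strict quasi-concavity, so no other interior local or global maximum exists. Step 2 also showed $\mathcal C\to+\infty$ as $n\to n_0^-$, which confirms that $h$ eventually becomes negative and is consistent with this sign pattern.

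\textbf{Main obstacle.} The only delicate point is that log-concavity is stated on the closed set $\Fcal$, while $\log\delta=-\infty$ at $n_0$. I would therefore interpret the hypothesis as concavity of $\log\delta$ on $[1,n_0)$, or equivalently on the interior. I would also use only one-sided monotonicity of $\delta'/\delta$, so that no second derivative of $\phi$ is needed; concavity of a $C^1$ function suffices for its derivative to be monotone.
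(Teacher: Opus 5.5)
Your proof is correct and follows the paper's Step~4 essentially verbatim: the strictly decreasing $\alpha$ minus the non-decreasing cost term $\mathcal{C}=-2(\log\delta)'$ (non-decreasing exactly by log-concavity of $\delta$) has at most one zero, and the interior maximizer from Step~3 supplies it. Your explicit sign-change argument, and your remark that the hypothesis should be read on $[1,n_0)$ where $\log\delta$ is finite, are small clarifications that the paper leaves implicit; you are also slightly more careful in saying ``at most one zero'' and taking existence from Step~3, where the paper writes ``exactly one.''
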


The ability $\beta(n)=p_{\max}\phi(g)$ is log-concave in $g$, and $g$ is affine in $n$, so $\beta$ is log-concave in $n$. Since $\log\beta=\log p_{\max}+\log\phi$ and for the adopted
\textsc{hill} gain $\phi(g)=g^{\kappa}/(g^{\kappa}+g_0^{\kappa})$,
\begin{equation}
\label{eq:logphi-hill}
(\log\phi)'(g) \;=\; \frac{\kappa\,g_0^{\kappa}}{g\bigl(g^{\kappa}+g_0^{\kappa}\bigr)},
\end{equation}
which is decreasing in $g$, so $\log\phi$ is concave in $g$ and hence in $n$ because $g$ is affine in $n$. The same holds for the power law, where $(\log\phi)' = \kappa/\bigl(g_0 u(u^\kappa-1)\bigr)$ with $u = 1+g/g_0$ is decreasing in $g$. For the bounded exponential, $\log(1-e^{-u})$ has second derivative $-1/\bigl(4\sinh^2(u/2)\bigr) < 0$, and $u = g/g_0$ is affine, so log-concavity in $g$ follows. The \textsc{Weibull} case needs the chain rule, since $u = (g/g_0)^\kappa$ is convex for the fitted $\kappa > 1$ (Table~\ref{tab:e1fit}).

The shift also preserves log-concavity, on the whole of $\Fcal$. Recall $\delta(n)=(1+c_w)\beta(n)-c_w$ from the definitions above, and drop the argument $n$. Since $c_w>0$ is a constant, $\delta'=(1+c_w)\beta'$ and $\delta''=(1+c_w)\beta''$ share $\beta''$'s sign. 
\begin{itemize}
\item If $\beta''(n)\le0$: on $\mathrm{int}(\Fcal)$ we have $\delta(n)>0$ (Step 1), so
$\delta\delta''\le0\le(\delta')^2$, giving $\delta\delta''-(\delta')^2\le0$ directly; log-concavity of
$\beta$ is not needed in this case.
\item If $\beta''(n)>0$: substituting the derivatives above,
\begin{align}
    \delta\delta''-(\delta')^2
&=(1+c_w)\Bigl[(1+c_w)\bigl(\beta\beta''-(\beta')^2\bigr)-c_w\beta''\Bigr]\\
&\le(1+c_w)\bigl[0-c_w\beta''\bigr]\le0,    
\end{align}
using $\beta\beta''-(\beta')^2\le0$ (log-concavity of $\beta$, shown above) and $c_w\beta''\ge0$.
\end{itemize}
The two cases exhaust every $n\in\mathrm{int}(\Fcal)$, so $\delta\delta''-(\delta')^2\le0$ throughout,
i.e.\ $\delta$ is log-concave on all of $\Fcal$. Lemma~\ref{lem:logconc} then applies unconditionally:
$\mathcal{C}$ is non-decreasing on $\Fcal$, so $\alpha-\mathcal{C}$ has exactly one zero and
$n^\ast$ is unique. The only inputs are log-concavity of $\beta$ (established above for every candidate in
Table~\ref{tab:forms}) and $c_w>0$.

\begin{corollary}[$n^\ast$ decreases in $\rho$]
\label{cor:nstar-rho}
Under Prop.~\ref{prop:single-peak} and Lemma~\ref{lem:logconc}, $n^\ast$ is strictly decreasing in $\rho$.
\end{corollary}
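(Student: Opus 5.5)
Since $\delta$ and hence the cost term $\mathcal{C}(n)=-2\delta'(n)/\delta(n)$ do not involve $\rho$, all of the $\rho$-dependence of $n^\ast$ comes through the amplification term $\alpha(n;\rho)=(1-\rho)/\bigl(n(1+\rho(n-1))\bigr)$. The plan is to use the characterization from Step~3 and Lemma~\ref{lem:logconc}: $n^\ast(\rho)$ is the unique zero on $\mathrm{int}(\Fcal)$ of
\[
F(n,\rho)=\alpha(n;\rho)-\mathcal{C}(n),
\]
and $F(\cdot,\rho)$ is strictly decreasing in $n$, strictly positive to the left of $n^\ast(\rho)$ and strictly negative to the right. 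The feasible set $\Fcal=[1,n_0]$ is also independent of $\rho$, because $n_0$ is fixed by $\delta$ alone.

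First I compute $\partial\alpha/\partial\rho$ at fixed $n>1$. Writing $\alpha=(1-\rho)/(nD)$ with $D=1+\rho(n-1)$, the quotient rule gives a numerator
\[
-nD-(1-\rho)\,n(n-1)=-n\bigl[1+\rho(n-1)+(n-1)-\rho(n-1)\bigr]=-n^2 .
\]
Hence $\partial\alpha/\partial\rho=-1/D^2<0$ for every $n$, and in particular $\alpha$ is strictly decreasing in $\rho$ at each fixed $n>1$.

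Next comes the comparison argument, which avoids differentiability assumptions on $\mathcal{C}$. Take $\rho_1<\rho_2$ and let $n_1=n^\ast(\rho_1)$. Then
\[
F(n_1,\rho_2)=\alpha(n_1;\rho_2)-\mathcal{C}(n_1)<\alpha(n_1;\rho_1)-\mathcal{C}(n_1)=0 .
\]
Since $F(\cdot,\rho_2)$ is strictly decreasing with a single zero $n^\ast(\rho_2)$, this forces $n^\ast(\rho_2)<n_1$, which is the claimed strict monotonicity. If $\mathcal{C}$ happens to be $C^1$, the implicit function theorem gives the same conclusion: $dn^\ast/d\rho=-F_\rho/F_n$, where $F_\rho<0$ and $F_n<0$. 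I would only mention this as a remark.

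The main obstacle is that the hypotheses of Prop.~\ref{prop:single-peak} must hold at both values of $\rho$. The proposition requires $S(n)>S(1)$ for some interior $n$, and an interior maximizer exists only under that condition. Raising $\rho$ shrinks the amplification, so at $\rho_2$ the optimum might move to the boundary $n=1$.

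I would handle this in two ways. First, I would note that the corollary is stated under Prop.~\ref{prop:single-peak}, so the interior hypothesis is assumed for each $\rho$ under consideration. Second, I would observe that the comparison still gives weak monotonicity without it. If $F(n,\rho_2)<0$ on all of $\mathrm{int}(\Fcal)$, then $M$ is decreasing there and the maximizer is $n=1<n_1$, so the monotone conclusion persists in the extended sense.

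Finally, I would check that $n^\ast(\rho_1)$ lies in the region where $\mathcal{C}$ is finite and positive. This holds because $\delta>0$ on the interior of $\Fcal$, which keeps $F$ well defined at $n_1$.
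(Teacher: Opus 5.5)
Your proposal is correct and follows the paper's own argument. The paper notes that $\mathcal{C}$ does not depend on $\rho$, computes $\partial\alpha/\partial\rho=-1/(1+\rho(n-1))^2<0$, and uses the strict decrease of $\alpha-\mathcal{C}$ in $n$ (Step 4) to push the unique crossing left; your comparison $F(n^\ast(\rho_1),\rho_2)<0$ and your remark that the interior hypothesis must hold at both values of $\rho$ (otherwise the maximizer drops to $n=1$) simply make the paper's ``shifts the crossing left'' step explicit.
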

\begin{proof}
$n^\ast$ solves $\alpha(n)=\mathcal{C}(n)$, and $\mathcal{C}(n)=-2\delta'(n)/\delta(n)$ does not
depend on $\rho$. From Step 2,
\[
\frac{\partial\alpha}{\partial\rho}(n)
=\frac{\partial}{\partial\rho}\frac{1-\rho}{n(1+\rho(n-1))}
=\frac{-1}{\bigl(1+\rho(n-1)\bigr)^2}\;<\;0
\]
for every $n>1$, so $\alpha$ falls pointwise as $\rho$ rises while $\mathcal{C}$ is fixed.
$\alpha-\mathcal{C}$ is strictly decreasing in $n$ at the crossing (Step 4), so lowering the
$\alpha$ shifts the crossing left and $n^\ast$ strictly decreases in $\rho$.
\end{proof}
\noindent More correlated errors buy less from an extra vote at every group size, so the optimum sits
at a smaller group.

\subsection{The large-window limit}
\label{app:largew}

Write $\alpha(n) = n_{\mathrm{eff}}'/n_{\mathrm{eff}} = (1-\rho)/\bigl(n(1+\rho(n-1))\bigr)$
for the amplification term of A.1 Step 2. Since
$\delta'(n) = -(1+c_w)p_{\max} m\, \phi'\bigl(g(n)\bigr)$, the cost term depends on $n$ and
$W$ only through the room $g$,
\[
  C(n) \;=\; -2\frac{\delta'(n)}{\delta(n)}
        \;=\; \widehat C\bigl(g(n,W)\bigr),
  \qquad
  \widehat C(g) \;=\;
  \frac{2m(1+c_w)p_{\max}\,\phi'(g)}{(1+c_w)p_{\max}\phi(g)-c_w}.
\]

\begin{corollary}[Slow growth]
\label{cor:nstar}
Assume the conditions of Prop.~1 and Lemma~1, so that $C$ is non-decreasing in $n$ on $F$,
equivalently $\widehat C' \le 0$. Then on any window range where the maximizer stays
interior, $n^\ast(W)$ is differentiable, non-decreasing in $W$, and
\[
  0 \;\le\; \frac{dn^\ast}{dW}
    \;=\; \frac{1}{\,m + \alpha'(n^\ast)/\widehat C'(g)\,}
    \;\le\; \frac{1}{m} \;=\; \frac{dn_{\max}}{dW}.
\]
The optimal size therefore never grows faster than the feasibility cap, and the gap between
them widens as the window loosens.
\end{corollary}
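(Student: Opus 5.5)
The plan is to treat the interior optimum as the solution of the first-order condition from Step~3 of Appendix~\ref{app:single-peak} and apply the implicit function theorem in $(n,W)$. Define
\[
F(n,W)\;=\;\alpha(n)-\widehat C\bigl(g(n,W)\bigr),\qquad g(n,W)=W-q-c-(n-1)m .
\]
By Prop.~\ref{prop:single-peak} and Lemma~\ref{lem:logconc}, for each $W$ in the range where the maximizer is interior, $n^\ast(W)$ is the \emph{unique} zero of $F(\cdot,W)$ on $\mathrm{int}(\Fcal)$. So it suffices to show that this zero moves differentiably and to compute its slope.

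\textbf{Partial derivatives.} First I compute the two partial derivatives. Since $\partial g/\partial n=-m$ and $\partial g/\partial W=1$,
\[
F_n=\alpha'(n)+m\,\widehat C'(g),\qquad F_W=-\widehat C'(g).
\]
From Step~2 of Appendix~\ref{app:single-peak}, $\alpha$ is strictly decreasing, so $\alpha'(n)<0$. The hypothesis that $C$ is non-decreasing in $n$ is equivalent to $\widehat C'\le 0$, because $C(n)=\widehat C(g(n))$ and $g$ decreases in $n$ with slope $-m$. Together these give $F_n<0$ strictly, so $F_n$ never vanishes. The implicit function theorem then gives that $n^\ast(W)$ is $C^1$, with
\[
\frac{dn^\ast}{dW}=-\frac{F_W}{F_n}=\frac{\widehat C'(g)}{\alpha'(n^\ast)+m\widehat C'(g)}.
\]
Dividing numerator and denominator by $\widehat C'(g)$ when it is nonzero yields the stated form $1/\bigl(m+\alpha'(n^\ast)/\widehat C'(g)\bigr)$.

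\textbf{Sign bounds.} Both $\alpha'$ and $\widehat C'$ are nonpositive, so $\alpha'/\widehat C'\ge 0$. The denominator is therefore at least $m$, which gives $0\le dn^\ast/dW\le 1/m$. At points where $\widehat C'(g)=0$, I would read the formula from the undivided quotient, which gives $dn^\ast/dW=0$; this is consistent with the displayed expression under the convention $\alpha'/0=-\infty\cdot(-1)=+\infty$. The comparison with the cap follows because the exhaustion size $n_{\mathrm{exh}}=1+(W-q-c)/m$, and likewise the root $n_0$ in the regime where thinking room binds linearly, grows at rate exactly $1/m$. Hence
\[
\frac{d}{dW}\bigl(n_{\max}-n^\ast\bigr)=\frac1m-\frac{dn^\ast}{dW}\ge 0,
\]
so the gap is non-decreasing. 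It strictly widens wherever $\widehat C'<0$.

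\textbf{Main obstacle.} I expect the main obstacle to be regularity. The implicit function theorem needs $F\in C^1$, which requires $\widehat C\in C^1$ and hence $\phi\in C^2$, while Assumption~\ref{assump:admissible} only gives $C^1$. I would handle this by adding $C^2$ as a standing hypothesis, which holds for every form in Table~\ref{tab:forms} on $(0,\infty)$, and checking that the denominator of $\widehat C$, namely $\delta/m$-scaled, stays positive on $\mathrm{int}(\Fcal)$ so that $\widehat C$ is smooth there. If one wants to avoid $C^2$, a fallback is a monotone comparison argument. Increasing $W$ shifts the curve $\widehat C(g(\cdot,W))$ rightward by $1/m$ per unit of $W$, while $\alpha$ is fixed and strictly decreasing. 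This pins the crossing between $n^\ast(W)$ and $n^\ast(W)+\Delta W/m$, which gives the same bounds as one-sided difference quotients.
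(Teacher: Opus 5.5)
Your proposal is correct and follows the paper's proof essentially step for step. Both use the implicit-function argument on $F(n,W)=\alpha(n)-\widehat C\bigl(g(n,W)\bigr)$ with partials $F_n=\alpha'+m\widehat C'$ and $F_W=-\widehat C'$, and both use the sign argument that gives $0\le dn^\ast/dW\le 1/m$, matching the cap's slope $1/m$. Your extra care fills small gaps the paper's proof passes over silently: you note that the implicit function theorem actually needs $\phi\in C^2$ (Assumption~\ref{assump:admissible} only supplies $C^1$), you use uniqueness from Lemma~\ref{lem:logconc} to identify the implicit branch with $n^\ast(W)$, and you read the case $\widehat C'(g)=0$ from the undivided quotient.
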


\begin{proof}
The interior maximizer solves $F(n,W) = \alpha(n) - \widehat C\bigl(g(n,W)\bigr) = 0$ with
$g(n,W) = W - q - c - (n-1)m$, so $\partial g/\partial W = 1$ and $\partial g/\partial n =
-m$. Hence $\partial F/\partial W = -\widehat C'(g)$ and $\partial F/\partial n =
\alpha'(n) + m\widehat C'(g)$. By A.1 Step 2, $\alpha$ is strictly decreasing, so
$\alpha' < 0$; by hypothesis $\widehat C' \le 0$, so $\partial F/\partial n < 0$ and the
implicit function theorem applies. Then
\[
  \frac{dn^\ast}{dW}
  = -\frac{\partial F/\partial W}{\partial F/\partial n}
  = \frac{\widehat C'(g)}{\alpha'(n^\ast) + m\widehat C'(g)} .
\]
Numerator and denominator are both non-positive, giving $dn^\ast/dW \ge 0$. Dividing
through by $\widehat C'$ gives the stated form, and $\alpha'/\widehat C' \ge 0$ gives the
upper bound $1/m$. Since $n_{\max}(W) = 1 + (W-q-c)/m$, that bound is exactly
$dn_{\max}/dW$.
\end{proof}

\begin{corollary}[The large-window limit]
\label{cor:largew}
As $W \to \infty$, thinking room stops binding, $\delta(n) \to (1+c_w)p_{\max}-c_w$ is
constant in $n$, and $S(n)^2$ becomes monotone increasing. For $\rho > 0$ it saturates at
$\bigl((1+c_w)p_{\max}-c_w\bigr)^2/\rho$ as $n_{\mathrm{eff}} \to 1/\rho$, the classical
correlation ceiling. At $\rho = 0$ the errors are independent, $n_{\mathrm{eff}} = n$ is
unbounded, and no ceiling exists. An interior best size therefore exists only while the
window is scarce, and once it is not, the classical picture returns.
\end{corollary}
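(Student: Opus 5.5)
Fix $n\ge1$ and $\rho\in(0,1)$, and let $W\to\infty$. The plan is to show that $\delta(n)$ converges, and that this convergence is uniform on any bounded range of $n$. The claims about $S^2$ then follow from the explicit form $S(n)^2=n_{\mathrm{eff}}(n)\,\delta(n)^2$ with $n_{\mathrm{eff}}(n)=n/(1+\rho(n-1))$.

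\textbf{Convergence of the edge.} Since $g(n,W)=W-q-c-(n-1)m$, we have $g\to\infty$ as $W\to\infty$, uniformly for $n\in[1,N]$ with $N$ fixed. Assumption~\ref{assump:admissible} gives $\phi(\infty)=1$, so $\phi(g(n,W))\to1$ uniformly on $[1,N]$. Substituting into Eq.~\ref{eq:delta} yields
\[
\delta(n)\;\to\;\delta_\infty\equiv(1+c_w)p_{\max}-c_w ,
\]
which is constant in $n$. For the "monotone increasing" claim to carry content we need $\delta_\infty>0$, i.e.\ $p_{\max}>\tau$. This is implied by the standing hypothesis $\delta(1)>0$ of Prop.~\ref{prop:single-peak}, because $\phi\le1$ gives $\delta_\infty\ge\delta(1)>0$.

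\textbf{The limiting objective.} The limit is $S_\infty(n)^2=\delta_\infty^2\,n_{\mathrm{eff}}(n)$. By the computation in Appendix~\ref{app:single-peak}, Step~2, $(\log n_{\mathrm{eff}})'=\alpha(n)=(1-\rho)/\bigl(n(1+\rho(n-1))\bigr)>0$, so $S_\infty^2$ is strictly increasing in $n$. For $\rho>0$, $n_{\mathrm{eff}}(n)\to1/\rho$, giving the ceiling $\delta_\infty^2/\rho$. For $\rho=0$, $n_{\mathrm{eff}}=n$, so $S_\infty^2=\delta_\infty^2 n$ is unbounded and there is no ceiling.

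\textbf{Main obstacle: the interior optimum does not survive.} The delicate part is the final sentence, because "$S^2$ becomes monotone" is a pointwise limit while $n^\ast(W)$ lives on a domain $[1,n_0(W)]$ that grows with $W$. I would argue through the first-order condition $\alpha(n)=\widehat C(g(n,W))$ of Appendix~\ref{app:largew}.

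For fixed $n$, $\widehat C(g)=2m(1+c_w)p_{\max}\phi'(g)/\delta$. Its denominator tends to $\delta_\infty>0$. For the numerator, $\phi'\ge0$ is integrable with $\int_0^\infty\phi'=1$, so $\phi'(g)\to0$ along the relevant range. This step needs either monotonicity of $\phi'$ in the tail or a direct check for the Hill form, where $\phi'(g)=O(g^{-\kappa-1})$. Given that, $\widehat C(g(n,W))\to0$ uniformly on $[1,N]$, while $\alpha$ is bounded below by $\alpha(N)>0$ there. Hence for large $W$ there is no stationary point in $[1,N]$, so $n^\ast(W)>N$. Since $N$ is arbitrary, $n^\ast(W)\to\infty$, consistent with the $1/m$ growth bound of Cor.~\ref{cor:nstar}.

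Finally, on any fixed range the maximizer is pushed to the right edge. Combined with the uniform convergence $S^2\to\delta_\infty^2n_{\mathrm{eff}}$, this shows that $\max S^2$ approaches the correlation ceiling $\delta_\infty^2/\rho$. An interior optimum at a finite, budget-determined size therefore exists only when $W$ is scarce.
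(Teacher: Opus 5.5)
Your proposal is correct and follows the same route as the paper: $\phi(g)\to1$ drives $\widehat C\to0$ while $\alpha>0$, so the first-order condition $\alpha=\widehat C$ fails and $S^2\to\delta_\infty^2\,n_{\mathrm{eff}}$, which is increasing with ceiling $\delta_\infty^2/\rho$ for $\rho>0$. Your treatment is more careful than the paper's two-line proof, which asserts $\delta'\to0$ and monotonicity of $S^2$ on all of $\Fcal$ without justification; you correctly note that at every finite $W$ one still has $S(n_0(W))^2=0$, so the maximizer is not destroyed but escapes every bounded range $[1,N]$, and that $\phi'\to0$ needs tail regularity beyond Assumption~\ref{assump:admissible} (automatic for the Hill form).
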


\begin{proof}
$g(n) \to \infty$ forces $\phi\bigl(g(n)\bigr) \to 1$, so $\delta' \to 0$ and
$\widehat C \to 0$. With $\alpha > 0$ everywhere, $\log M$ has no interior stationary point
and $M$ is monotone increasing on $F$, saturating at the stated ceiling as
$n_{\mathrm{eff}} \to 1/\rho$ when $\rho > 0$.
\end{proof}

\subsection{Self-consistency weakly dominates debate (Prop.~\ref{prop:illusion})}

\label{app:illusion}

$g_{\mathrm{SC}}=W-q-c\ge g(n)$ since $(n-1)m\ge0$; $\phi$ increasing gives
$\phi(g_{\mathrm{SC}})\ge\phi(g(n))$; multiplying by $p_{\max}$ gives the ability bound
$\betaeff(g_{\mathrm{SC}})\ge\betaeff(g(n))$, with equality iff $n{=}1$ (since $(n-1)m=0$ only at
$n{=}1$). The margin $\delta = (1+c_w)\beta_{\mathrm{eff}} - c_w$ is strictly increasing in
$\beta_{\mathrm{eff}}$ for $c_w \ge 0$, and $\Phi$ is increasing, so the head of Eq.~9 is
increasing in $\beta_{\mathrm{eff}}$ at fixed $(n,\rho,\sigma,c_w)$. The ability bound
therefore carries over to $A_{\mathrm{SC}}(n,W) \ge A^0_{\mathrm{deb}}(n,W)$. \hfill$\square$

% The bound is on the static, round-$0$ ability, so the $R$-round revision of
% \S\ref{sec:predict} can lift debate above it only where $g(n)\approx g_{\mathrm{SC}}$, which is exactly
% where the debate wins we observe are within seed noise (\S\ref{sec:main}). 
Note that $\Delta p_{\max}$ is a gain in the \emph{ceiling} from conditioning on peer content and not in generation length, so the
left side of Eq.~\ref{eq:info-vs-cost} must be bought outright; we report it in Appendix~\ref{app:scgap}.

\section{Implementation Details}
\label{app:protocol}

% ---

\begin{table}[t]
\centering
\caption{Benchmark details. All benchmarks use 500 samples, including the first 40\% used for calibration and the remaining 60\% for evaluation.}
\label{tab:datasets}
\small
\begin{tabular}{llrrl}
\toprule
\textbf{Benchmark} & \textbf{Task} & \textbf{Sample size} & \textbf{HuggingFace path} \\
\midrule
GSM8K & Grade-school math (open) & 500 & \texttt{openai/gsm8k} \\
MATH500 & Competition math (open) & 500 & \texttt{HuggingFaceH4/MATH-500} \\
MMLU-Pro & Multi-domain QA (10-way) & 500 & \texttt{TIGER-Lab/MMLU-Pro} \\
CSQA & Commonsense QA (5-way) & 500 & \texttt{tau/commonsense\_qa} \\
\bottomrule
\end{tabular}

\end{table}

% ---
\begin{table}[t]
\centering
\caption{Operating grid. Generation room is the leftover
$g=\mathrm{clip}(W-H-c,g_{\min},g_{\mathrm{cap}})$, so a larger group at fixed $W$ thinks less. The probe sweep runs to 4096. Deployment clips $g$ to $g_{\mathrm{cap}}= 2048$, and the out-of-range test in Table~\ref{tab:forms_mean} fits on caps up to $2048$ and predicts at $4096$. Table~\ref{tab:baseline}'s long-CoT comparison raises $g_{\mathrm{cap}}$ to $8192$ to give the single agent every advantage.}
\label{tab:protocol}
\setlength{\tabcolsep}{6pt}
\small
\begin{tabular}{ll}
\toprule
Parameter & Value \\
\midrule
Window $W$ (tokens)        & $\{1.2,1.6,2.0,2.5,3.2,4.5,6.5\}$k \\
Agents $n$                 & $\{1,\dots,8\}$ \\
Question length $q$ (mean, tokens) & GSM8K $128$, MATH500 $146$, MMLU-Pro $258$, CSQA $105$ \\
Per-message length $m$        & $320$ \\
Overhead $c$               & $64$ \\
Revision rounds $R$        & $1$ (so $R{+}1{=}2$ generation passes) \\
Generation room $g$        & $\mathrm{clip}(W-H-c,\,g_{\min},\,g_{\mathrm{cap}})$ \\
$g_{\min},\,g_{\mathrm{cap}}$      & $64,\ 2048$ \\
Probe caps $g$             & $\{64, 128, 256, 512, 1024, 2048, 4096\}$\\
Seeds                      & $\{42,43,44\}$ (pooled) \\
\bottomrule
\end{tabular}
\end{table}

We provide the configuration details for all hyperparameters in Table~\ref{tab:protocol}. All experiments are conducted in a single H100 (80G).

\subsection{Implementing TB-MAD}\label{app:our-implement}

\paragraph{Fitting $\betaeff$.} The three parameters $(p_{\max},g_0,\kappa)$ are fitted to the single-agent \emph{thinking} sweep by bound-constrained nonlinear least squares. Let ${(g_i,A_i,N_i)}$ denote the per-level mean accuracies $A_i$ at reasoning budget $g_i$, computed from $N_i$ samples. We minimize the sample-weighted sum of squared residuals:
\begin{align}
\label{eq:beff-fit}
(\hat p_{\max},\hat g_0,\hat\kappa)=\arg\min_{p_{\max},g_0,\kappa}\sum_i N_i\bigl(A_i-\betaeff(g_i)\bigr)^2,
\end{align}
where $\betaeff(g)=p_{\max}g^{\kappa}/(g^{\kappa}+g_0^{\kappa})$ (Eq.~\ref{eq:beff}), with $p_{\max}\in(0,1]$ and $g_0,\kappa>0$. Table~\ref{tab:e1fit} reports the fits with three goodness-of-fit measures across all settings. $R^2$ alone is uninformative, reaching $R^2\ge0.993$ in every cell; we therefore use RMSE to select between candidate forms. We report a 95\% CI on $\hat p_{\max}$ (Table~\ref{tab:e1fit}) from a percentile bootstrap: resample the sweep's per-level records with replacement, recompute pooled accuracy at each level, and refit, over $500$ refits per cell. Several sweep points sit numerically above the point estimate $\hat p_{\max}$, but none exceed the CI's upper bound in any of the twenty cells, so every such crossing is within sampling noise and no cell needs refitting.

\begin{table}[t]
\centering
\caption{Single-agent Hill fits on the cap sweep, all four tasks. $p_{\max}$, its 95\% bootstrap CI (resampling sweep records, $500$ refits), RMSE and LOO are on the accuracy scale, reported as percentages; LOO is the held-out RMSE, leaving out one swept cap at a time.}
\label{tab:e1fit}
\setlength{\tabcolsep}{6pt}
\small
\begin{tabular}{llccccccc}
\toprule
Model & Task & $p_{\max}$ & 95\% CI & $g_0$ & $\kappa$ & RMSE & LOO & $R^2$ \\
\midrule
\multirow{4}{*}{Qwen2.5-7B}
 & GSM8K & 87.4 & [86.8, 88.0] & 256 & 6.9 & 0.17 & 1.89 & 1.000 \\
 & MATH500 & 67.6 & [66.5, 68.9] & 433 & 3.7 & 0.21 & 0.27 & 1.000 \\
 & MMLU-Pro & 49.9 & [48.8, 51.1] & 437 & 3.7 & 0.37 & 1.88 & 1.000 \\
 & CSQA & 80.5 & [79.7, 81.4] & 246 & 9.8 & 0.09 & 8.32 & 1.000 \\
\midrule
\multirow{4}{*}{Falcon3-7B}
 & GSM8K & 81.0 & [80.4, 81.7] & 244 & 4.7 & 0.37 & 2.43 & 1.000 \\
 & MATH500 & 61.0 & [59.7, 62.4] & 475 & 3.5 & 0.49 & 1.68 & 1.000 \\
 & MMLU-Pro & 47.7 & [46.6, 49.0] & 414 & 2.8 & 0.65 & 1.17 & 0.999 \\
 & CSQA & 74.9 & [74.0, 75.8] & 168 & 3.0 & 1.76 & 4.03 & 0.996 \\
\midrule
\multirow{4}{*}{Llama-3.2-3B}
 & GSM8K & 69.0 & [68.2, 69.8] & 171 & 5.5 & 0.15 & 3.48 & 1.000 \\
 & MATH500 & 34.4 & [33.4, 35.6] & 222 & 3.5 & 0.14 & 0.34 & 1.000 \\
 & MMLU-Pro & 32.8 & [31.9, 33.8] & 325 & 3.4 & 0.80 & 2.10 & 0.997 \\
 & CSQA & 67.8 & [67.0, 68.7] & 201 & 6.2 & 0.17 & 4.53 & 1.000 \\
\midrule
\multirow{4}{*}{Gemma-2-9B}
 & GSM8K & 81.3 & [80.5, 82.1] & 144 & 5.0 & 0.36 & 3.77 & 1.000 \\
 & MATH500 & 40.2 & [39.1, 41.3] & 226 & 3.7 & 0.07 & 0.19 & 1.000 \\
 & MMLU-Pro & 50.4 & [49.2, 51.5] & 274 & 3.3 & 1.77 & 5.43 & 0.992 \\
 & CSQA & 77.9 & [77.0, 78.7] & 137 & 5.4 & 1.66 & 13.15 & 0.996 \\
\midrule
\multirow{4}{*}{Gemma-3-12B}
 & GSM8K & 92.9 & [92.2, 93.4] & 184 & 4.2 & 0.35 & 1.65 & 1.000 \\
 & MATH500 & 75.3 & [74.0, 77.1] & 459 & 2.4 & 0.41 & 0.87 & 1.000 \\
 & MMLU-Pro & 60.4 & [59.0, 61.9] & 478 & 2.9 & 1.02 & 2.55 & 0.998 \\
 & CSQA & 80.5 & [79.6, 81.4] & 176 & 8.6 & 0.74 & 12.41 & 1.000 \\
\bottomrule
\end{tabular}
\end{table}

\begin{table}[t]
\centering
\caption{Fitted Hill midpoint $\hat g_0$ against the measured median completion length, per model and task. Median lengths are read off the largest generation cap ($4096$ tokens), where under $1\%$ of samples are truncated. \emph{ratio} is $\hat g_0$ over the median.}
\label{tab:g0length}
\begin{tabular}{lcccccc}
\toprule
 & \multicolumn{3}{c}{GSM8K} & \multicolumn{3}{c}{MMLU-Pro} \\
\cmidrule(lr){2-4}\cmidrule(lr){5-7}
Model & $\hat g_0$ & median len. & ratio & $\hat g_0$ & median len. & ratio \\
\midrule
Qwen2.5-7B & 256 & 264 & 0.97 & 437 & 490 & 0.89 \\
Falcon3-7B & 244 & 254 & 0.96 & 414 & 484 & 0.85 \\
\midrule
\textbf{Mean ratio} & \multicolumn{3}{c}{0.96} & \multicolumn{3}{c}{0.87} \\
\bottomrule
\end{tabular}

\end{table}

\paragraph{Pooling the seeds.} Every $\betaeff$ is fitted on the pools from three different seeds (42, 43, 44), because $\phi$ is nonlinear in $(p_{\max},g_0,\kappa)$: averaging three per-seed parameter vectors gives a curve that is not the average of the three fitted curves. 
Refitting Eq.~\ref{eq:beff} separately for each seed changes $n^\ast$ by only 0.02 agents on average, while the regret of each seed-specific $n^\ast$ relative to the per-cell oracle is identical to two decimal places in 18 of 20 cells. The question set is fixed across seeds, so this is generation-sampling noise only.

% \paragraph{Paired bootstrap.} We resample \emph{questions} as clusters ($2000$ resamples), carrying all of a question's rows together, which propagates both the question-sampling noise and the seed-to-seed noise that a plain standard deviation over three seeds would report on its own, and we take a $95\%$ interval on the per-question difference.

\paragraph{Does $g_0$ correspond to a real length scale?} On Qwen2.5-7B and Falcon3-7B, the fitted Hill $g_0$ lands within $4\%$ of the median completion length on GSM8K and within $15\%$ on MMLU-Pro (Table~\ref{tab:g0length}).

% \paragraph{Windowing and banking.} $m{=}320$ is a budget-side design constant, one number held fixed % across models and tasks, not a length at which messages fit: pooled solution lengths are % $p_{50}{=}298$, $p_{75}{=}492$, and the fraction exceeding $m$ runs $14\%$ on GSM8K against $68\%$ on % MATH500 and $76\%$ on MMLU-Pro. The cap is enforced by truncation, and the budget is charged on the % \emph{measured} prompt rather than the idealized $q+(n-1)m$ of Eq.~\ref{eq:budget}, so $g=W-H-c$ % conserves the window whatever a message contains.

\begin{table}[t]
\centering
\caption{Seed-to-seed spread of the ability fit, of the resulting $n^\ast$, and of the regret that spread costs, as standard deviations over three per-seed refits. $\hat p_{\max}$ and regret are in percentage points, $\hat g_0$ in tokens, $\hat\kappa$ and $n^\ast$ dimensionless. Each per-seed fit uses that seed's full thinking sweep and its regret is scored against the per-cell oracle on the full bank. \textbf{Mean} regret is scored from a fit refit on a single held-out seed at a time, not the pooled (all-seed) fit the rest of the paper uses, so it is not directly comparable to the pooled-fit regret in Table~\ref{tab:cwsigma}; the near-zero \textbf{s.d.} shows that which seed is dropped does not matter, not that single-seed refitting matches the pooled result.}
% \caption{Seed-to-seed spread of the ability fit, of the resulting $n^\ast$, and of the regret that spread costs, as standard deviations over three per-seed refits. $\hat p_{\max}$ and regret are in percentage points, $\hat g_0$ in tokens, $\hat\kappa$ and $n^\ast$ dimensionless. Each per-seed fit uses that seed's full thinking sweep and its regret is scored against the per-cell oracle on the full bank.}
\label{tab:seedspread}\small
\resizebox{\textwidth}{!}{\begin{tabular}{ll ccc c cc}
\toprule
& & \multicolumn{3}{c}{s.d.\ over the three per-seed fits} & s.d.\ of $n^\ast$ & \multicolumn{2}{c}{regret (pp)} \\
\cmidrule(lr){3-5}\cmidrule(lr){6-6}\cmidrule(lr){7-8}
 \textbf{Model} & \textbf{Task} & \textbf{$\hat p_{\max}$} & \textbf{$\hat g_0$} & \textbf{$\hat\kappa$} & \textbf{mean} & \textbf{mean} & \textbf{s.d.} \\
\midrule
\multirow{4}{*}{Qwen2.5-7B} & GSM8K & 0.02 & 0.1 & 0.30 & 0.026 & 2.01 & 0.00 \\
 & MATH500 & 0.31 & 1.1 & 0.05 & 0.020 & 0.78 & 0.00 \\
 & MMLU-Pro & 0.22 & 3.7 & 0.05 & 0.023 & 1.09 & 0.00 \\
 & CSQA & 0.07 & 1.7 & 1.66 & 0.064 & 0.70 & 0.00 \\
\midrule
\multirow{4}{*}{Falcon3-7B} & GSM8K & 0.22 & 0.7 & 0.07 & 0.024 & 0.20 & 0.00 \\
 & MATH500 & 0.09 & 2.7 & 0.01 & 0.016 & 1.31 & 0.00 \\
 & MMLU-Pro & 0.56 & 5.4 & 0.04 & 0.004 & 1.20 & 0.00 \\
 & CSQA & 0.15 & 0.4 & 0.08 & 0.033 & 1.11 & 0.00 \\
\midrule
\multirow{4}{*}{Llama3.2-3B} & GSM8K & 0.22 & 1.1 & 0.14 & 0.016 & 0.11 & 0.00 \\
 & MATH500 & 0.19 & 1.8 & 0.14 & 0.064 & 2.08 & 0.75 \\
 & MMLU-Pro & 0.24 & 0.7 & 0.10 & 0.000 & 4.40 & 0.00 \\
 & CSQA & 0.10 & 0.9 & 0.09 & 0.008 & 0.52 & 0.00 \\
\midrule
\multirow{4}{*}{Gemma2-9B} & GSM8K & 0.05 & 0.7 & 0.07 & 0.008 & 0.20 & 0.00 \\
 & MATH500 & 0.12 & 2.6 & 0.06 & 0.044 & 0.81 & 0.00 \\
 & MMLU-Pro & 0.23 & 2.0 & 0.02 & 0.016 & 0.80 & 0.00 \\
 & CSQA & 0.13 & 0.2 & 0.17 & 0.016 & 0.51 & 0.00 \\
\midrule
\multirow{4}{*}{Gemma3-12B} & GSM8K & 0.12 & 0.6 & 0.04 & 0.020 & 0.30 & 0.01 \\
 & MATH500 & 0.04 & 2.3 & 0.02 & 0.008 & 0.33 & 0.00 \\
 & MMLU-Pro & 0.17 & 0.7 & 0.03 & 0.004 & 1.73 & 0.00 \\
 & CSQA & 0.03 & 1.1 & 0.15 & 0.004 & 6.34 & 0.00 \\
\midrule
\textbf{Mean} & & 0.16 & 1.5 & 0.16 & 0.021 & 1.33 & -- \\
\bottomrule
\end{tabular}
}
\end{table}

\paragraph{Estimating $\hat\rho$, and its invariance}\label{app:rhoest}
From the initial round's answers, we compute pairwise agreement of \emph{errors} across agents (same wrong answer) and map it to the equicorrelation $\rho$ in Eq.~\ref{eq:neff}. Eq.~\ref{eq:neff} assumes that $\rho$ does not vary with either $n$ or $W$. We verify this empirically (Table~\ref{tab:rhow}): (1) \emph{Constant in $n$:} re-estimating $\hat\rho$ at each group size changes it by at most $\mathrm{range}_n\le0.05$, so $\rho$ does not grow with the group; and (2) \emph{Constant in $W$:} re-estimating $\hat\rho$ at each window changes it by at most $\mathrm{range}_W\le0.05$ across all twenty model-task cells. Error correlation appears to be a property of the model and task, rather than of the budget on these cells.

\begin{table}[t]
\centering
\caption{Round-0 error correlation $\hat\rho$ re-estimated at each window. range$_W$ and range$_n$ are $\max\hat\rho-\min\hat\rho$ over the swept windows and over the group sizes.}
\label{tab:rhow}\small
\begin{tabular}{llccccccccc}
\toprule
\multicolumn{2}{c}{} & \multicolumn{7}{c}{$\hat\rho$ at $W$ ($\times10^3$ tokens)} & & \\
\cmidrule(lr){3-9}
Model & Task & 1.2 & 1.6 & 2.0 & 2.5 & 3.2 & 4.5 & 6.5 & range$_W$ & range$_n$ \\
\midrule
\multirow{4}{*}{Qwen2.5-7B} & GSM8K & 0.36 & 0.35 & 0.36 & 0.35 & 0.34 & 0.35 & 0.35 & 0.01 & 0.02 \\
 & MATH500 & 0.71 & 0.69 & 0.68 & 0.68 & 0.68 & 0.68 & 0.68 & 0.04 & 0.01 \\
 & MMLU-Pro & 0.60 & 0.57 & 0.57 & 0.56 & 0.56 & 0.56 & 0.56 & 0.04 & 0.02 \\
 & CSQA & 0.58 & 0.58 & 0.58 & 0.58 & 0.58 & 0.58 & 0.58 & 0.01 & 0.03 \\
\midrule
\multirow{4}{*}{Falcon3-7B} & GSM8K & 0.41 & 0.41 & 0.40 & 0.40 & 0.41 & 0.41 & 0.40 & 0.01 & 0.01 \\
 & MATH500 & 0.68 & 0.65 & 0.64 & 0.64 & 0.65 & 0.64 & 0.64 & 0.05 & 0.01 \\
 & MMLU-Pro & 0.59 & 0.56 & 0.57 & 0.56 & 0.56 & 0.56 & 0.55 & 0.04 & 0.01 \\
 & CSQA & 0.51 & 0.50 & 0.50 & 0.50 & 0.50 & 0.50 & 0.50 & 0.01 & 0.02 \\
\midrule
\multirow{4}{*}{Llama3.2-3B} & GSM8K & 0.41 & 0.41 & 0.40 & 0.41 & 0.42 & 0.42 & 0.42 & 0.01 & 0.02 \\
 & MATH500 & 0.52 & 0.52 & 0.52 & 0.53 & 0.52 & 0.52 & 0.52 & 0.01 & 0.02 \\
 & MMLU-Pro & 0.44 & 0.43 & 0.43 & 0.43 & 0.42 & 0.43 & 0.44 & 0.02 & 0.00 \\
 & CSQA & 0.42 & 0.42 & 0.42 & 0.42 & 0.42 & 0.42 & 0.43 & 0.01 & 0.02 \\
\midrule
\multirow{4}{*}{Gemma2-9B} & GSM8K & 0.57 & 0.58 & 0.58 & 0.58 & 0.57 & 0.57 & 0.57 & 0.01 & 0.01 \\
 & MATH500 & 0.64 & 0.64 & 0.65 & 0.64 & 0.64 & 0.64 & 0.65 & 0.01 & 0.02 \\
 & MMLU-Pro & 0.66 & 0.66 & 0.66 & 0.66 & 0.67 & 0.66 & 0.66 & 0.01 & 0.01 \\
 & CSQA & 0.57 & 0.59 & 0.58 & 0.58 & 0.57 & 0.57 & 0.58 & 0.02 & 0.02 \\
\midrule
\multirow{4}{*}{Gemma3-12B} & GSM8K & 0.65 & 0.65 & 0.66 & 0.66 & 0.65 & 0.66 & 0.65 & 0.01 & 0.04 \\
 & MATH500 & 0.77 & 0.80 & 0.78 & 0.77 & 0.77 & 0.78 & 0.77 & 0.02 & 0.02 \\
 & MMLU-Pro & 0.65 & 0.64 & 0.64 & 0.63 & 0.63 & 0.63 & 0.63 & 0.02 & 0.01 \\
 & CSQA & 0.55 & 0.54 & 0.55 & 0.55 & 0.55 & 0.55 & 0.54 & 0.01 & 0.04 \\
\bottomrule
\end{tabular}

\end{table}

\subsection{Implementing Baselines}
\label{app:baseline-implement}

The five algorithms are collected in Alg.~\ref{alg:predict}--\ref{alg:baselinemix}; $\mathcal{D}$ is
the calibration sample and $g(n,W)$ the room the window leaves.

\emph{Calibrated on measured debate cells:}
\begin{itemize}[label=$\bullet$]\itemsep2pt
\item \textbf{Empirical argmax} measures every $(n,W)$ cell on $\mathcal{D}$ and selects the size with
the highest $\hat A(n,W)$ (Alg.~\ref{alg:baselinegrid}).
\item \textbf{Successive halving}~\citep{jamieson2016halving} evaluates all sizes on a small sample,
keeps the better half, and doubles the sample until one size remains or the budget is exhausted
(Alg.~\ref{alg:baselinehalving}).
\item \textbf{Quadratic surface} fits a least-squares quadratic in $\{1,n,W,n^2,W^2,nW\}$ to the
measured cells and selects its best predicted size per window (Alg.~\ref{alg:baselinegrid}).
\end{itemize}
\emph{Calibrated on single-agent probes only, as ours is:}
\begin{itemize}[label=$\bullet$]\itemsep2pt
\item \textbf{Learned $\betaeff$} uses our aggregator with a gradient-boosted-tree estimate
$\hat\beta_{\mathrm{gbt}}$ fitted to the $|\mathcal{G}|{=}7$ probe caps, with no parametric shape
(Alg.~\ref{alg:baselinesa}).
\item \textbf{Star exponent}~\citep{liu2026phase} uses our aggregator, $\hat\rho$, and budget model,
replacing $\phi$ with $p_{\max}\bigl(1-(1+g/g_0)^{-\kappa}\bigr)$, the \textsc{pow} form of
Table~\ref{tab:forms} (Alg.~\ref{alg:baselinesa}).
\item \textbf{Difficulty mixture}~\citep{chen2024morecalls} estimates per-question accuracy $p_i(g)$
from the probe sweep and simulates votes. The original fixes the per-call allowance, so $p_i$ cannot
vary with $n$; we instead evaluate $p_i$ at the remaining room $g(n,W)$, interpolated in $\log g$, and
add our correlation $\hat\rho$ and tie penalty $c_w$ (Alg.~\ref{alg:baselinemix}).
\end{itemize}
\emph{No calibration:} \textbf{naive $n{=}1$} and \textbf{naive $n{=}n_{\max}$} always use one agent
or the largest group; neither reads data, so neither has an algorithm.

\emph{Ours:} \textbf{TB-MAD} fits $(p_{\max},g_0,\kappa)$ on the probe sweep and evaluates
$p_{\max}\phi(g;g_0,\kappa)$ at the remaining room for each size (Alg.~\ref{alg:predict}).

\subsection{Calibration Cost}
\label{app:calibration}

TB-MAD, Star exponent, and learned $\betaeff$ estimate $\rho$ and $c_w$ from data rather than treating them as free constants. Each calibration question requires at least three independent round-$0$ answers, since no peer history is available yet and three samples are the minimum needed for the correlation estimate. We charge each the token cost of measuring them, $0.1\times$ a single probe question (Table~\ref{tab:calibtokens}, so all baselines are matched to the same calibration spend throughout (Figure~\ref{fig:kscaling}, right panel; Appendix Table~\ref{tab:rate}).
Table~\ref{tab:calibtime} additionally reports the wall-clock cost of calibration, comparing our single-agent probe with the full $(n,W)$ grid used by the Empirical Argmax debate baseline.

\begin{table}[t]
\centering
\caption{Fitting performance of thinking gain $\phi$, over five models and four datasets, per cell in Table~\ref{tab:e1fit}, Appendix~\ref{app:fitting}. RMSE is on the accuracy scale in percent; leave-one-out holds out one swept cap
at a time and out-of-range predicts at $4096$ from caps up to $2048$. Adopted form in bold.}
\label{tab:forms_mean}
\begin{tabular}{lccccc}
\toprule
Selection criterion & \textbf{\textsc{hill}} & \textsc{weib} & 
% \textsc{be} & 
\textsc{pow} & \textsc{burr} & iso \\
\midrule
Leave-one-out RMSE & \textbf{3.61} & 5.21 
% & 11.41 
& 11.74 & 4.49 & 10.02 \\
Out-of-range RMSE & 0.38 & 0.75 & 
% 5.69 & 
6.85 & 0.33 & \textbf{0.31} \\
\midrule
\textbf{Total} & \textbf{3.99} & 5.96 & 
% 17.10 & 
18.59 & 4.82 & 10.33 \\
\bottomrule
\end{tabular}

\end{table}
\label{app:fitting}

\begin{table}[t]
\centering
\caption{Calibration wall-clock in minutes, $200$ questions. Probe is our single-agent sweep; debate (empirical argmax) is the
full $(n,W)$ grid. \emph{ratio} is debate over probe per dataset, then averaged over the four.}
\label{tab:calibtime}
\resizebox{\textwidth}{!}{%
\begin{tabular}{lrrrrrrrrr}
\toprule
& \multicolumn{2}{c}{GSM8K} & \multicolumn{2}{c}{MATH500} & \multicolumn{2}{c}{CSQA} & \multicolumn{2}{c}{MMLU-Pro} & \\
\cmidrule(lr){2-3}\cmidrule(lr){4-5}\cmidrule(lr){6-7}\cmidrule(lr){8-9}
Model & probe & debate & probe & debate & probe & debate & probe & debate & ratio \\
\midrule
Llama3.2-3B & 6 & 21 & 11 & 34 & 6 & 38 & 11 & 71 & 5$\times$ \\
Qwen2.5-7B & 8 & 25 & 13 & 43 & 7 & 55 & 13 & 93 & 5$\times$ \\
Falcon3-7B & 8 & 31 & 17 & 54 & 6 & 57 & 14 & 105 & 6$\times$ \\
Gemma2-9B & 7 & 29 & 13 & 62 & 6 & 68 & 13 & 147 & 8$\times$ \\
Gemma3-12B & 12 & 62 & 34 & 146 & 10 & 171 & 31 & 336 & 9$\times$ \\
\bottomrule
\end{tabular}}

\end{table}

\begin{table}[htbp]
\centering
\caption{Input and output tokens each calibration rule spends, as a multiple of the single-agent
probe sweep, summed over the twenty cells at the full $200$-question pool. The probe (except Difficulty Mixture) multiple is
$1\times$ by definition; debate re-reads every peer message, so its input multiple is the largest.}
\label{tab:calibtokens}
\begin{tabular}{lrr}
\toprule
\textbf{Calibration rule} & \textbf{Input} & \textbf{Output} \\
\midrule
Empirical Argmax & $49\times$ & $11\times$ \\
Quadratic Surface & $49\times$ & $11\times$ \\
Successive Halving & $33\times$ & $9\times$ \\
Difficulty Mixture & $0.9\times$ & $0.8\times$ \\
Round-0 probe ($\rho$, $c_w$) & $0.1\times$ & $0.1\times$ \\
\midrule
Probe sweep (TB-MAD, learned $\betaeff$, Star Exponent) & $1\times$ & $1\times$ \\
\bottomrule
\end{tabular}

\end{table}
% ---
\section{Generalization to heterogeneous settings}
\label{app:hetero}

We extend our method to heterogeneous model settings, where agents have different abilities $\betaeff$. To isolate the effect of model heterogeneity, we consider a \emph{uniform allocation} setting in which the $K$ models are assigned the same number of agents, while every agent keeps the same room $g(n)$ and each model $i$ has fixed parameters $(p_{\max,i},g_{0,i},\kappa_i)$. The goal is the optimal number of agents across the $K$ models under a total budget of $n$ agents, with $n/K$ agents assigned to each model. 

\subsection{Extended method}
\label{app:hetero-method}

\paragraph{Per-model edges and candidate subsets.}
Under uniform allocation, every agent receives
\begin{align}
g(n)=\mathrm{clip}\bigl(W-q-c-(n-1)m,\;g_{\min},\;g_{\mathrm{cap}}\bigr),
\end{align}
so heterogeneity enters through the model-specific edge
\begin{align}
\label{eq:delta-j-hetero}
\mathbb{E}[X_j]=\delta_j(n)=(1+c_w)\,p_{\max,j}\,\phi_j\bigl(g(n)\bigr)-c_w .
\end{align}
A negative edge indicates a model that systematically points toward the wrong answer, so mixing it
with positive-edge models can dilute the group edge. We therefore allow the $n$ agents to draw either from all $K$ models or from any positive-edge singleton when the fitted edges have mixed signs:
\begin{align}
\label{eq:bardelta}
\bar\delta_S(n)=\frac{1}{|S|}\sum_{i\in S}\delta_i(n),
\end{align}
with candidate set
\begin{align}
\label{eq:hetero-candidates}
\mathcal{S}(n)=
\begin{cases}
\{\{1,\dots,K\}\}\cup\{\{i\}:\delta_i(n)>0\},
& \min_i\delta_i(n)\le 0<\max_i\delta_i(n),\\[2pt]
\{\{1,\dots,K\}\},
& \text{otherwise}.
\end{cases}
\end{align}
Thus $|S|=1$ is simply a homogeneous group from model $i$, whereas the full set remains available at every $n$.

\paragraph{Correlation becomes block structure.}
Let $\hat\rho_w$ and $\hat\rho_b$ denote round-$0$ error correlations for agents backed by the same
and different models, respectively. For a subset $S$ with equal blocks of $n/|S|$ agents,
\begin{align}
\label{eq:neff-hetero}
n_{\mathrm{eff}}(n,|S|)
=\frac{n}{1+\hat\rho_w(n/|S|-1)+\hat\rho_b(n-n/|S|)}.
\end{align}
This reduces to the homogeneous $n/(1+\hat\rho_w(n-1))$ when $|S|=1$ and, as $n\to\infty$,
approaches $|S|/(\hat\rho_w+(|S|-1)\hat\rho_b)$. Hence mixing raises the asymptotic ceiling whenever
$\hat\rho_b<\hat\rho_w$, i.e., cross-model errors are less correlated.

\paragraph{Accuracy and operating point.}
With common variance $\sigma^2$, $\operatorname{Var}(\bar X_n)=\sigma^2/n_{\mathrm{eff}}(n,|S|)$,
giving
\begin{align}
\label{eq:accuracy-hetero}
A(n,S)
=\Phi\!\left(\bar\delta_S(n)\sqrt{n_{\mathrm{eff}}(n,|S|)}\big/\sigma\right).
\end{align}
Since $\Phi$ is monotone, maximizing accuracy is equivalent to
\begin{align}
\label{eq:hetero-argmax}
\mathcal{V}(n,S)
=\operatorname{sign}\bigl(\bar\delta_S(n)\bigr)\,n_{\mathrm{eff}}(n,|S|)\,\bar\delta_S(n)^2 .
\end{align}
We preserve the sign of the margin to avoid amplifying a negative-edge subset, consistent with the treatment of $S(n)$ in $\S$~\ref{sec:synergy}.
The edges are obtained from single-agent probes, while only $\hat\rho_w,\hat\rho_b$ require round-$0$ measurements (independent answers before any exchange, not the full multi-round debate); calibrating all $K$ models therefore costs roughly $K$ times a set-blind probe rule. As in the homogeneous setting (\S\ref{sec:main}), we charge this round-$0$ probe cost against the calibration budget of every rule that reads $\hat\rho_w,\hat\rho_b$ from the debate bank rather than treating them as free constants, summed once per distinct model in the model set; Tables~\ref{tab:heterobaselines}
and~\ref{tab:heterobk} already reflect it.

\paragraph{The selection gate.}
Algorithm~\ref{alg:predict-hetero} evaluates $\mathcal{V}(n,S)$ over the swept grid and
$\mathcal{S}(n)$, then selects
\begin{align}
\label{eq:hetero-gate}
(\hat n^\ast,\hat S)
\in
\arg\max_{n\in\{K,2K,\dots,n_{\max}\},\;S\in\mathcal{S}(n)}
\mathcal{V}(n,S).
\end{align}
The gate admits singletons only when fitted edges disagree in sign; otherwise it reduces to the
homogeneous rule. Allowing singletons at every $n$ would simply favor the strongest model whenever $\bar\delta$ dominates, with little corresponding change in $n_{\mathrm{eff}}$, which explains the
weaker performance of \emph{TB-MAD-hetero w/o gate} in Tables~\ref{tab:heterobaselines}
and~\ref{tab:heterobk}. 
% Note that calibrating $K$ models costs $K$ probe sweeps. We charge that against the same $1.0M$ and $2.0M$ budgets, so TB-MAD-hetero affords $K$ times fewer calibration questions than the homogeneous rule at the same spend.

\subsection{Updated theory analysis}
\label{app:hetero-theory}

\begin{proposition}[Heterogeneous aggregation head]
\label{prop:hetero-head}
Under uniform allocation into $|S|$ equal blocks, a common $\sigma$, within/between round-$0$
correlations $(\rho_w,\rho_b)$, and the $\pm1$ scoring of \S\ref{sec:synergy}, the group accuracy is
\begin{align*}
A(n) = \Phi\!\left(\bar\delta_S(n)\,\sqrt{n_{\mathrm{eff}}(n,|S|)}\big/\sigma\right),
\end{align*}
with $n_{\mathrm{eff}}$ as in Eq.~\ref{eq:neff-hetero}. Whenever $\bar\delta_S(n)>0$,
$\arg\max_n A(n)=\arg\max_n n_{\mathrm{eff}}(n,|S|)\,\bar\delta_S(n)^2$, and every step of
\S\ref{sec:theory} that uses $\delta$ only through $A(n)$ holds verbatim with $\delta\to\bar\delta_S$.
\end{proposition}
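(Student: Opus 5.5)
The proof will repeat the homogeneous derivation of \S\ref{sec:synergy} with the block covariance in place of the equicorrelated one. I first compute the mean and variance of the vote average $\bar X_n$. Every agent in block $i\in S$ has score $X_j\in\{+1,0,-1\}$ with $\mathbb{E}[X_j]=\delta_i(n)$, as in Eq.~\ref{eq:delta-j-hetero}. The blocks have equal size $n/|S|$, so linearity gives $\mathbb{E}[\bar X_n]=\frac{1}{n}\sum_{i\in S}\frac{n}{|S|}\delta_i(n)=\bar\delta_S(n)$, which is Eq.~\ref{eq:bardelta}.

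For the variance, the common $\sigma$ gives $\operatorname{Var}(X_j)=\sigma^2$. Two agents in the same block have $\operatorname{Cov}=\rho_w\sigma^2$, and two agents in different blocks have $\operatorname{Cov}=\rho_b\sigma^2$. I then count ordered pairs. Each agent has $n/|S|-1$ same-block partners and $n-n/|S|$ other-block partners. Hence
\[
\operatorname{Var}\Bigl(\sum_j X_j\Bigr)=n\sigma^2\Bigl[1+\rho_w\bigl(n/|S|-1\bigr)+\rho_b\bigl(n-n/|S|\bigr)\Bigr].
\]
Dividing by $n^2$ gives $\operatorname{Var}(\bar X_n)=\sigma^2/n_{\mathrm{eff}}(n,|S|)$ with $n_{\mathrm{eff}}$ exactly as in Eq.~\ref{eq:neff-hetero}.

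The same normal approximation used for Eq.~\ref{eq:accuracy}, namely matching the first two moments of $\bar X_n$ and reading off $\Pr[\bar X_n>0]$, then yields the displayed formula for $A(n)$. This step is an approximation inherited verbatim from the homogeneous head, not a new estimate. I will say so explicitly, so that the proposition is read as "holds under the same Gaussian head" and not as an exact probability statement.

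The argmax claim follows because $\Phi$ and $\sqrt{\cdot}$ are strictly increasing and $\sigma>0$. Maximizing $A$ is therefore the same as maximizing $\bar\delta_S\sqrt{n_{\mathrm{eff}}}$. Where $\bar\delta_S>0$ this quantity is positive, and squaring is strictly monotone on positive reals, so the maximizer equals that of $n_{\mathrm{eff}}\bar\delta_S^2$. This matches the signed objective $\mathcal{V}$ of Eq.~\ref{eq:hetero-argmax} on that region.

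For the "verbatim" clause, I will check which steps of \S\ref{sec:theory} and Appendix~\ref{app:single-peak} use $\delta$ only through the head. Prop.~\ref{prop:illusion} is one: each $\delta_i$ is increasing in $g$ and $g_{\mathrm{SC}}\ge g(n)$, so $\bar\delta_S$ is increasing too, and the dominance argument transfers. Step~1 of Prop.~\ref{prop:single-peak} is another, since an average of strictly decreasing functions is strictly decreasing.

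The main obstacle is the part that does not transfer cleanly: the uniqueness argument, Lemma~\ref{lem:logconc}. It relies on log-concavity of $\delta$. An average of log-concave functions need not be log-concave, and the amplification term $\alpha=n_{\mathrm{eff}}'/n_{\mathrm{eff}}$ changes form. For the second point I would compute $\alpha(n)=1/n-(\rho_w/|S|+\rho_b(1-1/|S|))/D(n)$ and then show it stays positive and decreasing when $\rho_w,\rho_b<1$. For the first point I would try writing $\bar\delta_S+c_w=(1+c_w)\cdot$(an average of $\beta_i$) and apply the shift argument of Appendix~\ref{app:single-peak}. If log-concavity of that average fails, I would restrict the "verbatim" claim to statements that use $\delta$ only through $A(n)$, namely existence, monotonicity and dominance, which is exactly how the proposition is phrased. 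Uniqueness would then be stated conditionally on $\bar\delta_S$ being log-concave.
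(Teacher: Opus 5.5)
Your proposal is correct and follows the paper's own proof sketch. It uses the same pair count over within-block and cross-block covariances to get $\operatorname{Var}(\bar X_n)=\sigma^2/n_{\mathrm{eff}}(n,|S|)$, the same moment-matched normal head, and the same monotonicity argument for the argmax identity. Your reading of the ``verbatim'' clause also matches the paper's remarks after the proposition: existence, monotonicity of $\bar\delta_S$ and self-consistency dominance transfer, while uniqueness needs log-concavity of the averaged edge, which the paper likewise does not assume away from the boundary.
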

\begin{proof}[Proof sketch]
The covariance of $(X_1,\dots,X_n)$ has constant diagonal $\sigma^2$, constant within-block
off-diagonal $\rho_w\sigma^2$, and constant cross-block off-diagonal $\rho_b\sigma^2$. Summing gives
$\operatorname{Var}(\bar X_n)=n^{-2}\bigl[n\sigma^2+(n/|S|-1)n\rho_w\sigma^2+(n-n/|S|)n\rho_b\sigma^2\bigr]
=\sigma^2/n_{\mathrm{eff}}(n,|S|)$. The normal approximation of $\bar X_n$ with mean $\bar\delta_S$
and this variance gives $A(n)$; monotonicity of $\Phi$ and $\sigma>0$ give the $\arg\max$ identity.
\end{proof}

We do not have experiments isolating the two results below, so we only note whether each still holds
and under which conditions.

\paragraph{Prop.~\ref{prop:illusion} (self-consistency dominates the initial-round vote).} Still
holds on a model set when no member's ceiling is raised by peer text, $\Delta p_{\max,i}\le 0$ for
all $i$: $g_{\mathrm{SC}}\ge g(n)$ and each $\phi_i$ increasing make $\bar\delta$ larger under
self-consistency, so the dominance carries over with no new assumption. It can fail once some member
is weaker than a peer that sends it information its own model lacks, $\Delta p_{\max,i}>0$. The
crossing condition is then
\begin{align}
\label{eq:info-vs-cost-hetero}
\sum_i \Delta p_{\max,i}\,\phi_i\bigl(g(n)\bigr) \;>\; \sum_i p_{\max,i}\bigl(\phi_i(g_{\mathrm{SC}})-\phi_i(g(n))\bigr),
\end{align}
the homogeneous Eq.~\ref{eq:info-vs-cost} summed over members.

\paragraph{The interior optimum.}
The result extends under the stated restrictions. Existence (Step 3 of Appendix~\ref{app:single-peak}) requires only that $\bar\delta_S$ be continuous, strictly decreasing, and cross zero once, all of which continue to hold here. In particular, even when $g_{0,i}$ and $\kappa_i$ vary across members, $\bar\phi=|S|^{-1}\sum_{i\in S}\phi_i$ remains strictly increasing as a positive average of strictly increasing functions, so the feasibility boundary $\bar\delta_S{=}0$ has a unique crossing in $g$. Local uniqueness at this crossing also carries over without assuming log-concavity of $\bar\phi$. Whenever $\bar\delta_S\to0^{+}$ with $\bar\delta_S''\le0$, the boundary argument in Appendix~\ref{app:single-peak} applies unchanged, since it relies only on positivity of $\bar\delta_S$ and the sign of its second derivative. Away from the boundary, however, log-concavity of $\bar\phi$ need not hold for a heterogeneous average and is not assumed here.

\subsection{Results}
\label{app:hetero-results}
We evaluate on the model sets below, spanning members of similar and of differing capability, on GSM8K, MATH500, MMLU-Pro and CSQA as in the homogeneous setting. Agent $i$ speaks as member $i\bmod K$, so a set forms $K$ equal blocks only at $n$ divisible by $K$, which fixes each grid.
\begin{itemize}[label=$\bullet$]\itemsep2pt
    \item $K{=}2$, $n\in\{2,4,6,8\}$: Qwen2.5-7B+Gemma3-12B, Qwen2.5-7B+Llama3.2-3B, Falcon3-7B+Gemma2-9B, Gemma3-12B+Llama3.2-3B.
    \item $K{=}3$, $n\in\{3,6\}$: Qwen2.5-7B+Gemma3-12B+Llama3.2-3B, Qwen2.5-7B+Falcon3-7B+Gemma2-9B.
    \item $K{=}4$, $n\in\{4,8\}$: Qwen2.5-7B+Falcon3-7B+Gemma2-9B+Gemma3-12B, Qwen2.5-7B+Falcon3-7B+Gemma3-12B+Llama3.2-3B.
\end{itemize}
Per-member $(p_{\max,i},g_{0,i},\kappa_i)$ come from the single-agent sweeps and $\hat\rho_w,\hat\rho_b$ from round-$0$ answers, exactly as on the homogeneous cells.

Tables~\ref{tab:heterobaselines} and~\ref{tab:heterobk} report held-out regret against the per-cell oracle when each method picks $(n,S)$, against the eight baselines of Table~\ref{tab:matchedcost}.
Probe-calibrated methods are the strongest at every $K$. At $K{=}2$ our method is second best, just behind learned $\betaeff$. For $K>2$ the star exponent matches it: as $K$ grows the group ability averages $K$ member curves, so the curve shape that separates the two fits at $K{=}2$ washes out. Removing the gate is worse on average at both $K$ ($1.49\to1.54$ on the pairs, $1.66\to2.75$ on the larger sets), which confirms the design choice.

\begin{table}[t]
\centering
\caption{Held-out regret against the per-cell oracle (percentage points) when each method chooses $n$ on the heterogeneous model-set bank at a matched \emph{token} budget, the model-set analogue of
Table~\ref{tab:matchedcost} with the same marks. Per-cell values in
Tables~\ref{tab:heterobaselinespercell}--\ref{tab:heterobaselinespercell2}.}
\label{tab:heterobaselines}
\small
\resizebox{\textwidth}{!}{\begin{tabular}{@{}lcccccccc@{\hspace{11pt}}c@{}}
\toprule
\multirow{2}{*}[-3pt]{\textbf{Method}} & \multicolumn{4}{c}{\textbf{1.0M budget}} & \multicolumn{4}{c}{\textbf{2.0M budget}} & \multirow{2}{*}[-3pt]{\textbf{Avg.}} \\
\cmidrule(lr){2-5}\cmidrule(lr){6-9}
 & Q+G3 & Q+L & F+G2 & G3+L & Q+G3 & Q+L & F+G2 & G3+L &  \\
\midrule
\multicolumn{10}{@{}l}{\emph{Debate-calibrated}} \\
Successive Halving & 8.96 & 10.88 & 7.97 & 7.47 & 4.75 & 6.87 & 4.82 & 5.50 & 7.15 \\
Quadratic Surface & 2.61 & 7.01 & 2.68 & 2.63 & 4.60 & 6.62 & 2.79 & 4.05 & 4.13 \\
Empirical Argmax & 2.93 & 7.09 & 3.30 & 3.07 & 2.96 & 5.18 & 3.64 & 3.19 & 3.92 \\
\midrule
\multicolumn{10}{@{}l}{\emph{No calibration}} \\
Naive $n{=}n_{\max}$ & 30.82 & 27.81 & 17.81 & 22.85 & 30.82 & 27.81 & 17.81 & 22.85 & 24.82 \\
Naive $n{=}2$ & 3.66 & 6.89 & 3.70 & 3.40 & 3.66 & 6.89 & 3.70 & 3.40 & 4.41 \\
\midrule
\multicolumn{10}{@{}l}{\emph{Probe-calibrated}} \\
Difficulty Mixture & 2.45 & 3.77 & 2.60 & 2.19 & 2.81 & 4.34 & 2.67 & 2.09 & 2.86 \\
Star Exponent & 1.92 & 3.66 & \underline{1.95} & 1.60 & 1.31 & 3.30 & 1.96 & 1.50 & 2.15 \\
Learned $\betaeff$ & 1.56 & 2.68 & \textbf{1.66} & \textbf{1.30} & \underline{1.13} & 2.35 & \textbf{1.58} & \underline{1.31} & 1.70 \\
TB-MAD-hetero w/o gate & \underline{1.33} & \textbf{0.82} & 2.06 & 2.41 & 1.48 & \textbf{1.12} & \underline{1.76} & 1.37 & \underline{1.54} \\
\rowcolor{green!15} \textbf{TB-MAD-hetero} (Ours) & \textbf{0.97} & \underline{1.87} & 2.05 & \underline{1.34} & \textbf{0.81} & \underline{2.07} & 1.95 & \textbf{0.85} & \textbf{1.49} \\
\bottomrule
\end{tabular}
}
\end{table}

\begin{table}[t]
\centering
\caption{Held-out regret (percentage points) on the balanced-$K$ sets, two $K{=}3$ sets on grid
$\{3,6\}$ and two $K{=}4$ on $\{4,8\}$, at the budgets of Table~\ref{tab:matchedcost} with the same
marks. Each naive column is a fixed $n$, blank where a set's grid omits it; a row spanning one $K$
gets no average. Per-cell values in Tables~\ref{tab:heterobkpercell}--\ref{tab:heterobkpercell2}.}
\label{tab:heterobk}
\small
\resizebox{\textwidth}{!}{\begin{tabular}{@{}lcccccccc@{\hspace{11pt}}c@{}}
\toprule
\multirow{2}{*}[-3pt]{\textbf{Method}} & \multicolumn{4}{c}{\textbf{1.0M budget}} & \multicolumn{4}{c}{\textbf{2.0M budget}} & \multirow{2}{*}[-3pt]{\textbf{Avg.}} \\
\cmidrule(lr){2-5}\cmidrule(lr){6-9}
 & Q+G3+L & Q+F+G2 & Q+F+G2+G3 & Q+F+G3+L & Q+G3+L & Q+F+G2 & Q+F+G2+G3 & Q+F+G3+L &  \\
\midrule
\multicolumn{10}{@{}l}{\emph{Debate-calibrated}} \\
Successive Halving & 2.97 & 3.09 & 4.48 & 3.66 & 2.41 & 1.68 & 2.77 & 2.84 & 2.99 \\
Quadratic Surface & 2.58 & 2.41 & 1.81 & 1.64 & 1.90 & 1.81 & 1.91 & 1.71 & 1.97 \\
Empirical Argmax & 2.14 & 1.88 & 2.39 & 2.16 & 1.75 & 1.79 & 2.02 & 1.73 & 1.98 \\
\midrule
\multicolumn{10}{@{}l}{\emph{No calibration}} \\
Naive $n{=}8$ & -- & -- & 20.72 & 19.46 & -- & -- & 20.72 & 19.46 & -- \\
Naive $n{=}6$ & 14.65 & 14.05 & -- & -- & 14.65 & 14.05 & -- & -- & -- \\
Naive $n{=}3$ & 2.88 & 2.10 & -- & -- & 2.88 & 2.10 & -- & -- & -- \\
Naive $n{=}4$ & -- & -- & 2.04 & 1.88 & -- & -- & 2.04 & 1.88 & -- \\
\midrule
\multicolumn{10}{@{}l}{\emph{Probe-calibrated}} \\
Difficulty Mixture & 2.60 & 1.86 & 1.88 & 1.44 & 2.36 & 1.79 & 1.75 & 1.54 & 1.90 \\
Star Exponent & 1.64 & 1.60 & \underline{1.77} & \underline{1.35} & 1.64 & 1.60 & \underline{1.70} & \underline{1.34} & \underline{1.58} \\
Learned $\betaeff$ & \textbf{1.23} & \textbf{1.49} & \textbf{1.61} & \textbf{1.26} & \underline{1.22} & \textbf{1.34} & \textbf{1.54} & \textbf{1.21} & \textbf{1.36} \\
TB-MAD-hetero w/o gate & 3.77 & 2.33 & 3.53 & 3.01 & 1.91 & 2.47 & 2.88 & 2.10 & 2.75 \\
\rowcolor{green!15} \textbf{TB-MAD-hetero} (Ours) & \underline{1.26} & \underline{1.50} & 2.72 & 1.90 & \textbf{0.99} & \underline{1.54} & 2.05 & \underline{1.34} & 1.66 \\
\bottomrule
\end{tabular}
}
\end{table}

% ---

\section{Additional Results}
\label{app:extra}

\subsection{Self-refinement, full results}
\label{app:selfrefine}

\begin{figure*}[t]
\centering
\includegraphics[width=\textwidth]{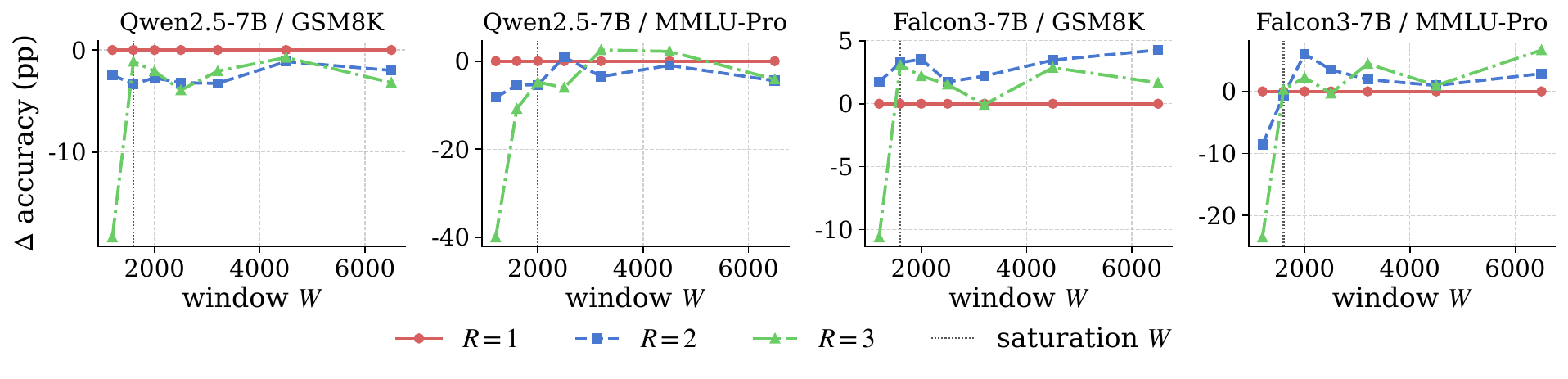}
\caption{Self-refinement accuracy against the first draft ($\Delta{=}A_R-A_1$, in points) as the window grows, $R\in\{1,2,3\}$; $R{=}1$ is the base generation without refinement. 
The dotted vertical line marks the saturation window: from there on, both $R{=}2$ and $R{=}3$ stay within $7\%$ of the first draft.}
\label{fig:generality}
\end{figure*}

Figure~\ref{fig:generality} shows the same generation-floor pattern across all cells: at the smallest window, increasing from $R{=}1$ to $R{=}3$ reduces accuracy by $1.3$--$44.3$ points, with the loss recovering as $W$ grows. More refinement steps require a larger window before they become beneficial. 

The weakest model (Llama3.2-3B) gains almost nothing ($\Delta\approx0$), consistent with limited self-correction ability without external feedback~\citep{huang2024selfcorrect}, while ceiling models have little headroom (Qwen/Gemma-3 on GSM8K). A genuine interior $R^\ast$ appears mainly for mid-capability models with both headroom and self-correction ability (Falcon3, Gemma-2, and most models on MATH500), but the resulting gains remain small relative to the additional token cost (Table~\ref{tab:selfrefine}).

\begin{table}[t]
\centering
\caption{Self-refine over $R\in\{1,2,3\}$ rounds across settings. $A_1$: first-draft accuracy; gain$^\ast$: oracle (grid-best) refine gain over $A_1$; $R^\ast$: the round most often best across the swept windows; drop: $A_1$ minus the $R{=}3$ accuracy at the smallest window.}
\label{tab:selfrefine}
\setlength{\tabcolsep}{3pt}\small
\begin{tabular}{@{}l cccc cccc cccc cccc@{}}
\toprule
 & \multicolumn{4}{c}{GSM8K} & \multicolumn{4}{c}{MATH500} & \multicolumn{4}{c}{MMLU-Pro} & \multicolumn{4}{c}{CSQA}\\
\cmidrule(lr){2-5}\cmidrule(lr){6-9}\cmidrule(lr){10-13}\cmidrule(lr){14-17}
Model & $A_1$ & gain$^\ast$ & $R^\ast$ & drop & $A_1$ & gain$^\ast$ & $R^\ast$ & drop & $A_1$ & gain$^\ast$ & $R^\ast$ & drop & $A_1$ & gain$^\ast$ & $R^\ast$ & drop\\
\midrule
Qwen2.5-7B & 87.6 & 0.0 & 1 & 18.3 & 64.7 & 1.3 & 1 & 44.3 & 47.0 & 2.5 & 1 & 40.0 & 81.4 & 1.1 & 1 & 4.7\\
Falcon3-7B & 81.3 & 4.3 & 2 & 10.6 & 55.1 & 3.5 & 1 & 31.3 & 48.3 & \textbf{6.7} & 3 & 23.5 & 74.1 & 0.7 & 2 & 1.3\\
Gemma-2-9B & 81.7 & 2.0 & 2 & 1.6 & 38.2 & 3.3 & 2 & 3.8 & 50.5 & 3.2 & 1 & 13.7 & 77.1 & 0.0 & 1 & 1.3\\
Gemma-3-12B & 92.7 & 0.9 & 1 & 7.5 & 65.7 & 2.3 & 1 & 27.1 & 54.3 & 2.9 & 2 & 33.3 & 80.5 & 0.2 & 1 & 2.3\\
Llama-3.2-3B & 69.9 & 0.0 & 1 & 28.8 & 33.7 & 0.0 & 1 & 15.3 & 27.3 & 0.0 & 1 & 20.3 & 66.7 & 0.0 & 1 & 14.1\\
\bottomrule
\end{tabular}
\end{table}

\subsection{Voting against talking, and the single agent}
\label{app:scgap}
\label{app:longcot}
\label{app:largewsc}

Two secondary consequences follow from the same comparison: self-consistency weakly dominates debate (Prop.~\ref{prop:illusion}), and forming a group is not always worthwhile ($\delta\le0$).

\paragraph{Where the margin is largest, and how firmly.}
Prop.~\ref{prop:illusion} implies that the self-consistency margin is set by the per-vote ability consumed by peer reading. It is therefore largest when the generation window binds and shrinks as the window loosens, eventually vanishing once both methods reach their best size at sufficiently large $W$ (Figures~\ref{fig:selfcons}--\ref{fig:selfcons-falcon}). The large gap at the tightest window is partly a clipping effect: many $(n,W{=}1200)$ cells collapse to the same clipped generation room, whereas restricting to unclipped cells reduces the margin from $+42.0$ to $+13.7\%$. A question-clustered paired bootstrap with $2000$ resamples confirms the cell-level effect; applying the same test per window gives the SC sig.\ $W$ column in Table~\ref{tab:selfcons}.

\paragraph{Where the closed form gets the sign wrong.} CSQA is the one task where the closed form recovers the $\delta$ sign poorly, most severely on Gemma3-12B (predicted $24\%$, the lowest cell in Table~\ref{tab:selfcons}). The per-cell fit there has $k{\approx}8.6$, an unusually steep Hill curve, fit to only seven calibration points; its leave-one-out RMSE is over an order of magnitude above its in-sample RMSE, the signature of overfitting rather than of a genuine near-step response. Combined with $c_w$ estimates that are the least stable of any task (Table~\ref{tab:cwsigma}), the fitted $\phi$ crosses the break-even $\beta$ almost as soon as the window opens, so the closed form predicts the group winning at every swept size while the group in fact never does. All five models show the same direction of error on CSQA, so this is a property of the task's fit stability, not an isolated bug in one cell.

\paragraph{Is $\Delta p_{\max}$ actually positive?} Prop.~\ref{prop:illusion}'s dominance result is the special case $\Delta p_{\max}\le0$. We test directly whether peer text raises the per-agent ceiling, instead of measuring it through majority vote. On Qwen2.5-7B and Falcon3-7B across GSM8K and MMLU-Pro, we fix $g{=}2048$ and compare a single agent's round-$1$ accuracy without peer access ($n{=}1$) with the round-$1$ accuracy of two agents after each reads the other's round-$0$ answer ($n{=}2$). This isolates the effect of peer text from voting. $\widehat{\Delta p_{\max}}$ is small and sign-mixed, at $2$--$4\%$ in magnitude, with the sign determined by model rather than task: positive for Falcon3-7B and negative for Qwen2.5-7B on both tasks (Table~\ref{tab:deltapmax}). Thus, peer reading does not reliably raise the ceiling in these cells, suggesting that debate gains arise primarily from voting rather than from $\Delta p_{\max}$.

\begin{table}[t]
\centering
\caption{Per-agent round-$1$ accuracy (\%) with no peer read (\textbf{No peer}, $n{=}1$) against with
one peer read (\textbf{With peer}, $n{=}2$), fixed $g{=}2048$ for both, averaged over $3$ seeds.
$\widehat{\Delta p_{\max}}$ (\%) is the difference.}
\label{tab:deltapmax}
\begin{tabular}{llccc}
\toprule
 & & \textbf{No peer} & \textbf{With peer} & \textbf{$\widehat{\Delta p_{\max}}$} \\
\textbf{Model} & \textbf{Task} & (round 1) & (round 1) & \\
\midrule
\multirow{2}{*}{Qwen2.5-7B}
& GSM8K & 86.7 & 84.7 & -1.97 \\
& MMLU-Pro & 49.8 & 46.0 & -3.78 \\
\midrule
\multirow{2}{*}{Falcon3-7B} 
& GSM8K & 80.6 & 82.9 & +2.33 \\
& MMLU-Pro & 46.1 & 50.2 & +4.11 \\
\bottomrule
\end{tabular}

\end{table}

\paragraph{Does the margin actually vanish at large $W$?} On Qwen2.5-7B/GSM8K, extending the same comparison past the main grid to $W\in\{6\text{k},\dots,32\text{k}\}$ confirms it does: each method read at its own best group size, the margin is already under $1$pp by $W{=}6$k and stays within seed noise through $32$k (Table~\ref{tab:largewsc}), consistent with Cor.~\ref{cor:largew}'s correlation ceiling. We report one representative case since reaching the ceiling requires double-digit $n$ and is not practical in realistic settings.

\begin{table}[t]
\centering
\caption{Qwen2.5-7B/GSM8K, self-consistency against debate at large $W$, each method read at its own best group size $\hat n$ over the extended grid $n \in \{1,2,4,8,13,21,32\}$ (columns 2--5). \textbf{margin} is debate minus self-consistency ($\pm$ seed s.d.\ over $3$ seeds), and \textbf{matched $n$} is the same gap when self-consistency is instead read at debate's $\hat n$. Every margin lies
inside its own seed s.d., so the two methods are indistinguishable from $W{=}6\mathrm{k}$ onward.}
\label{tab:largewsc}
\begin{tabular}{lcccccc}
\toprule
$W$ & $\hat n_{\mathrm{deb}}$ & debate & $\hat n_{\mathrm{SC}}$ & self-cons.\ & margin & matched $n$ \\
\midrule
6k & 13 & 93.67 & 13 & 94.07 & $-0.40 \pm 1.11$ & $-0.40$ \\
9k & 13 & 93.80 & 13 & 94.07 & $-0.27 \pm 0.46$ & $-0.27$ \\
12k & 13 & 94.00 & 21 & 94.07 & $-0.07 \pm 0.81$ & $+0.00$ \\
16k & 21 & 94.33 & 13 & 93.67 & $+0.67 \pm 1.22$ & $+0.73$ \\
24k & 13 & 94.07 & 21 & 94.00 & $+0.07 \pm 1.47$ & $+0.13$ \\
32k & 21 & 93.87 & 21 & 93.67 & $+0.20 \pm 0.53$ & $+0.20$ \\
\bottomrule
\end{tabular}

\end{table}

% \begin{table}[t]
% \centering
% \caption{Every cell behind Table~\ref{tab:largewsc}: debate and self-consistency accuracy (\%) at
% every swept $(n,W)$, and their gap (self-consistency minus debate). \textbf{Bold} marks, per
% column, the group size each method would pick (top two blocks) or the largest gap (bottom block).}
% \label{tab:largewscpercell}
% \resizebox{\textwidth}{!}{\input{tables/largew_percell_table}}
% \end{table}

\paragraph{Debate against single-agent long-CoT.} 
Contrary to the equal-budget finding of~\citet{tran2026single},
on our matched generated-token budgets, we find the debate group still leads a single agent whose room is grown to $g_{\mathrm{cap}}{=}8192$ ($W$ extends to $16$k). Its chain saturates well below the cap, at about $275$ tokens on GSM8K and $533$--$540$ on MMLU-Pro. At matched generated-token budgets, the group leads on every cell (Table~\ref{tab:baseline}). Self-consistency recovers the same advantage without message exchange, making the comparison one of breadth versus depth rather than communication.

\begin{table}[t]
\centering
\caption{Single-agent long-CoT ($n{=}1$, room grown to $g_{\mathrm{cap}}{=}8192$) against the debate group ($n{\ge}2$), matched at the tokens debate spends at its own best cell. Gen.\ tokens instead gives long-CoT's own best window, where more thinking stops improving its accuracy.}
\label{tab:baseline}
\begin{tabular}{llcccc}
\toprule
Model & Task & Long-CoT $A$ & gen.\ tokens & Debate $A$ & Debate $-$ long-CoT \\
\midrule
\multirow{2}{*}{Qwen2.5-7B} & GSM8K & 88.9 & 276 & 93.7 & $\mathbf{+4.9}$ \\
 & MMLU-Pro & 49.5 & 533 & 49.8 & $\mathbf{+0.2}$ \\
\midrule
\multirow{2}{*}{Falcon3-7B} & GSM8K & 81.5 & 274 & 91.4 & $\mathbf{+9.9}$ \\
 & MMLU-Pro & 46.5 & 540 & 55.3 & $\mathbf{+8.9}$ \\
\bottomrule
\end{tabular}

\end{table}

\begin{table}[t]
\centering
\caption{Accuracy (\%) over the swept windows with $n\ge2$. Barred columns average over the grid, unbarred read both methods at debate's size $n^\ast(W)$. \textbf{$\delta\le0$ share} compares each cell's debate accuracy against the single-agent accuracy pooled over all windows; \textbf{single$\ge$group} is the stricter, per-window version of the same comparison, the observed share of the $49$ cells with $A(1,W)\ge A(n,W)$ at matching $W$; \textbf{predicted} is the fraction of the $\delta\le0$-share signs the closed form recovers. \textbf{SC sig.\ $W$} is how many of the $7$ swept windows have SC significantly ahead (paired bootstrap) at a majority of the $7$ group sizes.}
\label{tab:selfcons}
\setlength{\tabcolsep}{5pt}
\small
\resizebox{\textwidth}{!}{\begin{tabular}{@{}l l c c c cc cc c c c cc@{}}
\toprule
& & \multicolumn{2}{c}{theory} & & \multicolumn{2}{c}{averaged over the grid} & \multicolumn{2}{c}{at the oracle $n^\ast(W)$} & \multicolumn{3}{c}{group vs.\ single agent} & \multicolumn{2}{c}{SC vs.\ debate} \\
\cmidrule(lr){3-4}\cmidrule(lr){6-7}\cmidrule(lr){8-9}\cmidrule(lr){10-12}\cmidrule(lr){13-14}
 \textbf{Model} & \textbf{Task} & \textbf{$p_{\max}$} & \textbf{$\bar\delta$} & \textbf{$\bar A_{1}$} & \textbf{$\bar A_{\text{SC}}$} & \textbf{$\bar A_{\text{deb}}$} & \textbf{$A_{\text{SC}}$} & \textbf{$A_{\text{deb}}$} & \textbf{$\delta\le0$ share} & \textbf{single$\ge$group} & \textbf{predicted} & \textbf{SC$\ge$deb} & \textbf{SC sig.\ $W$} \\
\midrule
\multirow{4}{*}{Llama3.2-3B} & GSM8K & 69.0 & +0.23 & 67.7 & \textbf{80.3} & 69.1 & \textbf{82.0} & 80.7 & 16/49 & 16/49 & 80 & 49/49 & 4/7 \\
 & MATH500 & 34.4 & -0.05 & 34.1 & \textbf{42.0} & 32.9 & \textbf{42.1} & 41.5 & 21/49 & 22/49 & 86 & 36/49 & 4/7 \\
 & MMLU-Pro & 32.8 & -0.23 & 31.8 & \textbf{37.1} & 27.9 & \textbf{37.0} & 35.7 & 23/49 & 22/49 & 47 & 42/49 & 4/7 \\
 & CSQA & 67.8 & +0.11 & 65.1 & \textbf{69.9} & 60.1 & \textbf{71.7} & 70.2 & 16/49 & 15/49 & 78 & 49/49 & 3/7 \\
\midrule
\multirow{4}{*}{Qwen2.5-7B} & GSM8K & 87.4 & +0.37 & 84.7 & \textbf{92.2} & 73.5 & \textbf{92.5} & 92.3 & 15/49 & 15/49 & 96 & 48/49 & 2/7 \\
 & MATH500 & 67.6 & +0.11 & 67.2 & \textbf{71.6} & 48.7 & \textbf{70.4} & 68.3 & 23/49 & 24/49 & 86 & 46/49 & 4/7 \\
 & MMLU-Pro & 49.9 & -0.15 & 47.9 & \textbf{53.2} & 32.4 & \textbf{52.6} & 46.0 & 36/49 & 40/49 & 69 & 49/49 & 7/7 \\
 & CSQA & 80.5 & +0.23 & 80.6 & \textbf{82.2} & 65.7 & \textbf{82.4} & 82.0 & 23/49 & 21/49 & 80 & 49/49 & 3/7 \\
\midrule
\multirow{4}{*}{Falcon3-7B} & GSM8K & 81.0 & +0.28 & 83.4 & \textbf{91.4} & 75.6 & \textbf{91.6} & 90.8 & 13/49 & 13/49 & 100 & 45/49 & 3/7 \\
 & MATH500 & 61.0 & +0.06 & 60.0 & \textbf{65.9} & 45.6 & \textbf{65.3} & 62.4 & 24/49 & 28/49 & 86 & 48/49 & 7/7 \\
 & MMLU-Pro & 47.7 & -0.15 & 48.6 & \textbf{53.4} & 38.7 & \textbf{52.4} & 50.8 & 24/49 & 25/49 & 96 & 48/49 & 5/7 \\
 & CSQA & 74.9 & +0.21 & 73.8 & \textbf{76.9} & 66.5 & \textbf{77.7} & 75.6 & 21/49 & 28/49 & 73 & 49/49 & 7/7 \\
\midrule
\multirow{4}{*}{Gemma2-9B} & GSM8K & 81.3 & +0.32 & 81.1 & \textbf{87.5} & 80.9 & \textbf{88.3} & 86.6 & 9/49 & 9/49 & 94 & 45/49 & 7/7 \\
 & MATH500 & 40.2 & -0.11 & 42.2 & \textbf{48.7} & 41.2 & 48.8 & \textbf{48.8} & 14/49 & 14/49 & 98 & 32/49 & 3/7 \\
 & MMLU-Pro & 50.4 & -0.14 & 49.0 & \textbf{51.1} & 40.0 & \textbf{50.8} & 50.6 & 23/49 & 23/49 & 88 & 40/49 & 3/7 \\
 & CSQA & 77.9 & +0.26 & 73.8 & \textbf{76.2} & 71.2 & \textbf{77.2} & 76.3 & 15/49 & 17/49 & 71 & 48/49 & 2/7 \\
\midrule
\multirow{4}{*}{Gemma3-12B} & GSM8K & 92.9 & +0.45 & 92.3 & \textbf{93.9} & 78.4 & \textbf{93.9} & 93.8 & 16/49 & 16/49 & 92 & 48/49 & 2/7 \\
 & MATH500 & 75.3 & +0.15 & 71.6 & \textbf{73.8} & 52.4 & \textbf{73.0} & 70.7 & 26/49 & 33/49 & 80 & 48/49 & 4/7 \\
 & MMLU-Pro & 60.4 & -0.10 & 57.7 & \textbf{60.6} & 40.2 & \textbf{60.2} & 56.6 & 26/49 & 32/49 & 90 & 47/49 & 4/7 \\
 & CSQA & 80.5 & +0.26 & 78.2 & \textbf{78.3} & 58.6 & \textbf{78.0} & 76.4 & 49/49 & 49/49 & 24 & 49/49 & 7/7 \\
\midrule
\textbf{All} & & -- & -- & 64.5 & \textbf{69.3} & 55.0 & \textbf{69.4} & 67.8 & $\mathbf{433/980}$ & $\mathbf{462/980}$ & $\mathbf{81}$ & $\mathbf{915/980}$ ($93$) & $\mathbf{85/140}$ \\
\bottomrule
\end{tabular}
}
\end{table}

\begin{figure*}[t]
\centering
\includegraphics[width=\textwidth]{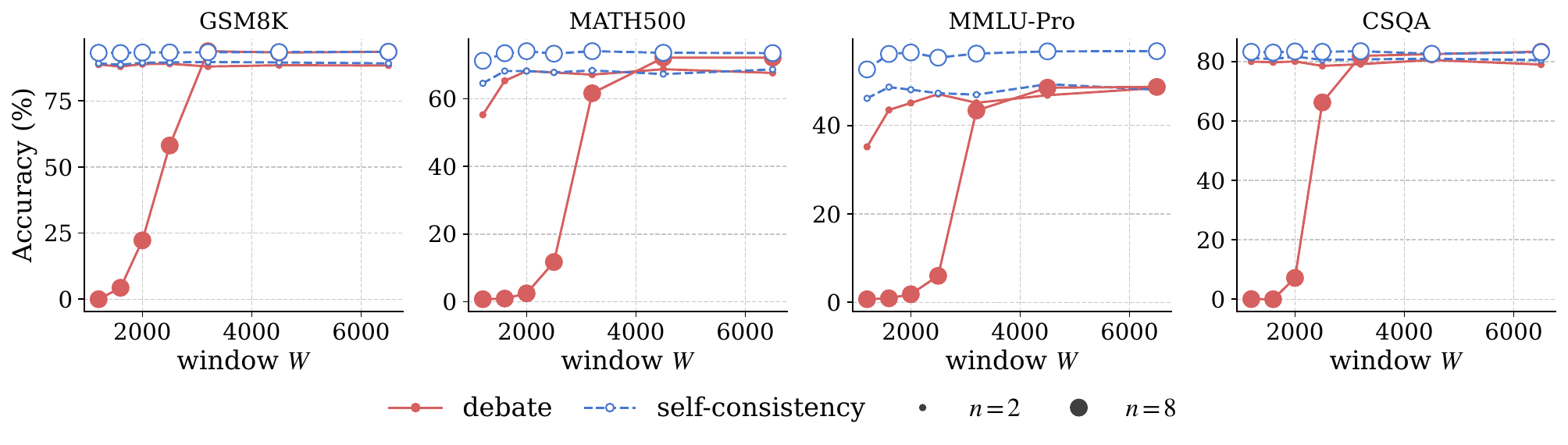}
\caption{Debate (red, solid) against self-consistency (blue, dashed) as the window grows, one panel per task (Qwen2.5-7B). Marker size is the group size $n\in\{2,8\}$. The gap closes as the window
grows on every panel (Prop.~\ref{prop:illusion}).}
\label{fig:selfcons}
\end{figure*}

\begin{figure*}[t]
\centering
\includegraphics[width=\textwidth]{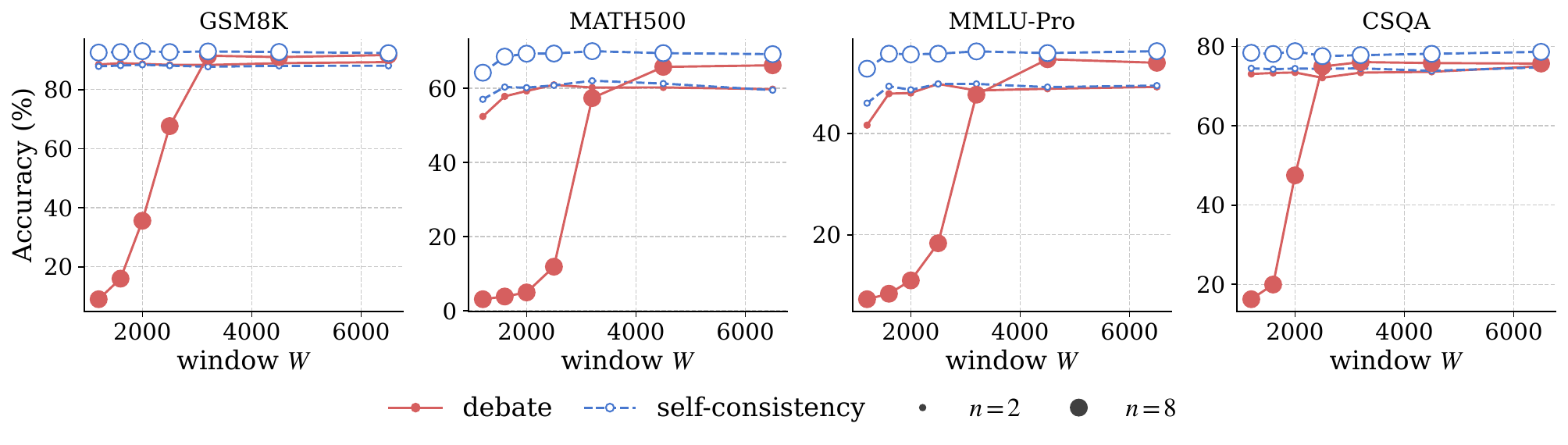}
\caption{Same as Figure~\ref{fig:selfcons}, on Falcon3-7B: the gap closes with the window on every
task there too.}
\label{fig:selfcons-falcon}
\end{figure*}

\subsection{Repricing at API rates}
\label{app:api-rate}

% This $n^\ast(B)$ is a cost-constrained optimum, not the accuracy-only $\hat n^\ast(W)$ of Figure~\ref{fig:predict}: there the oracle climbs to $n{=}8$ at loose $W$, but by only $0.7$-$1.6$pp, too small to clear the extra agents' token cost.

In practice, input tokens are often cheaper than output tokens. We define the cost of an operating point $(n,W)$ as $c_{\mathrm{in}}\cdot(\text{input tokens})+c_{\mathrm{out}}\cdot(\text{output tokens})$, normalize $c_{\mathrm{out}}=1$, and consider $r\in{1,2, 2.5,4}$. Figure~\ref{fig:cost} shows the accuracy-vs-cost Pareto frontier and the cost-optimal group size $n^\ast(B)$ (instead of $n^\ast(W)$), where $B$ is the average token per-question cost swept across operating points. The interior optimum persists under realistic pricing, with cheaper input shifting $n^\ast$ upward at matched budget. On Qwen2.5-7B/MMLU-Pro, however, the optimum remains pinned at $n=1$: all $n\ge2$ points collapse far below it due to the $g$-floor effect in \S\ref{app:scgap}, regardless of the price ratio.

The same pricing affects the coverage of a fixed calibration budget. Table~\ref{tab:rate} shows that discounted input costs slightly improve both coverage and performance for debate-calibrated methods, while our method retains the lowest regret at all budgets ($1$, $2$, and $4$M) across every price ratio.

\begin{figure*}[t]
\centering
\includegraphics[width=\textwidth]{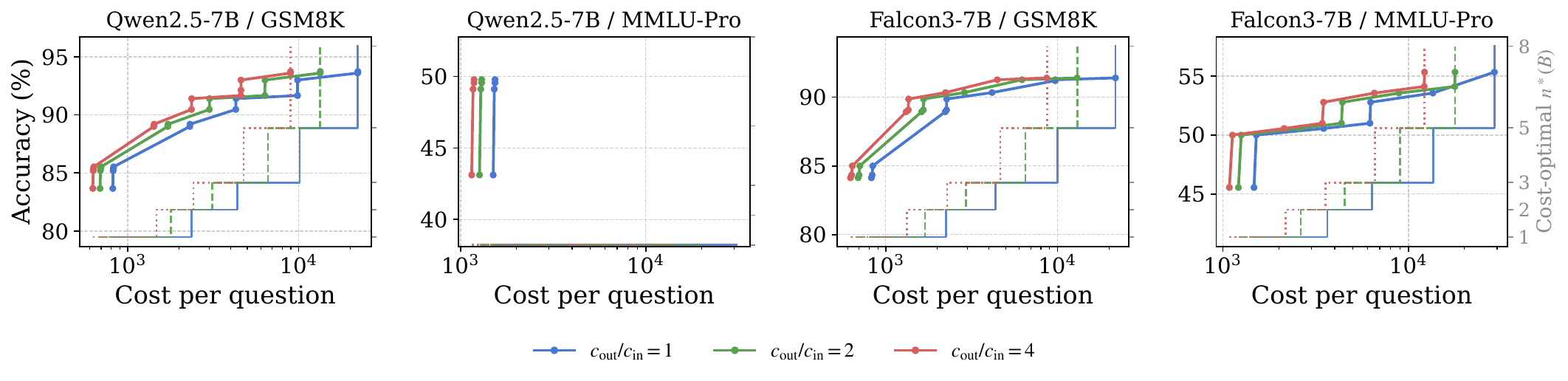}
\caption{Accuracy against cost per question. Lines are the Pareto frontier (left axis) and steps are the cost-optimal group $n^\ast(B)$ (right axis), under symmetric pricing and under cheaper input at API rates.}
\label{fig:cost}
\end{figure*}

\begin{table}[t]
\centering
\caption{Held-out regret (percentage points) at four input/output price ratios and every swept budget. \textbf{Bold} is the column minimum, \underline{underline} indicates an entry fixed at its large-budget value as the whole question is already used for calibration. First group: $c_{\mathrm{out}}/c_{\mathrm{in}}{=}1$, i.e.\ Table~\ref{tab:matchedcost}.}
\label{tab:rate}
\setlength{\tabcolsep}{5pt}
\resizebox{\textwidth}{!}{\small\begin{tabular}{@{}l r r r r r@{\hspace{10pt}} r r r r r@{\hspace{10pt}} r r r r r@{\hspace{10pt}} r r r r r@{\hspace{10pt}}@{}}
\toprule
\multirow{2}{*}[-3pt]{\textbf{Method}} & \multicolumn{5}{c}{\textbf{$c_{\mathrm{out}}/c_{\mathrm{in}}=1.0$}} & \multicolumn{5}{c}{\textbf{$c_{\mathrm{out}}/c_{\mathrm{in}}=2.0$}} & \multicolumn{5}{c}{\textbf{$c_{\mathrm{out}}/c_{\mathrm{in}}=2.5$}} & \multicolumn{5}{c}{\textbf{$c_{\mathrm{out}}/c_{\mathrm{in}}=4.0$}} \\
\cmidrule(lr){2-6}\cmidrule(lr){7-11}\cmidrule(lr){12-16}\cmidrule(lr){17-21}
\emph{Budget} (M) & 1 & 2 & 4 & 8 & 16 & 1 & 2 & 4 & 8 & 16 & 1 & 2 & 4 & 8 & 16 & 1 & 2 & 4 & 8 & 16 \\
\midrule
Empirical Argmax & 3.20 & 2.86 & 2.18 & 1.67 & 1.35 & 3.11 & 2.51 & 1.83 & 1.42 & 1.08 & 2.91 & 2.45 & 1.89 & 1.45 & 1.10 & 2.87 & 1.88 & 1.57 & 1.25 & 0.97 \\
Successive Halving & 9.08 & 5.81 & 3.04 & 1.65 & \textbf{1.12} & 6.54 & 4.56 & 1.83 & \textbf{1.26} & \textbf{1.00} & 6.31 & 4.45 & 1.73 & \textbf{1.24} & \textbf{0.92} & 5.55 & 2.01 & 1.39 & \textbf{1.10} & \textbf{0.95} \\
Quadratic Surface & 2.87 & 2.48 & 2.25 & 2.31 & 2.29 & 2.55 & 2.34 & 2.22 & 2.25 & 2.31 & 2.35 & 2.32 & 2.18 & 2.27 & 2.36 & 2.50 & 2.26 & 2.45 & 2.32 & 2.27 \\
Learned $\betaeff$ & 1.68 & 1.49 & 1.56 & \underline{1.52} & \underline{1.52} & 1.65 & 1.78 & 1.50 & \underline{1.52} & \underline{1.52} & 1.63 & 1.80 & 1.50 & \underline{1.52} & \underline{1.52} & 1.76 & 1.70 & 1.52 & \underline{1.52} & \underline{1.52} \\
Star Exponent & 1.95 & 1.96 & 1.88 & \underline{1.87} & \underline{1.87} & 2.05 & 1.93 & 1.82 & \underline{1.87} & \underline{1.87} & 1.95 & 2.08 & 1.89 & \underline{1.87} & \underline{1.87} & 1.93 & 2.03 & 1.87 & \underline{1.87} & \underline{1.87} \\
Difficulty Mixture & 2.82 & 2.84 & 2.76 & \underline{2.76} & \underline{2.76} & 2.69 & 2.83 & 2.76 & \underline{2.76} & \underline{2.76} & 2.73 & 2.82 & 2.76 & \underline{2.76} & \underline{2.76} & 3.05 & 2.78 & \underline{2.76} & \underline{2.76} & \underline{2.76} \\
\rowcolor{green!15} \textbf{TB-MAD} (Ours) & \textbf{1.56} & \textbf{1.23} & \textbf{1.23} & \underline{\textbf{1.29}} & \underline{1.29} & \textbf{1.43} & \textbf{1.50} & \textbf{1.23} & \underline{1.29} & \underline{1.29} & \textbf{1.29} & \textbf{1.42} & \textbf{1.24} & \underline{1.29} & \underline{1.29} & \textbf{1.50} & \textbf{1.40} & \textbf{1.26} & \underline{1.29} & \underline{1.29} \\
\bottomrule
\end{tabular}
}
\end{table}

% \paragraph{Predictor details}
% \label{app:predictor-detail}
% Agreement with the empirical argmax is in Figure~\ref{fig:predict}: the predicted
% $n^\ast(W)$ tracks the empirical one on every model. 

\begin{figure*}[t]
\centering
\includegraphics[width=\textwidth]{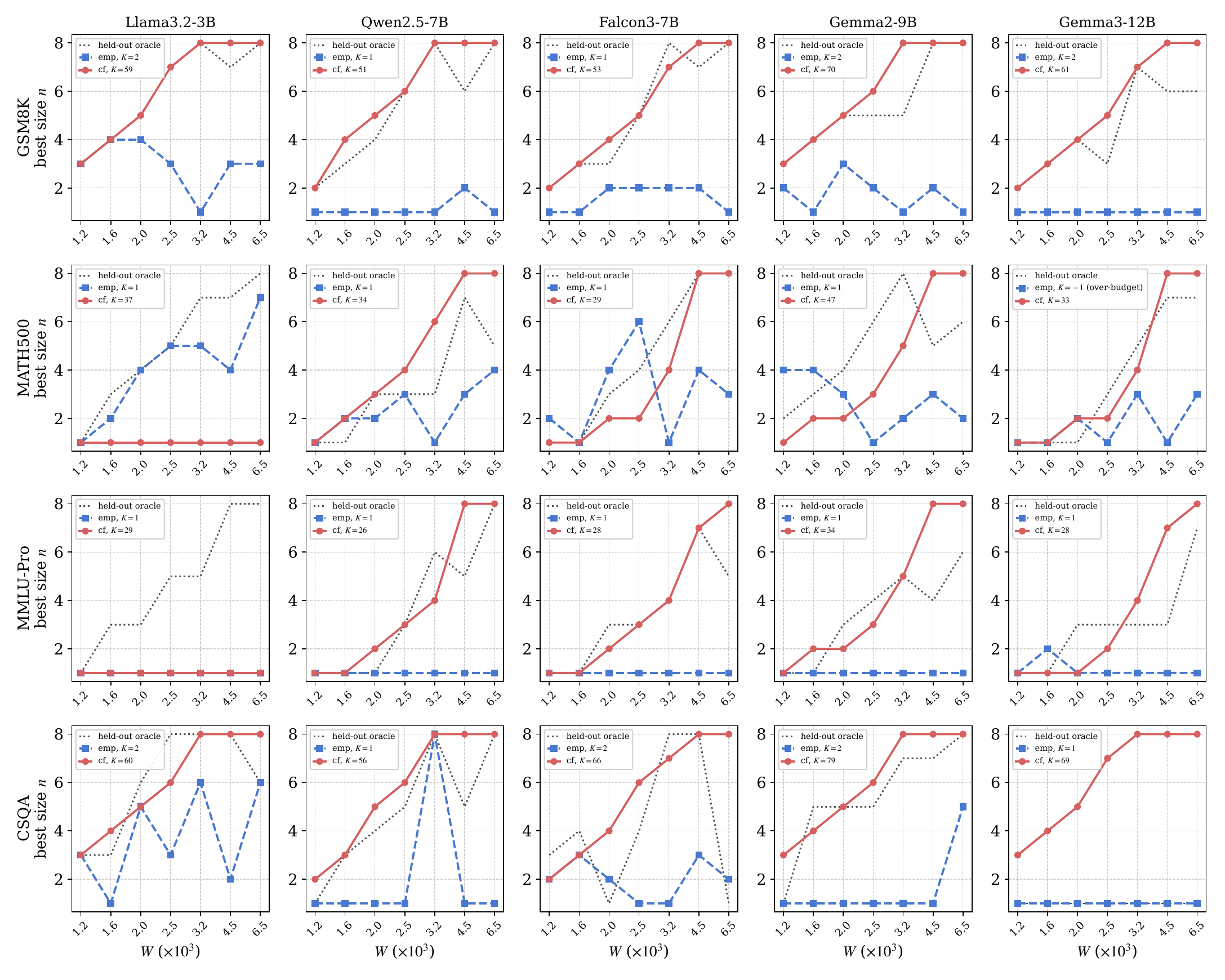}
\caption{Predicted $n^\ast(W)$ (red) against empirical $\hat n^\ast(W)$ (green) at 1M calibration tokens.}
\label{fig:predict}
\end{figure*}

% ---
\begin{table}[t]
\centering
\caption{Held-out regret (percentage points) by calibration window $W$, each method's own chosen $n$
averaged over all five models and four tasks. Complements Table~\ref{tab:matchedcost}, which
aggregates over $W$ instead. \textbf{Bold} is the lowest regret in a row.}
\label{tab:bywindow}
\begin{tabular}{rrrrrrrrr>{\columncolor{green!15}}r}
\toprule
 & \multicolumn{3}{c}{\emph{Debate-calibrated}} & \multicolumn{2}{c}{\emph{No calibration}} & \multicolumn{4}{c}{\emph{Probe-calibrated}} \\
\cmidrule(lr){2-4}\cmidrule(lr){5-6}\cmidrule(lr){7-10}
 \textbf{$W$} & \textbf{Halv.} & \textbf{Emp.} & \textbf{Surf.} & \textbf{$n_{\max}$} & \textbf{$n{=}1$} & \textbf{Mix} & \textbf{Star} & \textbf{Learn} & \textbf{TB-MAD} \\
\midrule
1200 & 1.03 & \textbf{0.82} & 2.38 & 58.82 & 1.56 & 1.65 & 2.16 & 1.91 & 1.12 \\
1600 & 1.16 & 0.98 & 2.44 & 54.23 & 2.58 & 1.34 & 2.27 & 1.97 & \textbf{0.96} \\
2000 & \textbf{1.09} & 1.17 & 2.80 & 42.72 & 3.81 & 1.89 & 2.59 & 1.34 & 1.49 \\
2500 & \textbf{0.59} & 1.04 & 3.18 & 25.57 & 4.16 & 3.20 & 2.22 & 1.30 & 1.75 \\
3200 & \textbf{0.62} & 1.12 & 3.32 & 2.93 & 5.36 & 3.93 & 1.59 & 0.72 & 0.87 \\
4500 & 0.99 & 1.56 & 2.49 & \textbf{0.84} & 5.69 & 3.76 & 1.17 & 1.16 & 1.18 \\
6500 & 1.27 & 1.04 & 0.66 & \textbf{0.58} & 5.96 & 4.69 & 1.02 & 1.02 & 1.02 \\
\bottomrule
\end{tabular}

\end{table}

% \subsection{Calibration budget: what the pool ceiling hides}
% \label{app:matchedcost}

% Tables~\ref{tab:matchedcostpercell} to~\ref{tab:matchedcostpercellhigh} give every cell at every
% swept budget. 

% \paragraph{Why successive halving is dear at a small budget.} Its schedule spends $k_1$ questions on
% \emph{every} arm in round $1$, then $2k_1$ on the survivors, then $4k_1$, summing to
% $8k_1{+}4(2k_1){+}2(4k_1)=24k_1$ against a one-shot sweep's $8k_1$. The $3\times$ ratio holds on every
% cell, and it is paid again at each of the seven swept windows, since every window reruns the halving
% from scratch. So a small budget does not buy it a coarser search, it buys a search over fewer
% questions per arm, which is why it trails us until $8$--$16$M (Table~\ref{tab:rate}, the crossover
% budget moving with the price ratio) and leads above.

% ----

\begin{figure*}[t]
\centering
\includegraphics[width=\textwidth]{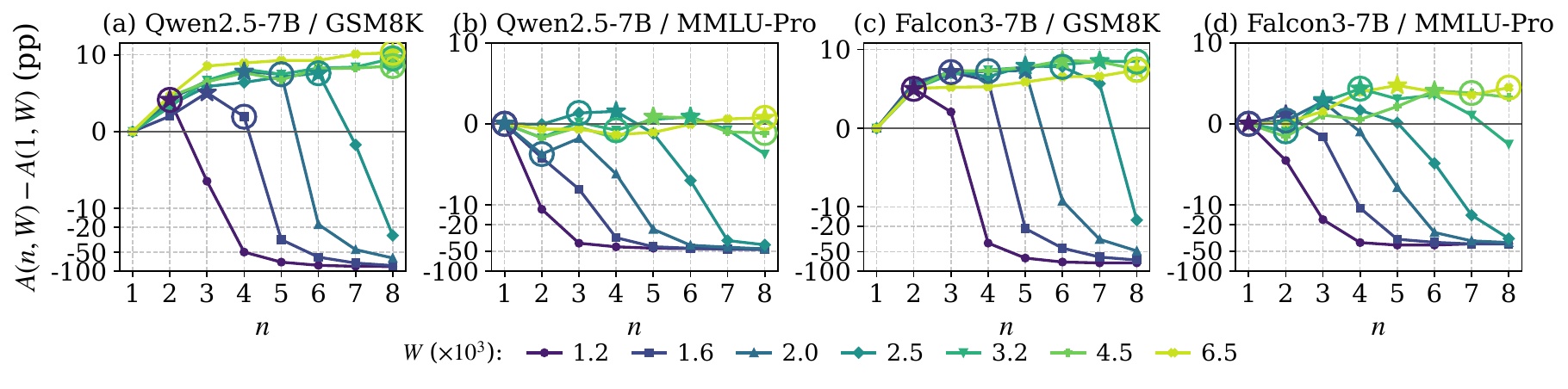}
\caption{Synergy gap $A(n,W)-A(1,W)$ against $n$ under the shared window $W$ of the standard debate protocol. Stars mark the empirical best size $\hat n^\ast(W)$, measured directly on the grid; open circles mark $n^\ast(W)$ predicted from single-agent probes alone, with no debate data.}
\label{fig:surface}
\end{figure*}

\subsection{Is the error concentration $c_w$ constant?}
\label{app:cw}
On a $K$-way decision, $c_w$ is bounded by $\tfrac1{K-1}\le c_w\le1$, while the observed values lie well within these bounds. Uniform errors would give $c_w{=}\tfrac19$ on MMLU-Pro, compared with measured values of $0.42$--$0.67$ (the stable, per-$n$ estimate; Table~\ref{tab:cwsigma}). \S\ref{sec:synergy} holds $c_w$ fixed in $n$ and $g$ so that all budget dependence in $q_{\max}$ runs through $\betaeff$; we test both directions on round-$0$ answers (Table~\ref{tab:cwsigma}).
In $n$, $\hat c_w$ is flat. Eighteen of twenty cells drift by at most $0.05$ over the swept group sizes, and the largest drift is $0.08$ on Qwen2.5-7B/GSM8K. In $g$ it rises: at $g{=}64$ little parses so the surviving wrong mass is thin and spread, at $g{=}4096$ complete answers agree on the same wrong value. A single $c_w$ therefore overstates
concentration at the bottom of the window range, where the collapse happens. Substituting the per-$n$ estimate $c_w(n)$ for the fixed cell-level value moves $n^\ast$ at $26$ of $140$ operating points and cuts mean held-out regret from $1.16$pp to
$0.80$pp, most sharply on Llama3.2-3B/MMLU-Pro ($4.40\!\to\!0.65$) and MATH500 ($2.71\!\to\!0.79$pp).

\begin{table}[h]
\centering
\caption{Range of $c_w$ over grid of $n$ and $g$. The first three columns are the same pooled estimate at $1$, $8$ and $48$ samples per question. The last two columns are held-out regret against the per-cell oracle when $n^\ast$ is chosen with the fixed cell-level $c_w$ against the per-$n$ estimate $c_w(n)$.}
\label{tab:cwsigma}
\resizebox{\textwidth}{!}{\begin{tabular}{ll ccc cc c cc}
\toprule
& & \multicolumn{3}{c}{$\hat c_w$ by samples per question} & \multicolumn{2}{c}{$\hat c_w$ range} & $n^\ast$ & \multicolumn{2}{c}{regret (pp)} \\
\cmidrule(lr){3-5}\cmidrule(lr){6-7}\cmidrule(lr){9-10}
 \textbf{Model} & \textbf{Task} & \textbf{$1$} & \textbf{$8$} & \textbf{$48$} & \textbf{over $g$} & \textbf{over $n$} & \textbf{moved} & \textbf{$c_w$ fixed} & \textbf{$c_w(n)$} \\
\midrule
\multirow{4}{*}{Llama3.2-3B} & GSM8K & 0.00 & 0.00 & 0.14 & 0.00--0.34 & 0.29--0.33 & 0/7 & 0.28 & 0.28 \\
 & MATH500 & 0.41 & 0.27 & 0.20 & 0.24--0.28 & 0.24--0.28 & 4/7 & 2.71 & 0.79 \\
 & MMLU-Pro & 0.03 & 0.02 & 0.21 & 0.02--0.44 & 0.42--0.45 & 6/7 & 4.40 & 0.65 \\
 & CSQA & 0.00 & 0.00 & 0.32 & 0.00--0.77 & 0.72--0.77 & 0/7 & 0.52 & 0.52 \\
\midrule
\multirow{4}{*}{Qwen2.5-7B} & GSM8K & 0.05 & 0.05 & 0.15 & 0.05--0.49 & 0.35--0.43 & 0/7 & 0.50 & 0.50 \\
 & MATH500 & 0.38 & 0.31 & 0.30 & 0.30--0.54 & 0.51--0.56 & 2/7 & 0.78 & 0.79 \\
 & MMLU-Pro & 0.08 & 0.08 & 0.25 & 0.07--0.63 & 0.58--0.61 & 2/7 & 1.09 & 1.50 \\
 & CSQA & 0.00 & 0.00 & 0.24 & 0.00--0.90 & 0.88--0.90 & 0/7 & 0.71 & 0.71 \\
\midrule
\multirow{4}{*}{Falcon3-7B} & GSM8K & 0.05 & 0.04 & 0.11 & 0.04--0.31 & 0.35--0.40 & 0/7 & 0.20 & 0.20 \\
 & MATH500 & 0.37 & 0.29 & 0.28 & 0.29--0.45 & 0.44--0.46 & 1/7 & 1.31 & 1.42 \\
 & MMLU-Pro & 0.10 & 0.08 & 0.24 & 0.07--0.57 & 0.54--0.58 & 1/7 & 1.12 & 0.57 \\
 & CSQA & 0.02 & 0.02 & 0.34 & 0.02--0.85 & 0.81--0.83 & 1/7 & 1.11 & 1.23 \\
\midrule
\multirow{4}{*}{Gemma2-9B} & GSM8K & 0.00 & 0.00 & 0.15 & 0.00--0.43 & 0.40--0.42 & 1/7 & 0.20 & 0.20 \\
 & MATH500 & 0.14 & 0.12 & 0.20 & 0.11--0.31 & 0.30--0.35 & 3/7 & 0.81 & 0.12 \\
 & MMLU-Pro & 0.02 & 0.02 & 0.30 & 0.02--0.65 & 0.63--0.67 & 3/7 & 0.80 & 0.41 \\
 & CSQA & 0.01 & 0.01 & 0.37 & 0.01--0.87 & 0.82--0.84 & 0/7 & 0.51 & 0.51 \\
\midrule
\multirow{4}{*}{Gemma3-12B} & GSM8K & 0.02 & 0.02 & 0.14 & 0.02--0.78 & 0.73--0.77 & 0/7 & 0.30 & 0.30 \\
 & MATH500 & 0.47 & 0.37 & 0.35 & 0.36--0.67 & 0.52--0.57 & 1/7 & 0.33 & 0.22 \\
 & MMLU-Pro & 0.06 & 0.05 & 0.23 & 0.05--0.72 & 0.58--0.64 & 1/7 & 1.70 & 1.31 \\
 & CSQA & 0.00 & 0.00 & 0.30 & 0.00--0.93 & 0.87--0.91 & 0/7 & 3.80 & 3.80 \\
\midrule
\textbf{Mean} & & 0.11 & 0.09 & 0.24 & & & 26/140 & 1.16 & 0.80 \\
\bottomrule
\end{tabular}
}
\end{table}

\subsection{Does $\sigma$ depend on $n$?}
\label{app:sigma}

\S\ref{sec:synergy} treats $\sigma$ as a single constant per cell, fitted once and reused across all $n$. However, $\sigma$ is determined by the three-valued vote score $X_j\in\{+1,0,-1\}$ once $\Pr[X_j{=}{+1}]=\betaeff$ and $\Pr[X_j{=}{-1}]=c_w(1-\betaeff)$ are fixed. Its variance is therefore $\sigma^2(n)=c_w+(1-c_w)\betaeff(n)-\delta(n)^2$, which varies with $n$ through $\betaeff(n)$. We test whether replacing the fitted constant with this derived, $n$-dependent value changes the prediction. Table~\ref{tab:sigman} reruns Algorithm~\ref{alg:predict} under both choices on the same $5$~models~$\times$~$4$~tasks~$\times$~$7$~windows $=140$ operating points as Table~\ref{tab:cwsigma}, and reports every disagreement in $\hat n^\ast$. The two choices disagree at only $7$ of $140$ points, always by one agent. Six shift downward, where the derived $\sigma(n)$ favors a smaller group, while one, Llama3.2-3B/MATH500, shifts upward. Unlike the $c_w(n)$ substitution above, which is uniformly conservative, the derived $\sigma(n)$ is not one-sided. We therefore retain the simpler fitted constant in the main text, while noting that it changes only $7/140$ predictions.

\begin{table}[h]
\centering
\caption{Where the arg max of Algorithm~\ref{alg:predict} moves when the fitted constant $\sigma$ is
replaced by the derived $\sigma(n)$, per model-task cell across the $7$ swept $W$ (columns, in
tokens). \textbf{Moved} counts the windows where $\hat n^\ast$ differs; each numbered column gives
the signed shift at that $W$ ($\hat n^\ast_{\mathrm{derived}}-\hat n^\ast_{\mathrm{const}}$), $0$
where the two choices of $\sigma$ agree.}
\label{tab:sigman}
\begin{tabular}{llcccccccc}
\toprule
\textbf{Model} & \textbf{Task} & \textbf{Moved} & \multicolumn{7}{c}{\textbf{$\hat n^\ast_{\mathrm{derived}}-\hat n^\ast_{\mathrm{const}}$ by $W$}} \\
\cmidrule(l){4-10}
 & & & \textbf{1200} & \textbf{1600} & \textbf{2000} & \textbf{2500} & \textbf{3200} & \textbf{4500} & \textbf{6500} \\
\midrule
\multirow{4}{*}{Llama3.2-3B} & GSM8K & 0/7 & 0 & 0 & 0 & 0 & 0 & 0 & 0 \\
 & MATH500 & 1/7 & 0 & 0 & 0 & +1 & 0 & 0 & 0 \\
 & MMLU-Pro & 0/7 & 0 & 0 & 0 & 0 & 0 & 0 & 0 \\
 & CSQA & 0/7 & 0 & 0 & 0 & 0 & 0 & 0 & 0 \\
\midrule
\multirow{4}{*}{Qwen2.5-7B} & GSM8K & 1/7 & 0 & -1 & 0 & 0 & 0 & 0 & 0 \\
 & MATH500 & 0/7 & 0 & 0 & 0 & 0 & 0 & 0 & 0 \\
 & MMLU-Pro & 0/7 & 0 & 0 & 0 & 0 & 0 & 0 & 0 \\
 & CSQA & 0/7 & 0 & 0 & 0 & 0 & 0 & 0 & 0 \\
\midrule
\multirow{4}{*}{Falcon3-7B} & GSM8K & 1/7 & 0 & 0 & 0 & 0 & -1 & 0 & 0 \\
 & MATH500 & 1/7 & 0 & -1 & 0 & 0 & 0 & 0 & 0 \\
 & MMLU-Pro & 0/7 & 0 & 0 & 0 & 0 & 0 & 0 & 0 \\
 & CSQA & 0/7 & 0 & 0 & 0 & 0 & 0 & 0 & 0 \\
\midrule
\multirow{4}{*}{Gemma2-9B} & GSM8K & 0/7 & 0 & 0 & 0 & 0 & 0 & 0 & 0 \\
 & MATH500 & 0/7 & 0 & 0 & 0 & 0 & 0 & 0 & 0 \\
 & MMLU-Pro & 0/7 & 0 & 0 & 0 & 0 & 0 & 0 & 0 \\
 & CSQA & 0/7 & 0 & 0 & 0 & 0 & 0 & 0 & 0 \\
\midrule
\multirow{4}{*}{Gemma3-12B} & GSM8K & 2/7 & 0 & 0 & 0 & -1 & -1 & 0 & 0 \\
 & MATH500 & 1/7 & 0 & 0 & -1 & 0 & 0 & 0 & 0 \\
 & MMLU-Pro & 0/7 & 0 & 0 & 0 & 0 & 0 & 0 & 0 \\
 & CSQA & 0/7 & 0 & 0 & 0 & 0 & 0 & 0 & 0 \\
\bottomrule
\end{tabular}

\end{table}

\subsection{Sensitivity of the prediction to $m$.} \label{app:m}
Figure~\ref{fig:ablationm} sweeps
$m\in\{0,100,320,700\}$ for the closed form against empirical argmax and the true held-out best
(oracle), all matched to a $1$M-token calibration budget and scored held out. At $m{=}0$ the
prediction is pinned at $n_{\max}$ on every window, as the theory requires with no per-agent reading
cost; $n^\ast$ then falls with $m$ at fixed $W$, with the paper's $m{=}320$ the closest of the three
swept values to the oracle. Argmax, collapse to $K{=}1$ question at this budget, stays flat and low
throughout.

\begin{figure*}[t]
\centering
\includegraphics[width=\textwidth]{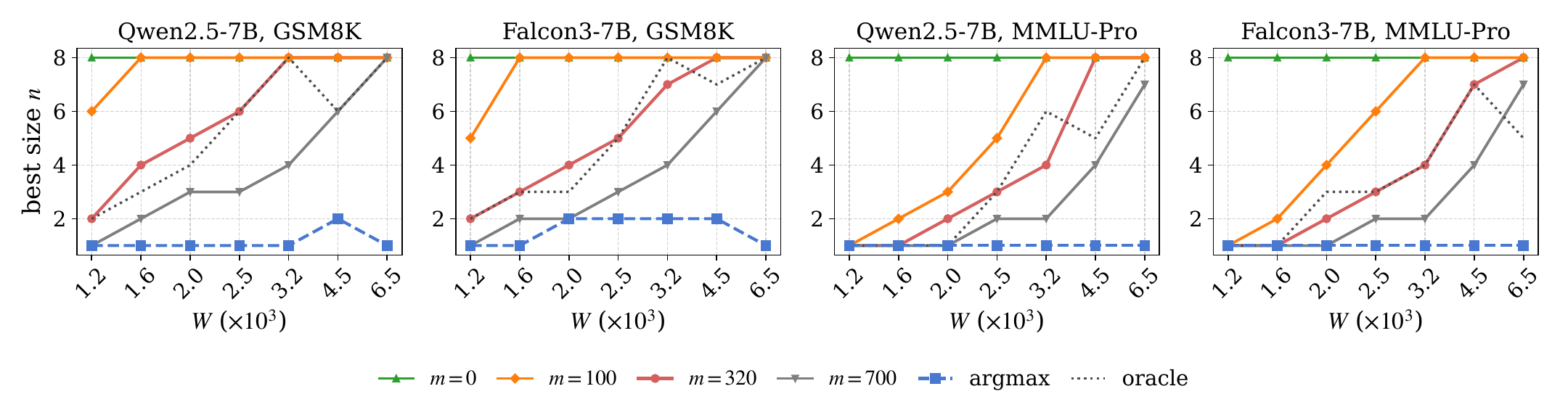}
\caption{Predicted $n^\ast(W)$ from the closed form against empirical argmax and the true held-out best (oracle), all matched to a
$1$M-token calibration budget.}
\label{fig:ablationm}
\end{figure*}

\subsection{Generation-floor recovery.}\label{app:generation-floor}

\begin{figure}[t]
\centering
\includegraphics[width=\linewidth]{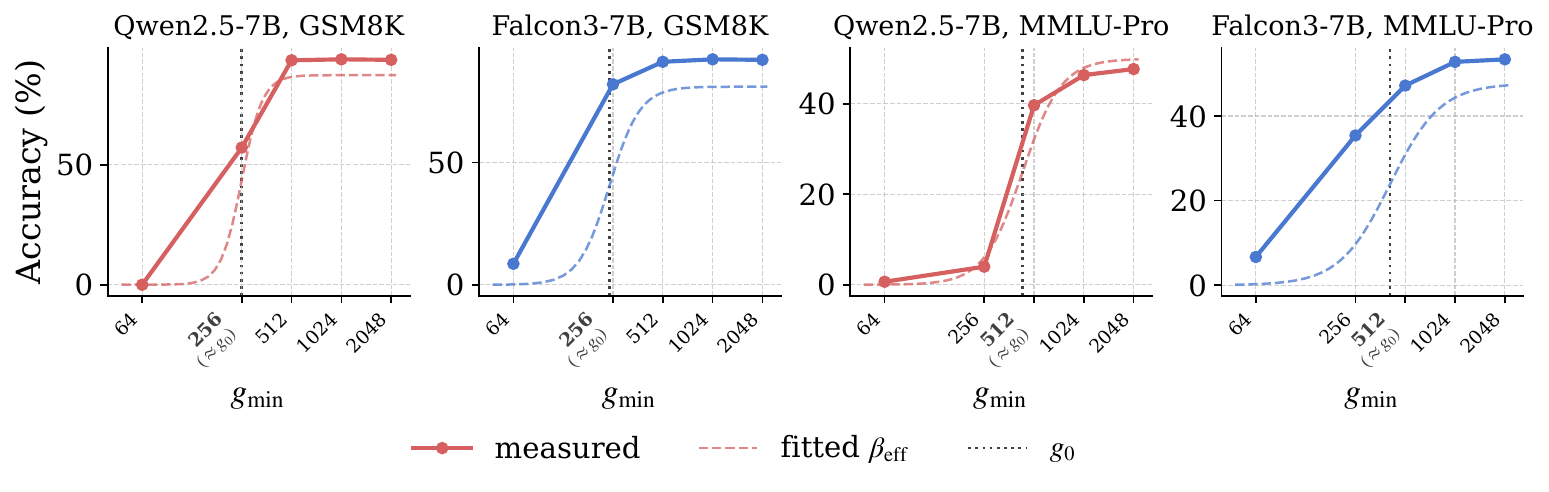}
\caption{Accuracy against the generation floor $g_{\min}$ at the worst cell ($n{=}8,W{=}1200$), Qwen2.5-7B
and Falcon3-7B on GSM8K and MMLU-Pro.}
\label{fig:gmin}
\end{figure}
\label{app:a2}

Figure~\ref{fig:gmin} sweeps the generation floor $g_{\min}$ at the worst cell ($n{=}8$, $W{=}1200$), well below the fitted scale $g_0$. Accuracy recovers as $g_{\min}$ crosses $g_0$, tracking $\betaeff(g_{\min})$ with no free parameter. GSM8K ($g_0{\approx}244$--$256$) is near ceiling by $g_{\min}{=}512$, while the harder MMLU-Pro ($g_0{\approx}414$--$437$) continues climbing past $1024$. This suggests the collapse reflects a task-dependent reasoning requirement captured by $g_0$, rather than a fragile artifact of the $g_{\min}{=}64$ floor reported in \S\ref{sec:main}. The floor gives the sharpest manifestation of the effect, not its only one.

\begin{table}[h]
\centering
\caption{Effect of question position: question first (\texttt{qf}) versus buried after the peer messages (\texttt{qb}). $\Delta{=}A_{\mathrm{qf}}-A_{\mathrm{qb}}$ (\%).}
\label{tab:qburied}
\begin{tabular}{llcccccc}
\toprule
 & & \multicolumn{3}{c}{grid mean} & \multicolumn{3}{c}{collapse cell $(n{=}8,W{=}1200)$}\\
\cmidrule(lr){3-5}\cmidrule(lr){6-8}
Model & Task & $A_{\mathrm{qf}}$ & $A_{\mathrm{qb}}$ & $\Delta$ & $A_{\mathrm{qf}}$ & $A_{\mathrm{qb}}$ & $\Delta$\\
\midrule
\multirow{2}{*}{Falcon3-7B} & GSM8K & 79.2 & 77.9 & +1.24 & 9.0 & 0.2 & +8.80\\
 & MMLU-Pro & 42.1 & 41.6 & +0.48 & 6.9 & 1.3 & +5.67\\
\midrule
\multirow{2}{*}{Qwen2.5-7B} & GSM8K & 78.0 & 78.2 & -0.24 & 0.0 & 0.1 & -0.07 \\
 & MMLU-Pro & 37.4 & 36.9 & +0.50 & 0.7 & 0.5 & +0.20\\
\midrule
Average & & 59.2 & 58.7 & +0.49 & 4.2 & 0.5 & +3.65\\
\bottomrule
\end{tabular}

\end{table}

\section{Limitations and Future work}
\label{app:scope}
\label{sec:scope}
\label{app:limitations}

\textbf{First}, the aggregation model is a tractable approximation rather than an exact account of group behavior. It relies on equicorrelation, a fixed error concentration, and an unweighted, average-based decision threshold, all of which are empirically evaluated rather than assumed to hold exactly; consensus-weighted aggregation~\citep{nguyen2026beyond} is not considered. Moreover, although we investigate heterogeneous model sets, allocation remains uniform across agent counts; adaptive allocation~\citep{nguyen2026uab} is an interesting direction for future work.

\textbf{Second}, the analysis focuses on self-generated computation, including debate and self-refinement, and does not cover settings where additional context is retrieved or supplied externally, such as retrieval-augmented generation.

\textbf{Third}, all five models have at most 12B parameters and short fitted solution lengths $g_0$, ranging from $137$ to $478$ tokens. Models with thousand-token reasoning chains would have much larger $g_0$, shifting the collapse regime to larger windows and potentially beyond deployed context lengths. We expect the qualitative shape to transfer, but not the specific operating points.

\textbf{Fourth}, additional revision rounds increase visibility and may shift the trade-off if their revision benefit does not offset the added context cost; Appendix~\ref{app:selfrefine} explores $R$ up to 3 for self-refinement.

\textbf{Fifth}, the whole account assumes one window pays for reading and generation. Under per-agent contexts, reading does not consume generation room, m = 0 in our accounting, and the interior optimum disappears ($\S$~\ref{sec:ablations}).
\begin{algorithm}[t]
\caption{Predicting group accuracy and the best size at a deployment window}
\label{alg:predict}
\begin{algorithmic}[1]
\Require $(p_{\max},g_0,\kappa)$ from the single-agent sweep, Eq.~\ref{eq:beff-fit}
\Require $\hat\rho=\mathrm{corr}(X_i,X_j)$, $X_i=\ind\left[\text{agent }i\text{ answers correctly at round }0\right]$
\Require tie penalty $c_w$
\Require deployment window $W$, protocol $(q,m,c)$, cap $n_{\max}$
\For{$n \in [1,n_{\max}]$ \textbf{continuous relaxation}}
\State $g \gets \mathrm{clip}\bigl(W-q-(n-1)m-c,;g_{\min},;g_{\mathrm{cap}}\bigr)$
\State $\betaeff \gets p_{\max},\phi(g;g_0,\kappa)$
\State $\delta \gets (1+c_w)\betaeff-c_w$
\State $n_{\mathrm{eff}} \gets n/(1+\hat\rho(n-1))$
\State $S(n) \gets \delta\sqrt{n_{\mathrm{eff}}}$
\EndFor
\State $\hat n^\ast \gets \left\lfloor\arg\max_{n\in[1,n_{\max}]}S(n)\right\rfloor$
\State \Return $\hat n^\ast$
\end{algorithmic}
\end{algorithm}

\begin{algorithm}[t]
\caption{Grid measurement: empirical argmax and the quadratic surface}
\label{alg:baselinegrid}
\begin{algorithmic}[1]
\Require calibration budget $B$ in output-token equivalents; sizes $\mathcal{N}$; windows $\mathcal{W}$; question pool $\mathcal{Q}$
\Require per-cell price $c(n,W)=\mathrm{in}(n,W)+\mathrm{out}(n,W)$, \Comment{symmetric pricing}
\State $u \gets \sum_{(n,W)\in\mathcal{N}\times\mathcal{W}} c(n,W)$;\quad
       $K \gets \min\{|\mathcal{Q}|,\lfloor B/u\rfloor\}$ \Comment{requires $B\ge u$}
\State draw $\mathcal{D}\subseteq\mathcal{Q}$ uniformly without replacement, $|\mathcal{D}|=K$
\For{$(n,W)\in\mathcal{N}\times\mathcal{W}$}
  \State run the $n$-agent debate on $\mathcal{D}$ at $W$ and record $\hat A(n,W)$
\EndFor
\State $\hat n_{\mathrm{emp}}(W) \gets \min\arg\max_{n\in\mathcal{N}}\hat A(n,W)$ for each $W\in\mathcal{W}$
  \Comment{\textbf{empirical argmax}}
\State $\psi(n,W)\gets(1,n,W,n^2,W^2,nW)^{\!\top}$ \Comment{requires $|\mathcal{N}|\,|\mathcal{W}|\ge 6$}
\State $\hat\theta \gets \arg\min_{\theta\in\mathbb{R}^6}
       \sum_{(n,W)\in\mathcal{N}\times\mathcal{W}}\bigl(\hat A(n,W)-\theta^{\!\top}\psi(n,W)\bigr)^2$
\State $\hat n_{\mathrm{quad}}(W) \gets \min\arg\max_{n\in\mathcal{N}}\hat\theta^{\!\top}\psi(n,W)$ for each $W\in\mathcal{W}$
  \Comment{\textbf{quadratic surface}}
\State \Return $\hat n_{\mathrm{emp}}(W),\ \hat n_{\mathrm{quad}}(W)$
\end{algorithmic}
\end{algorithm}

\begin{algorithm}[t]
\caption{Successive halving}
\label{alg:baselinehalving}
\begin{algorithmic}[1]
\Require calibration budget $B$; sizes $\mathcal{N}$; windows $\mathcal{W}$; question pool $\mathcal{Q}$; price $c(n,W)$
\State $R\gets\lceil\log_2|\mathcal{N}|\rceil$;\quad $B_W\gets B/|\mathcal{W}|$
\For{$W\in\mathcal{W}$}
  \State $\mathcal{S}\gets\mathcal{N}$;\quad $b\gets B_W$;\quad $\mathcal{P}\gets\mathcal{Q}$;\quad $\mathrm{ran}\gets\mathrm{false}$
  \State $k\gets\max\!\left\{1,\left\lfloor B_W\big/\bigl(R\textstyle\sum_{n\in\mathcal{S}}c(n,W)\bigr)\right\rfloor\right\}$
  \While{$|\mathcal{S}|>1$}
    \State $C\gets k\sum_{n\in\mathcal{S}}c(n,W)$
    \If{$C>b$ \textbf{or} $k>|\mathcal{P}|$} \State \textbf{break} \EndIf
    \State draw $k$ questions from $\mathcal{P}$ without replacement; remove them from $\mathcal{P}$
    \State run every $n\in\mathcal{S}$ at $W$ on these questions;
           update $\hat A(n,W)$ as the running mean over all questions seen by $n$
    \State $b\gets b-C$;\quad $\mathrm{ran}\gets\mathrm{true}$
    \State $\mathcal{S}\gets$ the $\lceil|\mathcal{S}|/2\rceil$ arms with highest $\hat A(\cdot,W)$, ties to smaller $n$;\quad $k\gets 2k$
  \EndWhile
  \State $\hat n(W)\gets\min\arg\max_{n\in\mathcal{S}}\hat A(n,W)$ if $\mathrm{ran}$, else $\min\mathcal{N}$
    \Comment{budget too small for one round}
\EndFor
\State \Return $\hat n(W)$
\end{algorithmic}
\end{algorithm}

\begin{algorithm}[t]
\caption{Closed-form aggregation baselines: learned $\betaeff$ and star exponent}
\label{alg:baselinesa}
\begin{algorithmic}[1]
\Require calibration budget $B$; probe caps $\mathcal{G}$ ($|\mathcal{G}|{=}7$, $64$--$4096$ log-spaced); question pool $\mathcal{Q}$
\Require price $c(g)=r\cdot\mathrm{in}(g)+\mathrm{out}(g)$, $r{=}1$; measured $\hat\rho$; tie penalty $c_w$
\Require protocol $(q,m,c)$; sizes $\mathcal{N}$; windows $\mathcal{W}$; room limits $g_{\min},g_{\mathrm{cap}}$
\State $u\gets\sum_{g\in\mathcal{G}}c(g)$;\quad
       $K\gets\min\{|\mathcal{Q}|,\lfloor B/u\rfloor\}$ \Comment{requires $B\ge u$}
\State draw $\mathcal{D}\subseteq\mathcal{Q}$ uniformly without replacement, $|\mathcal{D}|=K$
\State run one agent on $\mathcal{D}$ at every $g\in\mathcal{G}$; record accuracy $\hat A(g)$ \Comment{no debate is run}
\State $\hat\beta_{\mathrm{gbt}}\gets\mathrm{GBT}\bigl(\{(g,\hat A(g)):g\in\mathcal{G}\}\bigr)$
  \Comment{$100$ trees, depth $3$, learning rate $0.1$}
\State $(\hat p_{\max},\hat g_0,\hat\kappa)\gets\arg\min_{p_{\max},g_0,\kappa}
       \sum_{g\in\mathcal{G}}\bigl(\hat A(g)-p_{\max}\bigl(1-(1+g/g_0)^{-\kappa}\bigr)\bigr)^2$
  \Comment{\textsc{pow} of Table~\ref{tab:forms}}
\State $\hat\beta_{\mathrm{pow}}(g)\gets\hat p_{\max}\bigl(1-(1+g/\hat g_0)^{-\hat\kappa}\bigr)$
\For{$W\in\mathcal{W}$, $n\in\mathcal{N}$}
  \State $g\gets\mathrm{clip}\bigl(W-q-(n-1)m-c,g_{\min},g_{\mathrm{cap}}\bigr)$;\quad
         $n_{\mathrm{eff}}\gets n/\bigl(1+\hat\rho(n-1)\bigr)$
  \For{$x\in\{\mathrm{gbt},\mathrm{pow}\}$}
    \State $\delta_x\gets(1+c_w)\hat\beta_x(g)-c_w$;\quad
           $S_x(n,W)\gets\delta_x\sqrt{n_{\mathrm{eff}}}$
  \EndFor
\EndFor
\State \Return $\hat n_x(W)=\min\arg\max_{n\in\mathcal{N}}S_x(n,W)$ for each $x\in\{\mathrm{gbt},\mathrm{pow}\}$
\end{algorithmic}
\end{algorithm}

\begin{algorithm}[t]
\caption{Difficulty mixture, with our window accounting}
\label{alg:baselinemix}
\begin{algorithmic}[1]
\Require calibration budget $B$; probe caps $\mathcal{G}$; question pool $\mathcal{Q}$; price $c(g)$
\Require samples per cap $S{=}8$; simulation draws $L{=}400$; clip level $\epsilon{=}10^{-6}$
\Require sizes $\mathcal{N}$; windows $\mathcal{W}$; protocol $(q,m,c)$; room limits $g_{\min},g_{\mathrm{cap}}$; measured $\hat\rho$; tie penalty $c_w$
\State $u\gets S\sum_{g\in\mathcal{G}}c(g)$;\quad
       $K\gets\min\{|\mathcal{Q}|,\lfloor B/u\rfloor\}$ \Comment{requires $B\ge u$}
\State draw $\mathcal{D}\subseteq\mathcal{Q}$ uniformly without replacement, $|\mathcal{D}|=K$
\State run one agent $S$ times on each $i\in\mathcal{D}$ at every $g\in\mathcal{G}$;
       $\hat p_i(g)\gets$ fraction correct
\For{$W\in\mathcal{W}$, $n\in\mathcal{N}$}
  \State $g\gets\mathrm{clip}\bigl(W-q-(n-1)m-c,\;g_{\min},\;g_{\mathrm{cap}}\bigr)$
  \For{$i\in\mathcal{D}$}
    \State $p_i\gets\mathrm{clip}\bigl(\text{linear interpolation of }\hat p_i\text{ in }\log g,\;\epsilon,\;1-\epsilon\bigr)$
    \For{$\ell=1,\ldots,L$}
      \State $\zeta,\varepsilon_1,\ldots,\varepsilon_n\overset{\mathrm{iid}}{\sim}\mathrm{Normal}(0,1)$;\quad
             $z_j\gets\sqrt{\hat\rho}\,\zeta+\sqrt{1-\hat\rho}\,\varepsilon_j$
      \State $h\gets\#\{j:z_j<\Phi^{-1}(p_i)\}$;\quad
             $r\sim\mathrm{Bin}(n-h,\,c_w)$
      \State $\mathrm{win}_{i\ell}\gets\ind[h>r]+\tfrac12\,\ind[h=r]$
    \EndFor
  \EndFor
  \State $\hat A(n,W)\gets\frac{1}{KL}\sum_{i\in\mathcal{D}}\sum_{\ell=1}^{L}\mathrm{win}_{i\ell}$
\EndFor
\State \Return $\hat n(W)=\min\arg\max_{n\in\mathcal{N}}\hat A(n,W)$
\end{algorithmic}
\end{algorithm}

\begin{algorithm}[t]
\caption{Predicting the best size and model subset on a heterogeneous set}
\label{alg:predict-hetero}
\begin{algorithmic}[1]
\Require per-model $(p_{\max,i},g_{0,i},\kappa_i)$ from the single-agent sweep, Eq.~\ref{eq:beff-fit}, $i=1,\dots,K$
\Require within/between round-$0$ correlations $\hat\rho_w,\hat\rho_b$; tie penalty $c_w$
\Require deployment window $W$, protocol $(q,m,c)$, cap $n_{\max}$
\For{$n \in \{K,2K,\dots,n_{\max}\}$}
  \State $g \gets \mathrm{clip}\bigl(W-q-(n-1)m-c,\;g_{\min},\;g_{\mathrm{cap}}\bigr)$
  \State $\delta_i \gets (1+c_w)\,p_{\max,i}\,\phi_i(g;g_{0,i},\kappa_i)-c_w$ \quad\textbf{for } $i=1,\dots,K$
  \State $\mathcal{S} \gets \bigl\{\,\{1,\dots,K\}\,\bigr\}$
  \If{$\min_i \delta_i \le 0 < \max_i \delta_i$} \Comment{edges mixed in sign: a subset may win}
    \State $\mathcal{S} \gets \mathcal{S}\ \cup\ \bigl\{\,\{i\} : \delta_i > 0\,\bigr\}$
  \EndIf
  \For{$S \in \mathcal{S}$}
    \State $\bar\delta_S \gets |S|^{-1}\textstyle\sum_{i\in S}\delta_i$
    \State $n_{\mathrm{eff}} \gets n\big/\bigl(1+\hat\rho_w(n/|S|-1)+\hat\rho_b(n-n/|S|)\bigr)$
    \State $\mathcal{V}(n,S) \gets \operatorname{sign}(\bar\delta_S) n_{\mathrm{eff}}\,\bar\delta_S^{\,2}$
  \EndFor
\EndFor
\State $(\hat n^\ast,\hat S) \gets \arg\max_{(n,S)}\mathcal{V}(n,S)$
\State \Return $\hat n^\ast,\ \hat S$
\end{algorithmic}
\end{algorithm}

\begin{table*}[p]
\centering
\caption{Every cell behind Table~\ref{tab:matchedcost} at the $1.0$M budget: held-out regret
(percentage points) against the per-cell \textbf{Oracle} in the last column. Columns are SH
successive halving, EA empirical argmax, QS quadratic surface, LA learned ability, SE star exponent,
DM difficulty mixture, TB ours.}
\label{tab:matchedcostpercell}
\setlength{\tabcolsep}{4pt}
\resizebox{\textwidth}{!}{%
\begin{tabular}{@{}llccccccccc@{\hspace{9pt}}c@{}}
\toprule
\textbf{Model} & \textbf{Dataset} & QS & EA & SH & $n{=}n_{\max}$ & $n{=}1$ & DM & SE & LA & \textbf{TB} & \textbf{Oracle} \\
\midrule
\multirow{4}{*}{Llama3.2-3B} & GSM8K & 5.20 & 7.04 & 5.84 & 19.22 & 13.08 & 7.81 & 5.70 & 2.26 & 0.41 & 79.34 \\
 & MATH500 & 2.76 & 5.57 & 10.13 & 15.38 & 8.23 & 8.23 & 3.31 & 3.93 & 3.48 & 43.43 \\
 & MMLU-Pro & 4.52 & 3.33 & 6.21 & 14.74 & 4.42 & 4.43 & 3.71 & 3.60 & 4.42 & 37.22 \\
 & CSQA & 4.35 & 3.59 & 6.32 & 22.09 & 5.80 & 2.77 & 2.56 & 1.08 & 0.87 & 69.87 \\
\midrule
\multirow{4}{*}{Qwen2.5-7B} & GSM8K & 4.14 & 6.57 & 11.20 & 39.78 & 7.83 & 0.91 & 1.81 & 1.40 & 2.30 & 92.10 \\
 & MATH500 & 3.97 & 3.45 & 20.37 & 38.52 & 2.89 & 2.89 & 1.28 & 1.38 & 1.12 & 70.46 \\
 & MMLU-Pro & 0.69 & 1.20 & 11.52 & 27.29 & 0.78 & 0.78 & 0.69 & 1.08 & 0.96 & 49.43 \\
 & CSQA & 1.03 & 1.48 & 5.13 & 36.25 & 1.37 & 0.70 & 0.85 & 0.71 & 0.78 & 82.07 \\
\midrule
\multirow{4}{*}{Falcon3-7B} & GSM8K & 2.35 & 6.34 & 8.64 & 33.32 & 7.89 & 0.77 & 1.58 & 0.95 & 0.29 & 90.88 \\
 & MATH500 & 3.62 & 3.12 & 15.99 & 32.95 & 3.35 & 3.71 & 1.19 & 1.31 & 1.03 & 63.92 \\
 & MMLU-Pro & 2.07 & 2.55 & 8.18 & 22.72 & 2.76 & 2.76 & 1.98 & 1.33 & 1.25 & 51.66 \\
 & CSQA & 1.67 & 1.45 & 3.90 & 20.14 & 1.48 & 1.22 & 1.13 & 0.71 & 1.15 & 74.71 \\
\midrule
\multirow{4}{*}{Gemma2-9B} & GSM8K & 2.17 & 4.17 & 1.45 & 15.53 & 5.87 & 3.15 & 0.93 & 0.26 & 0.24 & 86.54 \\
 & MATH500 & 5.99 & 4.32 & 11.77 & 16.00 & 6.55 & 6.55 & 3.83 & 2.30 & 2.70 & 49.92 \\
 & MMLU-Pro & 1.74 & 1.63 & 8.02 & 22.31 & 2.03 & 2.03 & 1.91 & 1.69 & 1.71 & 52.76 \\
 & CSQA & 2.03 & 2.63 & 1.30 & 13.11 & 2.83 & 2.43 & 1.57 & 0.83 & 0.96 & 75.88 \\
\midrule
\multirow{4}{*}{Gemma3-12B} & GSM8K & 1.16 & 1.61 & 8.83 & 36.79 & 1.67 & 1.54 & 0.54 & 2.08 & 0.98 & 93.84 \\
 & MATH500 & 0.76 & 1.28 & 23.51 & 37.77 & 0.99 & 0.95 & 0.43 & 0.50 & 0.60 & 73.67 \\
 & MMLU-Pro & 2.60 & 1.91 & 10.60 & 33.35 & 1.42 & 1.42 & 0.97 & 2.80 & 2.69 & 59.49 \\
 & CSQA & 4.57 & 0.84 & 2.61 & 38.90 & 0.05 & 1.29 & 2.94 & 3.41 & 3.35 & 77.68 \\
\bottomrule
\end{tabular}}

\end{table*}

% ----

\begin{table*}[p]
\centering
\caption{Every cell behind Table~\ref{tab:matchedcost} at the $2.0$M budget. Columns and entries as in
Table~\ref{tab:matchedcostpercell}.}
\label{tab:matchedcostpercell2}
\setlength{\tabcolsep}{4pt}
\resizebox{\textwidth}{!}{%
\begin{tabular}{@{}llccccccccc@{\hspace{9pt}}c@{}}
\toprule
\textbf{Model} & \textbf{Dataset} & QS & EA & SH & $n{=}n_{\max}$ & $n{=}1$ & DM & SE & LA & \textbf{TB} & \textbf{Oracle} \\
\midrule
\multirow{4}{*}{Llama3.2-3B} & GSM8K & 5.04 & 4.26 & 3.40 & 19.22 & 13.08 & 5.15 & 5.70 & 1.80 & 0.41 & 79.34 \\
 & MATH500 & 4.83 & 6.22 & 6.31 & 15.38 & 8.23 & 8.23 & 3.54 & 2.69 & 1.80 & 43.43 \\
 & MMLU-Pro & 2.58 & 3.40 & 9.52 & 14.74 & 4.42 & 4.42 & 4.42 & 4.05 & 4.42 & 37.22 \\
 & CSQA & 4.54 & 4.04 & 2.84 & 22.09 & 5.80 & 3.67 & 2.50 & 1.03 & 0.87 & 69.87 \\
\midrule
\multirow{4}{*}{Qwen2.5-7B} & GSM8K & 4.81 & 4.03 & 1.44 & 39.78 & 7.83 & 1.06 & 1.80 & 1.39 & 0.46 & 92.10 \\
 & MATH500 & 1.95 & 2.34 & 6.05 & 38.52 & 2.89 & 2.89 & 1.28 & 1.31 & 1.10 & 70.46 \\
 & MMLU-Pro & 0.75 & 1.55 & 10.47 & 27.29 & 0.78 & 0.83 & 0.78 & 0.88 & 0.96 & 49.43 \\
 & CSQA & 1.45 & 1.58 & 1.76 & 36.25 & 1.37 & 0.92 & 0.85 & 0.60 & 0.79 & 82.07 \\
\midrule
\multirow{4}{*}{Falcon3-7B} & GSM8K & 5.34 & 4.32 & 1.53 & 33.32 & 7.89 & 1.02 & 1.58 & 0.69 & 0.29 & 90.88 \\
 & MATH500 & 2.66 & 2.43 & 10.63 & 32.95 & 3.35 & 3.35 & 1.19 & 1.37 & 1.04 & 63.92 \\
 & MMLU-Pro & 2.28 & 2.40 & 12.13 & 22.72 & 2.76 & 2.87 & 1.46 & 1.52 & 0.89 & 51.66 \\
 & CSQA & 1.52 & 1.92 & 1.26 & 20.14 & 1.48 & 1.41 & 1.19 & 0.62 & 1.15 & 74.71 \\
\midrule
\multirow{4}{*}{Gemma2-9B} & GSM8K & 2.05 & 3.98 & 2.89 & 15.53 & 5.87 & 4.03 & 0.93 & 0.19 & 0.22 & 86.54 \\
 & MATH500 & 2.98 & 2.90 & 4.05 & 16.00 & 6.55 & 6.55 & 3.89 & 2.30 & 2.66 & 49.92 \\
 & MMLU-Pro & 1.63 & 2.54 & 5.43 & 22.31 & 2.03 & 2.03 & 1.51 & 0.87 & 0.77 & 52.76 \\
 & CSQA & 1.61 & 2.77 & 1.16 & 13.11 & 2.83 & 2.83 & 1.57 & 1.05 & 0.94 & 75.88 \\
\midrule
\multirow{4}{*}{Gemma3-12B} & GSM8K & 1.20 & 1.67 & 0.42 & 36.79 & 1.67 & 1.52 & 0.55 & 1.47 & 0.37 & 93.84 \\
 & MATH500 & 0.40 & 2.57 & 11.89 & 37.77 & 0.99 & 0.99 & 0.43 & 0.50 & 0.60 & 73.67 \\
 & MMLU-Pro & 1.34 & 1.76 & 14.34 & 33.35 & 1.42 & 1.44 & 1.07 & 2.06 & 1.57 & 59.49 \\
 & CSQA & 0.69 & 0.47 & 8.70 & 38.90 & 0.05 & 1.57 & 2.94 & 3.35 & 3.35 & 77.68 \\
\bottomrule
\end{tabular}}

\end{table*}

% ----

\begin{table*}[p]
\centering
\caption{Every cell behind Table~\ref{tab:rate} at the $4.0$M budget. Columns and entries as in
Table~\ref{tab:matchedcostpercell}.}
\label{tab:matchedcostpercell4}
\setlength{\tabcolsep}{4pt}
\resizebox{\textwidth}{!}{%
\begin{tabular}{@{}llccccccccc@{\hspace{9pt}}c@{}}
\toprule
\textbf{Model} & \textbf{Dataset} & QS & EA & SH & $n{=}n_{\max}$ & $n{=}1$ & DM & SE & LA & \textbf{TB} & \textbf{Oracle} \\
\midrule
\multirow{4}{*}{Llama3.2-3B} & GSM8K & 3.22 & 4.34 & 2.16 & 19.22 & 13.08 & 4.30 & 5.70 & 1.59 & 0.41 & 79.34 \\
 & MATH500 & 3.93 & 1.85 & 3.50 & 15.38 & 8.23 & 8.23 & 3.20 & 3.04 & 1.92 & 43.43 \\
 & MMLU-Pro & 2.60 & 2.17 & 2.15 & 14.74 & 4.42 & 4.42 & 4.42 & 4.32 & 4.42 & 37.22 \\
 & CSQA & 3.51 & 3.36 & 1.76 & 22.09 & 5.80 & 2.77 & 2.49 & 0.96 & 0.87 & 69.87 \\
\midrule
\multirow{4}{*}{Qwen2.5-7B} & GSM8K & 4.57 & 2.69 & 2.43 & 39.78 & 7.83 & 0.99 & 1.80 & 1.98 & 0.43 & 92.10 \\
 & MATH500 & 2.10 & 2.51 & 1.77 & 38.52 & 2.89 & 2.89 & 1.28 & 1.51 & 1.09 & 70.46 \\
 & MMLU-Pro & 1.65 & 2.16 & 7.11 & 27.29 & 0.78 & 0.78 & 0.71 & 1.32 & 0.98 & 49.43 \\
 & CSQA & 1.25 & 1.96 & 0.92 & 36.25 & 1.37 & 0.80 & 0.85 & 0.57 & 0.76 & 82.07 \\
\midrule
\multirow{4}{*}{Falcon3-7B} & GSM8K & 5.42 & 2.89 & 1.44 & 33.32 & 7.89 & 1.19 & 1.58 & 0.91 & 0.29 & 90.88 \\
 & MATH500 & 1.85 & 1.75 & 6.34 & 32.95 & 3.35 & 3.35 & 1.19 & 1.31 & 0.94 & 63.92 \\
 & MMLU-Pro & 1.46 & 3.97 & 8.85 & 22.72 & 2.76 & 2.76 & 1.50 & 1.46 & 0.92 & 51.66 \\
 & CSQA & 1.96 & 1.45 & 0.92 & 20.14 & 1.48 & 1.30 & 1.13 & 0.48 & 1.22 & 74.71 \\
\midrule
\multirow{4}{*}{Gemma2-9B} & GSM8K & 1.32 & 1.68 & 0.95 & 15.53 & 5.87 & 4.66 & 0.83 & 0.21 & 0.20 & 86.54 \\
 & MATH500 & 3.11 & 1.65 & 2.27 & 16.00 & 6.55 & 6.55 & 3.06 & 2.02 & 2.53 & 49.92 \\
 & MMLU-Pro & 1.52 & 2.08 & 1.78 & 22.31 & 2.03 & 2.03 & 1.43 & 1.20 & 0.88 & 52.76 \\
 & CSQA & 2.15 & 2.19 & 1.25 & 13.11 & 2.83 & 2.83 & 1.57 & 0.89 & 0.95 & 75.88 \\
\midrule
\multirow{4}{*}{Gemma3-12B} & GSM8K & 1.12 & 0.84 & 0.96 & 36.79 & 1.67 & 1.48 & 0.57 & 1.58 & 0.37 & 93.84 \\
 & MATH500 & 0.44 & 1.14 & 8.33 & 37.77 & 0.99 & 0.99 & 0.43 & 0.50 & 0.60 & 73.67 \\
 & MMLU-Pro & 1.08 & 1.41 & 4.19 & 33.35 & 1.42 & 1.42 & 1.00 & 2.06 & 1.51 & 59.49 \\
 & CSQA & 0.69 & 1.51 & 1.61 & 38.90 & 0.05 & 1.40 & 2.94 & 3.35 & 3.35 & 77.68 \\
\bottomrule
\end{tabular}}

\end{table*}

% ----

\begin{table*}[p]
\centering
\caption{Every cell behind Table~\ref{tab:rate} at the $8.0$M budget. Columns and entries as in
Table~\ref{tab:matchedcostpercell}.}
\label{tab:matchedcostpercell8}
\setlength{\tabcolsep}{4pt}
\resizebox{\textwidth}{!}{%
\begin{tabular}{@{}llccccccccc@{\hspace{9pt}}c@{}}
\toprule
\textbf{Model} & \textbf{Dataset} & QS & EA & SH & $n{=}n_{\max}$ & $n{=}1$ & DM & SE & LA & \textbf{TB} & \textbf{Oracle} \\
\midrule
\multirow{4}{*}{Llama3.2-3B} & GSM8K & 5.11 & 2.08 & 2.49 & 19.22 & 13.08 & 4.30 & 5.70 & 1.59 & 0.41 & 79.34 \\
 & MATH500 & 3.77 & 2.74 & 2.76 & 15.38 & 8.23 & 8.23 & 3.20 & 3.04 & 2.62 & 43.43 \\
 & MMLU-Pro & 2.74 & 1.99 & 3.65 & 14.74 & 4.42 & 4.42 & 4.42 & 4.32 & 4.42 & 37.22 \\
 & CSQA & 4.80 & 2.80 & 2.42 & 22.09 & 5.80 & 2.77 & 2.49 & 0.96 & 0.87 & 69.87 \\
\midrule
\multirow{4}{*}{Qwen2.5-7B} & GSM8K & 4.92 & 1.80 & 1.09 & 39.78 & 7.83 & 0.99 & 1.80 & 1.98 & 0.43 & 92.10 \\
 & MATH500 & 2.08 & 1.76 & 1.03 & 38.52 & 2.89 & 2.89 & 1.28 & 1.51 & 1.10 & 70.46 \\
 & MMLU-Pro & 0.54 & 1.85 & 2.06 & 27.29 & 0.78 & 0.78 & 0.71 & 1.21 & 1.01 & 49.43 \\
 & CSQA & 0.99 & 0.99 & 1.33 & 36.25 & 1.37 & 0.80 & 0.85 & 0.57 & 0.76 & 82.07 \\
\midrule
\multirow{4}{*}{Falcon3-7B} & GSM8K & 4.79 & 1.88 & 1.26 & 33.32 & 7.89 & 1.19 & 1.58 & 0.91 & 0.29 & 90.88 \\
 & MATH500 & 2.15 & 1.70 & 1.23 & 32.95 & 3.35 & 3.35 & 1.19 & 1.31 & 0.94 & 63.92 \\
 & MMLU-Pro & 2.35 & 1.93 & 1.32 & 22.72 & 2.76 & 2.76 & 1.46 & 1.44 & 1.03 & 51.66 \\
 & CSQA & 1.70 & 1.04 & 1.39 & 20.14 & 1.48 & 1.30 & 1.13 & 0.48 & 1.22 & 74.71 \\
\midrule
\multirow{4}{*}{Gemma2-9B} & GSM8K & 1.07 & 1.28 & 1.29 & 15.53 & 5.87 & 4.66 & 0.83 & 0.21 & 0.20 & 86.54 \\
 & MATH500 & 1.70 & 1.22 & 1.39 & 16.00 & 6.55 & 6.55 & 2.72 & 1.57 & 2.66 & 49.92 \\
 & MMLU-Pro & 1.58 & 1.12 & 1.23 & 22.31 & 2.03 & 2.03 & 1.47 & 0.94 & 0.86 & 52.76 \\
 & CSQA & 2.09 & 1.85 & 1.32 & 13.11 & 2.83 & 2.83 & 1.57 & 0.89 & 0.95 & 75.88 \\
\midrule
\multirow{4}{*}{Gemma3-12B} & GSM8K & 1.22 & 0.83 & 0.23 & 36.79 & 1.67 & 1.48 & 0.57 & 1.58 & 0.37 & 93.84 \\
 & MATH500 & 0.47 & 1.61 & 1.05 & 37.77 & 0.99 & 0.99 & 0.43 & 0.50 & 0.60 & 73.67 \\
 & MMLU-Pro & 1.32 & 1.45 & 2.25 & 33.35 & 1.42 & 1.42 & 1.07 & 2.06 & 1.81 & 59.49 \\
 & CSQA & 0.78 & 1.48 & 2.15 & 38.90 & 0.05 & 1.40 & 2.94 & 3.35 & 3.35 & 77.68 \\
\bottomrule
\end{tabular}}

\end{table*}

% ----

\begin{table*}[p]
\centering
\caption{Every cell behind Table~\ref{tab:rate} at the $16.0$M budget. Columns and entries as in
Table~\ref{tab:matchedcostpercell}.}
\label{tab:matchedcostpercellhigh}
\setlength{\tabcolsep}{4pt}
\resizebox{\textwidth}{!}{%
\begin{tabular}{@{}llccccccccc@{\hspace{9pt}}c@{}}
\toprule
\textbf{Model} & \textbf{Dataset} & QS & EA & SH & $n{=}n_{\max}$ & $n{=}1$ & DM & SE & LA & \textbf{TB} & \textbf{Oracle} \\
\midrule
\multirow{4}{*}{Llama3.2-3B} & GSM8K & 3.68 & 1.13 & 0.75 & 19.22 & 13.08 & 4.30 & 5.70 & 1.59 & 0.41 & 79.34 \\
 & MATH500 & 4.34 & 1.84 & 1.41 & 15.38 & 8.23 & 8.23 & 3.20 & 3.04 & 2.62 & 43.43 \\
 & MMLU-Pro & 2.35 & 1.08 & 1.67 & 14.74 & 4.42 & 4.42 & 4.42 & 4.32 & 4.42 & 37.22 \\
 & CSQA & 4.21 & 1.86 & 1.53 & 22.09 & 5.80 & 2.77 & 2.49 & 0.96 & 0.87 & 69.87 \\
\midrule
\multirow{4}{*}{Qwen2.5-7B} & GSM8K & 4.74 & 1.37 & 0.94 & 39.78 & 7.83 & 0.99 & 1.80 & 1.98 & 0.43 & 92.10 \\
 & MATH500 & 2.17 & 1.39 & 0.99 & 38.52 & 2.89 & 2.89 & 1.28 & 1.51 & 1.10 & 70.46 \\
 & MMLU-Pro & 0.54 & 2.29 & 1.19 & 27.29 & 0.78 & 0.78 & 0.71 & 1.21 & 1.01 & 49.43 \\
 & CSQA & 1.01 & 0.94 & 0.98 & 36.25 & 1.37 & 0.80 & 0.85 & 0.57 & 0.76 & 82.07 \\
\midrule
\multirow{4}{*}{Falcon3-7B} & GSM8K & 5.13 & 1.43 & 0.69 & 33.32 & 7.89 & 1.19 & 1.58 & 0.91 & 0.29 & 90.88 \\
 & MATH500 & 2.28 & 1.46 & 1.04 & 32.95 & 3.35 & 3.35 & 1.19 & 1.31 & 0.94 & 63.92 \\
 & MMLU-Pro & 2.05 & 0.69 & 1.33 & 22.72 & 2.76 & 2.76 & 1.46 & 1.44 & 1.03 & 51.66 \\
 & CSQA & 1.71 & 1.20 & 1.00 & 20.14 & 1.48 & 1.30 & 1.13 & 0.48 & 1.22 & 74.71 \\
\midrule
\multirow{4}{*}{Gemma2-9B} & GSM8K & 1.90 & 1.35 & 0.60 & 15.53 & 5.87 & 4.66 & 0.83 & 0.21 & 0.20 & 86.54 \\
 & MATH500 & 1.97 & 1.10 & 0.82 & 16.00 & 6.55 & 6.55 & 2.72 & 1.57 & 2.66 & 49.92 \\
 & MMLU-Pro & 1.71 & 1.63 & 1.67 & 22.31 & 2.03 & 2.03 & 1.47 & 0.94 & 0.86 & 52.76 \\
 & CSQA & 2.17 & 1.51 & 1.23 & 13.11 & 2.83 & 2.83 & 1.57 & 0.89 & 0.95 & 75.88 \\
\midrule
\multirow{4}{*}{Gemma3-12B} & GSM8K & 1.31 & 1.20 & 0.59 & 36.79 & 1.67 & 1.48 & 0.57 & 1.58 & 0.37 & 93.84 \\
 & MATH500 & 0.47 & 1.01 & 0.66 & 37.77 & 0.99 & 0.99 & 0.43 & 0.50 & 0.60 & 73.67 \\
 & MMLU-Pro & 1.46 & 1.35 & 2.23 & 33.35 & 1.42 & 1.42 & 1.07 & 2.06 & 1.81 & 59.49 \\
 & CSQA & 0.67 & 1.22 & 1.11 & 38.90 & 0.05 & 1.40 & 2.94 & 3.35 & 3.35 & 77.68 \\
\bottomrule
\end{tabular}}

\end{table*}
% ----

\begin{table}[t]
\centering
\caption{Every cell behind Table~\ref{tab:heterobk} at the $1.0$M budget: held-out regret
(percentage points) against the widened per-cell oracle (set-best or any single member).
Abbreviations as in Table~\ref{tab:matchedcostpercell}, $n{=}3$--$8$ the fixed-size rules.}
\label{tab:heterobkpercell}
\resizebox{\textwidth}{!}{%
\begin{tabular}{@{}llcccccccccccc@{}}
\toprule
\textbf{Model set} & \textbf{Dataset} & SH & QS & EA & $n{=}8$ & $n{=}6$ & $n{=}3$ & $n{=}4$ & TB w/o gate & DM & SE & LA & \textbf{TB} \\
\midrule
\multirow{4}{*}{Qwen2.5-7B+Gemma3-12B+Llama3.2-3B} & GSM8K & 0.47 & 0.59 & 0.80 & -- & 15.18 & 0.93 & -- & 1.50 & 0.17 & 0.42 & 0.13 & 0.13 \\
 & MATH500 & 3.65 & 2.52 & 3.16 & -- & 19.65 & 3.54 & -- & 1.98 & 3.54 & 2.10 & 1.27 & 1.03 \\
 & MMLU-Pro & 6.61 & 6.31 & 3.48 & -- & 11.78 & 5.59 & -- & 3.39 & 5.59 & 3.42 & 3.02 & 3.39 \\
 & CSQA & 1.15 & 0.91 & 1.13 & -- & 11.99 & 1.47 & -- & 8.19 & 1.10 & 0.62 & 0.50 & 0.50 \\
\midrule
\multirow{4}{*}{Qwen2.5-7B+Falcon3-7B+Gemma2-9B} & GSM8K & 2.02 & 0.79 & 0.62 & -- & 13.06 & 1.17 & -- & 1.63 & 0.00 & 0.40 & 0.02 & 0.02 \\
 & MATH500 & 5.10 & 3.98 & 4.22 & -- & 20.73 & 4.14 & -- & 1.91 & 4.13 & 3.51 & 3.38 & 2.00 \\
 & MMLU-Pro & 0.94 & 0.99 & 0.89 & -- & 13.85 & 1.51 & -- & 2.98 & 1.51 & 0.88 & 0.80 & 2.23 \\
 & CSQA & 4.30 & 3.87 & 1.78 & -- & 8.56 & 1.56 & -- & 2.79 & 1.81 & 1.60 & 1.75 & 1.75 \\
\midrule
\multirow{4}{*}{Qwen2.5-7B+Falcon3-7B+Gemma2-9B+Gemma3-12B} & GSM8K & 1.82 & 1.28 & 1.30 & 22.17 & -- & -- & 1.30 & 1.60 & 1.21 & 1.15 & 1.09 & 1.09 \\
 & MATH500 & 5.40 & 4.23 & 5.05 & 26.67 & -- & -- & 4.40 & 5.07 & 4.23 & 3.80 & 3.80 & 4.31 \\
 & MMLU-Pro & 9.44 & 0.98 & 2.24 & 18.13 & -- & -- & 1.48 & 6.75 & 1.38 & 1.37 & 0.86 & 4.77 \\
 & CSQA & 1.25 & 0.76 & 0.98 & 15.90 & -- & -- & 1.00 & 0.70 & 0.70 & 0.76 & 0.70 & 0.70 \\
\midrule
\multirow{4}{*}{Qwen2.5-7B+Falcon3-7B+Gemma3-12B+Llama3.2-3B} & GSM8K & 0.93 & 0.38 & 0.52 & 22.04 & -- & -- & 0.52 & 0.77 & 0.10 & 0.31 & 0.12 & 0.12 \\
 & MATH500 & 6.65 & 3.67 & 3.98 & 24.86 & -- & -- & 3.98 & 4.06 & 3.27 & 2.98 & 2.98 & 2.77 \\
 & MMLU-Pro & 5.66 & 1.65 & 3.11 & 13.23 & -- & -- & 1.96 & 4.10 & 1.74 & 1.39 & 1.30 & 4.10 \\
 & CSQA & 1.40 & 0.88 & 1.02 & 17.72 & -- & -- & 1.04 & 3.12 & 0.66 & 0.70 & 0.62 & 0.62 \\
\bottomrule
\end{tabular}}

\end{table}

\begin{table}[t]
\centering
\caption{Every cell behind Table~\ref{tab:heterobk} at the $2.0$M budget. Columns and entries as in
Table~\ref{tab:heterobkpercell}.}
\label{tab:heterobkpercell2}
\resizebox{\textwidth}{!}{%
\begin{tabular}{@{}llcccccccccccc@{}}
\toprule
\textbf{Model set} & \textbf{Dataset} & SH & QS & EA & $n{=}8$ & $n{=}6$ & $n{=}3$ & $n{=}4$ & TB w/o gate & DM & SE & LA & \textbf{TB} \\
\midrule
\multirow{4}{*}{Qwen2.5-7B+Gemma3-12B+Llama3.2-3B} & GSM8K & 1.03 & 0.59 & 0.64 & -- & 15.18 & 0.93 & -- & 1.01 & 0.24 & 0.42 & 0.13 & 0.17 \\
 & MATH500 & 2.17 & 2.65 & 2.11 & -- & 19.65 & 3.54 & -- & 1.48 & 3.54 & 2.10 & 1.27 & 0.83 \\
 & MMLU-Pro & 6.03 & 3.42 & 3.66 & -- & 11.78 & 5.59 & -- & 1.38 & 4.91 & 3.42 & 2.99 & 1.38 \\
 & CSQA & 0.39 & 0.96 & 0.58 & -- & 11.99 & 1.47 & -- & 3.79 & 0.74 & 0.62 & 0.50 & 1.60 \\
\midrule
\multirow{4}{*}{Qwen2.5-7B+Falcon3-7B+Gemma2-9B} & GSM8K & 0.56 & 0.83 & 0.87 & -- & 13.06 & 1.17 & -- & 1.89 & 0.22 & 0.40 & 0.02 & 0.02 \\
 & MATH500 & 3.98 & 3.94 & 3.52 & -- & 20.73 & 4.14 & -- & 0.83 & 4.14 & 3.51 & 3.43 & 2.43 \\
 & MMLU-Pro & 0.23 & 1.08 & 0.69 & -- & 13.85 & 1.51 & -- & 5.20 & 1.25 & 0.88 & 0.18 & 1.97 \\
 & CSQA & 1.97 & 1.40 & 2.10 & -- & 8.56 & 1.56 & -- & 1.96 & 1.54 & 1.60 & 1.75 & 1.75 \\
\midrule
\multirow{4}{*}{Qwen2.5-7B+Falcon3-7B+Gemma2-9B+Gemma3-12B} & GSM8K & 1.58 & 1.28 & 1.20 & 22.17 & -- & -- & 1.30 & 1.10 & 1.09 & 1.15 & 1.09 & 1.02 \\
 & MATH500 & 5.18 & 4.23 & 4.12 & 26.67 & -- & -- & 4.40 & 3.26 & 3.94 & 3.80 & 3.80 & 3.95 \\
 & MMLU-Pro & 3.30 & 1.38 & 1.81 & 18.13 & -- & -- & 1.48 & 3.81 & 1.27 & 1.10 & 0.57 & 2.52 \\
 & CSQA & 1.03 & 0.75 & 0.97 & 15.90 & -- & -- & 1.00 & 3.34 & 0.68 & 0.76 & 0.70 & 0.70 \\
\midrule
\multirow{4}{*}{Qwen2.5-7B+Falcon3-7B+Gemma3-12B+Llama3.2-3B} & GSM8K & 0.53 & 0.38 & 0.44 & 22.04 & -- & -- & 0.52 & 0.68 & 0.20 & 0.31 & 0.12 & 0.12 \\
 & MATH500 & 5.68 & 3.98 & 3.98 & 24.86 & -- & -- & 3.98 & 2.24 & 3.25 & 2.98 & 2.98 & 2.24 \\
 & MMLU-Pro & 3.71 & 1.60 & 1.60 & 13.23 & -- & -- & 1.96 & 2.36 & 1.96 & 1.34 & 1.09 & 2.36 \\
 & CSQA & 1.43 & 0.88 & 0.91 & 17.72 & -- & -- & 1.04 & 3.13 & 0.74 & 0.70 & 0.62 & 0.62 \\
\bottomrule
\end{tabular}}

\end{table}

\begin{table}[t]
\centering
\caption{Every cell behind Table~\ref{tab:heterobaselines} at the $1.0$M budget: held-out regret
(percentage points) against the widened per-cell oracle (set-best or any single member).
Abbreviations as in Table~\ref{tab:matchedcostpercell}.}
\label{tab:heterobaselinespercell}
\resizebox{\textwidth}{!}{\small\resizebox{\textwidth}{!}{%
\begin{tabular}{@{}llcccccccccc@{}}
\toprule
\textbf{Model pair} & \textbf{Dataset} & SH & QS & EA & $n{=}n_{\max}$ & $n{=}2$ & DM & SE & LA & TB w/o gate & \textbf{TB} \\
\midrule
\multirow{4}{*}{Qwen2.5-7B+Gemma3-12B} & GSM8K & 6.34 & 1.78 & 2.34 & 32.07 & 2.34 & 0.26 & 0.80 & 0.61 & 0.47 & 0.51 \\
 & MATH500 & 12.75 & 2.80 & 3.46 & 36.14 & 3.99 & 3.17 & 0.84 & 0.80 & 2.28 & 0.84 \\
 & MMLU-Pro & 10.71 & 3.47 & 3.26 & 28.72 & 5.65 & 4.71 & 4.36 & 3.85 & 1.83 & 1.75 \\
 & CSQA & 6.05 & 2.37 & 2.67 & 26.34 & 2.67 & 1.64 & 1.69 & 0.97 & 0.77 & 0.77 \\
\midrule
\multirow{4}{*}{Qwen2.5-7B+Llama3.2-3B} & GSM8K & 7.42 & 4.82 & 5.04 & 31.07 & 6.82 & 1.34 & 2.17 & 1.09 & 0.37 & 1.03 \\
 & MATH500 & 15.36 & 8.67 & 9.42 & 33.24 & 7.74 & 6.67 & 4.96 & 4.59 & 0.27 & 1.77 \\
 & MMLU-Pro & 12.87 & 8.52 & 4.97 & 20.67 & 3.70 & 3.30 & 2.57 & 2.19 & 2.35 & 2.35 \\
 & CSQA & 7.86 & 6.04 & 8.94 & 26.28 & 9.31 & 3.77 & 4.93 & 2.83 & 0.29 & 2.34 \\
\midrule
\multirow{4}{*}{Falcon3-7B+Gemma2-9B} & GSM8K & 6.86 & 2.89 & 3.49 & 16.41 & 3.59 & 0.65 & 1.62 & 0.56 & 1.33 & 0.35 \\
 & MATH500 & 9.44 & 4.84 & 6.05 & 25.46 & 6.90 & 7.25 & 3.84 & 3.67 & 0.27 & 3.87 \\
 & MMLU-Pro & 11.61 & 1.50 & 1.65 & 19.72 & 1.99 & 1.84 & 1.41 & 1.53 & 4.48 & 3.35 \\
 & CSQA & 3.95 & 1.48 & 2.02 & 9.63 & 2.32 & 0.65 & 0.94 & 0.89 & 2.14 & 0.63 \\
\midrule
\multirow{4}{*}{Gemma3-12B+Llama3.2-3B} & GSM8K & 5.94 & 1.66 & 2.28 & 22.43 & 2.28 & 1.59 & 1.15 & 0.67 & 0.71 & 0.67 \\
 & MATH500 & 10.49 & 2.37 & 2.83 & 32.39 & 3.15 & 2.99 & 1.59 & 2.66 & 0.28 & 0.28 \\
 & MMLU-Pro & 12.53 & 1.85 & 2.60 & 19.88 & 2.98 & 2.41 & 1.71 & 1.19 & 4.63 & 3.73 \\
 & CSQA & 0.89 & 4.66 & 4.56 & 16.69 & 5.21 & 1.77 & 1.94 & 0.69 & 4.01 & 0.69 \\
\bottomrule
\end{tabular}}
}
\end{table}

\begin{table}[t]
\centering
\caption{Every cell behind Table~\ref{tab:heterobaselines} at the $2.0$M budget. Columns and entries
as in Table~\ref{tab:heterobaselinespercell}.}
\label{tab:heterobaselinespercell2}
\resizebox{\textwidth}{!}{\small\resizebox{\textwidth}{!}{%
\begin{tabular}{@{}llcccccccccc@{}}
\toprule
\textbf{Model pair} & \textbf{Dataset} & SH & QS & EA & $n{=}n_{\max}$ & $n{=}2$ & DM & SE & LA & TB w/o gate & \textbf{TB} \\
\midrule
\multirow{4}{*}{Qwen2.5-7B+Gemma3-12B} & GSM8K & 2.88 & 1.58 & 2.05 & 32.07 & 2.34 & 0.54 & 0.89 & 0.56 & 1.19 & 0.51 \\
 & MATH500 & 5.57 & 2.87 & 3.23 & 36.14 & 3.99 & 3.99 & 0.84 & 0.95 & 2.89 & 0.61 \\
 & MMLU-Pro & 8.15 & 11.57 & 4.28 & 28.72 & 5.65 & 4.91 & 1.82 & 2.15 & 1.06 & 1.33 \\
 & CSQA & 2.40 & 2.40 & 2.26 & 26.34 & 2.67 & 1.78 & 1.69 & 0.86 & 0.77 & 0.77 \\
\midrule
\multirow{4}{*}{Qwen2.5-7B+Llama3.2-3B} & GSM8K & 3.14 & 3.77 & 3.80 & 31.07 & 6.82 & 1.61 & 1.61 & 1.03 & 0.60 & 1.16 \\
 & MATH500 & 12.61 & 6.69 & 6.83 & 33.24 & 7.74 & 7.51 & 4.96 & 4.51 & 0.22 & 1.42 \\
 & MMLU-Pro & 7.40 & 10.67 & 5.57 & 20.67 & 3.70 & 3.45 & 1.70 & 1.51 & 3.37 & 3.37 \\
 & CSQA & 4.34 & 5.37 & 4.53 & 26.28 & 9.31 & 4.80 & 4.93 & 2.34 & 0.29 & 2.34 \\
\midrule
\multirow{4}{*}{Falcon3-7B+Gemma2-9B} & GSM8K & 1.81 & 2.87 & 3.33 & 16.41 & 3.59 & 1.06 & 1.62 & 0.44 & 0.41 & 0.41 \\
 & MATH500 & 12.32 & 5.05 & 5.54 & 25.46 & 6.90 & 6.90 & 3.84 & 3.56 & 0.57 & 3.73 \\
 & MMLU-Pro & 3.85 & 1.66 & 3.99 & 19.72 & 1.99 & 1.44 & 1.42 & 1.84 & 3.14 & 3.02 \\
 & CSQA & 1.29 & 1.59 & 1.70 & 9.63 & 2.32 & 1.27 & 0.94 & 0.49 & 2.93 & 0.63 \\
\midrule
\multirow{4}{*}{Gemma3-12B+Llama3.2-3B} & GSM8K & 3.47 & 1.77 & 2.18 & 22.43 & 2.28 & 1.12 & 1.12 & 0.67 & 0.77 & 0.67 \\
 & MATH500 & 8.69 & 2.45 & 3.15 & 32.39 & 3.15 & 3.15 & 1.59 & 2.59 & 0.28 & 0.87 \\
 & MMLU-Pro & 9.42 & 10.49 & 5.33 & 19.88 & 2.98 & 2.67 & 1.36 & 1.22 & 1.17 & 1.17 \\
 & CSQA & 0.43 & 1.51 & 2.08 & 16.69 & 5.21 & 1.43 & 1.94 & 0.77 & 3.23 & 0.69 \\
\bottomrule
\end{tabular}}
}
\end{table}

\end{document}